\theoremstyle{plain}
\theoremstyle{definition}
\theoremstyle{remark}
\newtheorem{thm}{Theorem}
\newtheorem{lem}{Lemma}
\newtheorem{defn}{Definition}
\newtheorem{cor}{Corollary}
\newtheorem{prop}{Proposition}
\newcommand{\prev}{X_{\ell-1}}
\newcommand{\curr}{X_{\ell}}
\newcommand{\currV}{V_{\ell}}
\newcommand{\prevV}{V_{\ell-1}}
\newcommand{\opt}{^*}
\newcommand{\E}{\mathbb{E}}
\newcommand{\Var}{\text{Var}}
\newcommand{\inner}[2]{\langle #1, #2 \rangle}
\renewcommand{\P}{\mathbb{P}}
\icmltitlerunning{Skip-It? Theoretical Conditions for Layer Skipping in Vision Language Models}
\begin{document}

\twocolumn[
  \icmltitle{Skip-It? Theoretical Conditions for Layer Skipping in Vision–Language Models}
  \icmlsetsymbol{equal}{*}

  \begin{icmlauthorlist}
    \icmlauthor{Max Hartman}{equal,UIUC_ECE}
    \icmlauthor{Vidhata Jayaraman}{equal,UIUC_ECE,UIUC_MATH}
    \icmlauthor{Moulik Choraria}{UIUC_ECE}
    \icmlauthor{Akhil Bhimaraju}{UIUC_ECE}
    \icmlauthor{Lav Varshney}{STONY}
  \end{icmlauthorlist}

  \icmlaffiliation{UIUC_ECE}{Department of Electrical and Computer Engineering, The University of Illinois, Champaign, IL, United States}
  \icmlaffiliation{UIUC_MATH}{Department of Mathematics, The University of Illinois, Champaign, IL, United States}
  \icmlaffiliation{STONY}{AI Innovation Institute, Stony Brook University, Stony Brook, NY, United States}

  \icmlcorrespondingauthor{Max Hartman}{maxh3@illinois.edu}
  \icmlcorrespondingauthor{Vidhata Jayaraman}{vidhata2@illinois.edu}

  \icmlkeywords{Machine Learning, ICML}

  \vskip 0.3in
]



\printAffiliationsAndNotice{\icmlEqualContribution}

\begin{abstract}
  Vision–language models achieve incredible performance across a wide range of tasks, but their large size makes inference costly. Recent work has shown that multimodal processing contains significant redundancies, making it possible to skip certain layers with minimal performance loss. Yet current pruning techniques remain ad-hoc, relying on heuristics or hyperparameter sweeps rather than principled criteria for determining when layer skipping is beneficial. In this paper, we propose a unified framework that characterizes the redundancy conditions under which pruning can enhance efficiency without sacrificing performance. Central to our approach are experimentally verifiable and interpretable notions of redundancy that can be evaluated without requiring downstream task performance as a metric. Applying this framework, we corroborate prior findings that both early and late vision tokens are redundant across models, and we validate our conditions by showing they align with actual performance degradation. Beyond these empirical results, our framework provides a theoretically grounded understanding of redundancy in VLMs and unifies many of the ideas behind modern layer-skipping techniques.

\end{abstract}

\section{Introduction}

Vision-language models (VLMs) such as BLIP \citep{LiLXH2022}, LLaVA \citep{LiuLWL2023, LiuLLLZSL2024}, and more recently, Qwen \citep{BaiBYWTWLZZ2023, YangYZHZYetal2025}, Deepseek-VL \citep{LuLZWDLSRLetal2024}, and Gemma \citep{GEMMA2025} have become popular due to their impressive performance on a wide range of vision-language tasks. Nevertheless, their escalating training and inference costs \citep{JinLLGWetal2024} creates barriers to widespread adoption. This computational burden stems from how these models typically build upon large language model (LLM) backbones, requiring the processing of extensive sequences of vision tokens alongside text, particularly when handling high-resolution images.

Consequently, techniques to improve VLM efficiency while maintaining performance have become an active area of research. Many approaches build upon efficiency methods from large language models (LLMs) and can be broadly classified as training-time or inference-time improvements. Training time methods include parameter-efficient approaches such as LoRA \citep{HuSWALetal2022, DettmersPHZ2023, BidermanPOPGJKH2024} and and architectural modifications like Mixture of Experts (MoE) \citep{ArtetxeHGMOSetal2022, BaoWDLMASW2022, LinTYHZPJNLY2024}. Two common Inference time methods include token compression and layer skipping which will be the main topics explored in this work. Token compression techniques \citep{ChengZLBLZC2024, VasuFLKetal2025, LiuZSZL2025} reduce redundancy in vision representations, while layer skipping, initially developed for efficient LLM inference \citep{ElhoushiSLHW_etal_2024}, bypasses unnecessary computation during forward passes. Recent VLM adaptations of the latter include AdaSkip \citep{HeYZGLZW2025}, FlexiDepth \citep{LuoWY2025}, and retrained models like DeepInsert \citep{ChorariaWBSWZSV2025}, MoLe-VLA \citep{ZhangDZHCetal2025}, and $\gamma$-MoD \citep{LuoLJZSSJ2025}. Despite their empirical success, these methods lack a principled basis for determining which layers to skip.

Overall, our contributions are as follows:

\begin{enumerate}
    \item We propose a learning- and information- theoretic framework for characterizing redundancy in VLMs. Our framework provides easy-to-compute conditions for when pruning can enhance efficiency without performance degradation and unifies several inference-time layer skipping techniques. 
    \item We use our theory to empirically determine where redundancy exists in VLMs. We find large redundancy in the early and late vision tokens across models which validates existing pruning methods.
    \item We then demonstrate that our redundancy conditions can predict when layer skipping can be applied with minimal performance degradation. To make this framework operational, we introduce a label-free algorithm that applies vision-token layer skipping with a small subset of data, bridging the gap between theoretical redundancy and practical deployment.
\end{enumerate}
By providing rigorous, measurable notions of redundancy, we hope to inspire future work on token and layer reduction methods.
\section{Related Work}
\subsection{Vision Language Models}

VLMs feed vision and textual tokens into an autoregressive LLM. BLIP models \citep{LiLXH2022, LiLSH2023} were some of the first to introduce vision-language pretraining, with the latter using a Q-former to connect frozen image encoders with LLMs. Flamingo introduced the ability to ingest mixed insertions of images, videos, and text \citep{AlayracDLMBetal2022}. LLaVA introduced a multimodal instruction-following model. LLaVA-NeXT built on LLaVA with improved reasoning and performance \citep{LiuLLLZSL2024}. More recently, Molmo improved capabilities in pointing and counting tasks \citep{DeitkeCLTY2025}. Llama-4 introduced a mixture-of-experts VLM with a massive context length. Specifically, Llama-4 Scout has a token context length of 10M \citep{meta2025llama4}. In most models, the vision tokens are obtained from a pretrained vision encoder, such as CLIP \citep{RadfordJHRetal2021} or SigLIP \citep{ZhaiMKB2023}. Some research questions about these models include inference-time inefficiency, multimodal alignment, and vision interpretability.

\subsection{Layer Skipping and Pruning}
Redundancies in tokens and layers have motivated efficiency techniques in both LLMs and VLMs. In LLMs, \citet{ShukorC2024} introduced a method for skipping certain computations (whether entire blocks, specific feed-forward networks or self-attention layers), AdaSkip introduces sublayer-wise skipping for long-context inference \citep{HeYZGLZW2025}, while FlexiDepth enables adaptive layer skipping \citep{LuoWY2025}. Both exploit redundancy to reduce computation while maintaining performance. Multimodal models have adopted similar strategies: $\gamma$-MoD converts dense layers into sparse Mixture-of-Depth layers \citep{LuoLJZSSJ2025}, MoLe-VLA applies layer skipping to robot manipulation \citep{ZhangDZHCetal2025}, Skip-Vision \citep{ZengHJY2025} skips certain feed-forward networks during training and prunes certain key-value pairs during inference, and DeepInsert injects vision tokens at later layers to reduce early-layer overhead \citep{ChorariaWBSWZSV2025}. Parallel work focuses on token reduction through multimodal skipping or early exit. FastV prunes visual tokens by learning attention patterns \citep{ChenZZHS2024}, Visual Token Withdrawal (VTW) removes visual tokens after certain layers \citep{LinLLJ2025}, PruMerge leverages visual encoder sparsity to discard tokens \citep{ShangCXLY2025}, and ST$^3$ \citep{ZhuangLDHCLH2024} prunes redundant vision tokens across layers and dynamically reduces the number of vision tokens across layers \cite{ZhuangLDHCLH2024}. Our framework specifically aims to unify these multimodal layer skipping techniques but can possibly be extended to unify inference-time pruning techniques in general (see Section \ref{sec:pruning}).


\section{Framework for Measuring Redundancy}
\subsection{Definitions of Redundancy}
Intuitively, a neural network component is a candidate for pruning when it becomes redundant. A layer is redundant if it acts as an identity function, whereas tokens are redundant if their representations encode duplicate information. In practice, this redundancy is often detected by measuring the cosine distance between activations.

To provide a unified framework for analyzing both layer and token pruning, we abstract these activations as random variables (RVs). With this formalization, we can rigorously capture the redundancy between two components using the expected cosine distance between their corresponding RVs. We refer to this metric as geometric redundancy.
\begin{defn}[Geometric redundancy]\label{def:geocopy}
    Let $\rho: \mathcal{X}\times \mathcal{X} \to [0,\infty)$ be a symmetric function (e.g. cosine distance or a metric). Then given random variables $\prev, \curr$ and $\epsilon > 0$, geometric $\epsilon$-redundancy (or geometric redundancy) is $\E[\rho(\prev, \curr)] < \epsilon$.
\end{defn}
We similarly define proximal redundancy as being when two RVs are close together with high probability. 
\begin{defn}[Proximal Redundancy]\label{def:proxcopy}
    Let $\rho: \mathcal{X}\times \mathcal{X} \to [0,\infty)$ be a symmetric function (e.g. cosine distance or a metric), $t$ be some threshold value, and $\epsilon>0$. Then random variables $\prev, \curr$ are $t$-proximal with probability $1-\epsilon$ (or proximally redundant) if $\P[\rho(\curr, \prev) < t] \geq 1- \epsilon$.
\end{defn}

These measures capture how much the layer/token representation has changed. However, a small change in representation does not guarantee a small change in downstream performance or a large statistical connection between the two RVs. To address this, we introduce operational notions of redundancy that measure how much the output is affected. Specifically, we define functional redundancy (does downstream performance change?) and informational redundancy (is new information added?).

\begin{defn}[Functional redundancy]\label{def:funcopy}
    Given a task variable $Z$, two random variables $\prev, \curr$, and $\epsilon > 0$, functional $\epsilon$-redundancy (functional redundancy) is $\E[\|\E[Z|\curr] - \E[Z|\prev]\|_2^2] < \epsilon$.
\end{defn}
\begin{defn}[Informational redundancy]\label{def:infocopy}
    Given random variables $\prev, \curr$ and $\epsilon > 0$, informational $\epsilon$-redundancy (informational redundancy) is $H(\curr | \prev) < \epsilon$ where $H(\cdot | \cdot)$ is conditional entropy.
\end{defn}
In this framework, functional redundancy ensures that the layer/token's contribution is negligible for the specific task $Z$ (e.g., the classification logits remain stable). Informational redundancy requires that $\curr$ is almost deterministic given $\prev$, meaning the layer/token adds no new information. 
The remainder of this section connects these four notions. Specifically, we show that geometric and proximal redundancy—while less interpretable—imply these more operational forms under natural assumptions.
\subsection{Functional Redundancy}
We begin by analyzing functional redundancy, which measures the difference between optimal estimators on a task $Z$.
\begin{thm}\label{thm:copying}
     Let $\curr, \prev$ be unit-norm random variables and $Z$ be the random variable of predictive interest (e.g. normalized hidden representations of layers $\ell, \ell-1$ and the task ground truth respectively). Let $\rho(x,y) = 1-\frac{\inner{x}{y}}{\|x\|\|y\|}$. Assume $\E[\rho(X,Y)] < \frac{\epsilon}{2}$ and that  
    $$
        h(x,y)= E[Z|X_\ell=x, X_{\ell-1}=y]
    $$
    is $\alpha$-Lipschitz in the first argument and $\beta$-Lipschitz in the second. Then $E[\|E[Z|X_\ell] - E[Z|X_{\ell-1}]\|_2^2] < 2(\alpha^2 + \beta^2)\epsilon$.
\end{thm}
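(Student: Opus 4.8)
The plan is to first translate the cosine-distance hypothesis into a mean-squared-displacement bound, and then route the comparison between $\E[Z\mid\curr]$ and $\E[Z\mid\prev]$ through the \emph{joint} conditional expectation $h(\curr,\prev)=\E[Z\mid\curr,\prev]$, which will serve as a common pivot. For the first step, since $\curr$ and $\prev$ are unit-norm we have $\rho(\curr,\prev)=1-\inner{\curr}{\prev}=\tfrac12\|\curr-\prev\|_2^2$, so the assumption $\E[\rho(\curr,\prev)]<\epsilon/2$ is exactly $\E[\|\curr-\prev\|_2^2]<\epsilon$. That is the only property of the geometry I will use.

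Next, by the tower property $\E[Z\mid\curr]=\E[h(\curr,\prev)\mid\curr]$ and $\E[Z\mid\prev]=\E[h(\curr,\prev)\mid\prev]$, so
$$\E[Z\mid\curr]-\E[Z\mid\prev]=\big(\E[Z\mid\curr]-h(\curr,\prev)\big)-\big(\E[Z\mid\prev]-h(\curr,\prev)\big),$$
and after the elementary inequality $\|a-b\|_2^2\le 2\|a\|_2^2+2\|b\|_2^2$ and taking expectations, it suffices to control the two terms separately. Here I would invoke the variational characterization of conditional expectation: $\E[Z\mid\curr]$ is the $L^2$ projection of the (bounded, hence square-integrable) vector $h(\curr,\prev)$ onto $\sigma(\curr)$-measurable maps, and $h(\curr,\curr)$ is one such map, so
$$\E\big\|\E[Z\mid\curr]-h(\curr,\prev)\big\|_2^2\le \E\big\|h(\curr,\curr)-h(\curr,\prev)\big\|_2^2\le \beta^2\,\E\|\curr-\prev\|_2^2<\beta^2\epsilon$$
by $\beta$-Lipschitzness in the second argument. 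Symmetrically, testing against $h(\prev,\prev)$ and using $\alpha$-Lipschitzness in the first argument bounds the other term by $\alpha^2\epsilon$. Adding the two contributions with the factor $2$ from the elementary inequality yields $\E\|\E[Z\mid\curr]-\E[Z\mid\prev]\|_2^2<2(\alpha^2+\beta^2)\epsilon$, as claimed.

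The step I expect to be the main obstacle is choosing the right pivot. One cannot usefully compare $\E[Z\mid\curr]$ and $\E[Z\mid\prev]$ directly, since they are defined on incomparable $\sigma$-algebras, and the residuals $\E[Z\mid\curr]-h(\curr,\prev)$ are not themselves Lipschitz functions of anything; the key realization is that $L^2$-optimality of conditional expectation lets us upper bound such a residual by the \emph{on-diagonal} difference $h(\curr,\curr)-h(\curr,\prev)$, which the Lipschitz hypotheses do control. The only technical caveats are that $h$ should be taken as an honest Lipschitz representative extended to all of $\mathcal{X}\times\mathcal{X}$ (so that $h(\curr,\curr)$ is meaningful even off the support of $(\curr,\prev)$), and that square-integrability of every quantity appearing is automatic because the inputs are unit-norm and a Lipschitz $h$ is bounded on the unit ball.
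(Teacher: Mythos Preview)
Your proof is correct and follows essentially the same route as the paper: convert the cosine-distance hypothesis to $\E\|\curr-\prev\|_2^2<\epsilon$, pivot both conditional expectations through $h(\curr,\prev)$ via the tower property, apply $\|a-b\|^2\le 2\|a\|^2+2\|b\|^2$, and bound each residual using the $L^2$-optimality of conditional expectation together with the Lipschitz assumption. The only difference is cosmetic: you test directly against the diagonal values $h(\curr,\curr)$ and $h(\prev,\prev)$, whereas the paper first tests against $h(\curr,\E[\prev\mid\curr])$ and then invokes MMSE optimality a second time (replacing $\E[\prev\mid\curr]$ by $\curr$) to reach the same bound---your version simply collapses those two steps into one.
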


\begin{proof}
For the proof, see Appendix \ref{proof:theorems}. 
\end{proof}
Theorem \ref{thm:copying} establishes that under some regularity assumptions, the commonly used geometric redundancy can imply functional redundancy. In other words, a minimal average cosine distance between layer/token representations can guarantee similar performance on a downstream task. However, in practice, we can never achieve these optimal estimators, so Theorem \ref{thm:empcopy} bounds empirical estimates of these estimators. 
\begin{thm}\label{thm:empcopy}
    Let $\curr, \prev$ be unit-norm random variables and $Z$ be the random variable of predictive interest, as before. Let $\rho(x,y) = 1-\frac{\inner{x}{y}}{\|x\|\|y\|}$ . Assume $\E[\rho(X,Y)] < \frac{\epsilon}{2}$ and that  
    $$
        h(x,y)= E[Z|X_\ell=x, X_{\ell-1}=y]
    $$
    is $\alpha$-Lipschitz in the first argument and $\beta$-Lipschitz in the second. Let $\hat{f}_\ell$ be a finite-sample estimate of $f\opt_\ell(x) = \E[Z|X_\ell=x]$ and $\hat{f}_{\ell-1}(x)$ be a finite-sample estimate of $f\opt_{\ell-1}(x) = \E[Z|X_{\ell-1} = x]$. Further let, $\eta_\ell= \E[\|\hat{f}_\ell(X_\ell) - f\opt_\ell(X_\ell)\|^2]$ and $\eta_{\ell-1} = \E[\|\hat{f}_{\ell-1}(\prev) - f\opt_{\ell-1}(\prev)\|^2]$. Then: $$\E[\|\hat{f}_\ell(\curr) - \hat{f}_{\ell-1}(\prev)\|^2] < 3\eta_\ell + 3\eta_{\ell-1} + 6(\alpha^2 + \beta^2)\epsilon.$$
\end{thm}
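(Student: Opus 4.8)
The plan is to reduce everything to Theorem \ref{thm:copying} by a single application of the triangle inequality in $L^2$. First I would insert the two optimal estimators as intermediate terms, writing the difference as a telescoping sum of three pieces:
$$\hat{f}_\ell(\curr) - \hat{f}_{\ell-1}(\prev) = \bigl(\hat{f}_\ell(\curr) - f\opt_\ell(\curr)\bigr) + \bigl(f\opt_\ell(\curr) - f\opt_{\ell-1}(\prev)\bigr) + \bigl(f\opt_{\ell-1}(\prev) - \hat{f}_{\ell-1}(\prev)\bigr),$$
namely the estimation error at layer $\ell$, the population gap between the two conditional means, and the estimation error at layer $\ell-1$.

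Next I would apply the elementary bound $\|a+b+c\|_2^2 \le 3(\|a\|_2^2 + \|b\|_2^2 + \|c\|_2^2)$, which is a consequence of Cauchy--Schwarz (equivalently, convexity of $\|\cdot\|_2^2$), and then take expectations over the joint law of $(\curr,\prev)$. By the definitions of $\eta_\ell$ and $\eta_{\ell-1}$, the first and third terms contribute exactly $3\eta_\ell$ and $3\eta_{\ell-1}$ respectively; here one only needs to note that the expectation of each estimation-error term depends on the joint law through the corresponding marginal, so the quantities match the stated definitions.

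For the middle term, observe that $f\opt_\ell(\curr) = \E[Z\mid \curr]$ and $f\opt_{\ell-1}(\prev) = \E[Z\mid \prev]$, so $\E[\|f\opt_\ell(\curr) - f\opt_{\ell-1}(\prev)\|_2^2] = \E[\|\E[Z\mid \curr] - \E[Z\mid \prev]\|_2^2]$. Since $\curr,\prev$ are unit-norm, $h(x,y) = \E[Z\mid X_\ell=x, X_{\ell-1}=y]$ is $\alpha$-Lipschitz in the first argument and $\beta$-Lipschitz in the second, and $\E[\rho(X,Y)] < \tfrac{\epsilon}{2}$, the hypotheses of Theorem \ref{thm:copying} are met verbatim, yielding $\E[\|\E[Z\mid \curr] - \E[Z\mid \prev]\|_2^2] < 2(\alpha^2+\beta^2)\epsilon$. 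Multiplying by $3$ and summing the three bounds gives $\E[\|\hat{f}_\ell(\curr) - \hat{f}_{\ell-1}(\prev)\|^2] < 3\eta_\ell + 3\eta_{\ell-1} + 6(\alpha^2+\beta^2)\epsilon$, as claimed.

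The argument is essentially mechanical once the decomposition is in place; there is no genuinely hard step, since all the analytic content is inherited from Theorem \ref{thm:copying}. The only points requiring a little care are using the sharp constant $3$ in the three-term inequality (rather than a looser bound) and confirming that the expectations defining $\eta_\ell$ and $\eta_{\ell-1}$ are consistent with the joint expectation in the target quantity, which holds because marginalizing the joint law over one coordinate recovers each marginal.
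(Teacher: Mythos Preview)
Your proposal is correct and matches the paper's proof essentially step for step: the same three-term telescoping decomposition, the same inequality $\|a+b+c\|^2 \le 3(\|a\|^2+\|b\|^2+\|c\|^2)$ (the paper isolates this as a separate lemma), and the same appeal to Theorem~\ref{thm:copying} for the middle term.
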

\begin{proof}
    We prove this by again converting from cosine distance to mean squared error and then rewriting $\hat{f}_\ell(\curr) - \hat{f}_{\ell-1}(\prev)$ as $(\hat{f}_\ell(\curr) - f\opt_\ell(\curr)) + (f\opt_{\ell-1}(\prev) - \hat{f}_{\ell-1}(\prev)) + (f\opt_\ell(\curr) - f\opt_{\ell-1}(\prev))$. We then upper bound $\hat{f}_\ell(X_\ell) - \hat{f}_{\ell-1}(X_{\ell-1})$ using this new form by invoking Theorem \ref{thm:copying} along with the definitions of $\eta_\ell$ and $\eta_{\ell-1}$. For more details, refer to Appendix \ref{proof:theorems}
\end{proof}

Thus, under the same regularity assumptions, we obtain guarantees for empirical estimators that mirror those for optimal ones. Importantly, the bound still decays linearly in $\epsilon$, showing that empirical functional redundancy inherits the same behavior. Previous layer skipping work has employed similar Lipschitz assumptions on self-attention and feed-forward networks, notably \citet{ZengHJY2025}. Refer to Appendix \ref{sec:lipschitz} for a brief study on the Lipschitz constants observed in practice.

\subsection{Informational Redundancy}
We now turn to informational redundancy, which asks whether one RV can be (nearly) determined from a second RV. This is formalized via the conditional entropy $H(\curr | \prev)$. Specifically, we show that if two random variables $\curr$ and $\prev$ are proximally redundant, then they are also informationally redundant. In the context of layer skipping, this redundancy notion is not very meaningful because the current layer is a deterministic function of the previous. However, for token pruning, the predictability of one token given another can still provide a useful interpretation of the redundancy. These results rely heavily on the distance-based Fano's inequality \citep{DuchiW2013} and its generalizations \cite{BraunP2015}.

We first get an upper bound on $H(\curr | \prev)$ from \citet{BraunP2015}. If this upper bound is sufficiently low, this shows a high statistical correlation between $\curr$ and $\prev$. We then use a complementary lower bound from \citet{BraunP2015} and \citet{DuchiW2013} on the mutual information $I(\curr; \prev)$. If this lower bound is sufficiently large, we can then show that there is a large amount of information shared between the two RVs, thereby also implying a large amount of redundancy. For more details on the specifics of these bounds, refer to Theorems \ref{thm:fano}, \ref{thm:infolow}, and Corollary \ref{cor:infored} in Appendix \ref{proof:theorems}.

There is substantial prior work, such as \cite{XuZSSE2020, DissanayakeHHSZD2025}, which has shown that concepts like usable information and unique/redundant information are useful for analyzing machine learning paradigms. See Appendix~\ref{app:pid} for further discussion of the connection to partial information decomposition (PID).
\subsection{Relating all notions of redundancy}
We now give some final results to connect all notions of redundancy.

\begin{prop}\label{prop:probgeo}
    Let $\rho: \mathcal{X} \times \mathcal{X} \to \mathbb{R}$ be a symmetric function with $0\leq \rho \leq 1$. Then $$\P[\rho(X,Y)>t] < \frac{\epsilon - t}{1-t}\; \textrm{ implies }\; \E[\rho(X,Y)] < \epsilon,$$
\end{prop}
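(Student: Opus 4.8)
The plan is to control $\E[\rho(X,Y)]$ by a two-piece bound, splitting the probability space according to whether $\rho(X,Y)$ exceeds the threshold $t$. Write $W=\rho(X,Y)$; by hypothesis $0\le W\le 1$, and recall the notation $P_t=\P[W>t]$ introduced just above this proposition.

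First I would decompose the expectation into the contribution from the event $\{W\le t\}$ and that from $\{W>t\}$. On the event $\{W\le t\}$ the integrand is at most $t$, so this piece contributes at most $t\,\P[W\le t]=t(1-P_t)$. On the event $\{W>t\}$ I use only the crude bound $W\le 1$ (this is where $\rho\le 1$ enters), so this piece contributes at most $\P[W>t]=P_t$. Summing the two pieces gives
\[
\E[W]\;\le\; t(1-P_t)+P_t \;=\; t+(1-t)P_t .
\]

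Then I would substitute the hypothesis $P_t<\tfrac{\epsilon-t}{1-t}$. This presumes $t<1$ (otherwise the fraction is undefined and the statement is degenerate) and, for the hypothesis to be non-vacuous, $\epsilon>t$ — when $\epsilon\le t$ the right-hand side is nonpositive, so the hypothesis cannot hold and the implication is trivially true. Since $1-t>0$, multiplying the hypothesis through by $1-t$ and then adding $t$ yields $t+(1-t)P_t<\epsilon$, which combined with the display above gives $\E[W]<\epsilon$, the desired conclusion.

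There is no substantive obstacle: the argument is just the standard fact that the mean of a $[0,1]$-valued random variable is small whenever its tail mass above $t$ is small. The only points requiring care are the sign of $1-t$ when clearing the denominator and the degenerate edge cases $t\ge 1$ and $\epsilon\le t$, both handled as noted above.
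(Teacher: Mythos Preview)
Your proof is correct and essentially the same as the paper's: both split at the threshold $t$ and arrive at the bound $\E[W]\le t+(1-t)P_t$ before substituting the hypothesis. The only cosmetic difference is that the paper reaches this bound via the tail-integration formula $\E[D]=\int_0^1 \P[D>s]\,ds$ (splitting the integral at $t$), whereas you decompose the expectation directly over the events $\{W\le t\}$ and $\{W>t\}$; the two are equivalent here.
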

\begin{proof}
    Apply the tail integration formula and split the integral into a part from 0 to $t$ and another part from $t$ to 1. 
    Refer to Appendix \ref{proof:theorems} for more details.
\end{proof}
Thus, under natural assumptions, proximal redundancy implies geometric redundancy. In other words, if two RVs (e.g., layers/tokens) are close together with high probability, then they are close together on average.

Theorem \ref{thm:1_to_5} shows that under some additional natural Markov and boundedness assumptions, informational redundancy implies functional redundancy. In particular, since the hidden states are taken after adding the residual, this Markov assumptions holds.

\setcounter{thm}{4}
\begin{thm}\label{thm:1_to_5}
    Suppose there are random variables $Z,\curr, \prev$ with $Z \in \mathbb{R}^d$ and $\curr,\prev$ being continuous unit-norm random variables. Further suppose $\|Z\|_2 \leq B$ almost surely and that $X_{\ell-1}$--- $X_\ell$--- $Z$ is a Markov chain. Then $\E[\|\E[Z|\curr] - \E[Z|\prev]\|_2^2] \leq 2B^2 I(Z; \curr | \prev)$. \\
    If, in addition, there exists finite $C$ such that $H(\curr | Z, \prev) \geq -C$ then $\E[\|\E[Z|\curr] - \E[Z|\prev]\|_2^2] \leq 2B^2(H(\curr | \prev) + C)$. In particular, if $\curr$ is discrete then $C = 0$ and if $p_{\curr | Z, \prev}(x) \leq M$ for all $x$ then $C = \log M$.
\end{thm}
\begin{proof}
    We prove this first by expressing $\E[Z | \curr = a] -\E[Z | \prev= b] = \int_{\mathbb{R}^d} z(p_{Z|\curr=a}(z) - p_{Z|\prev=b}(z))dz$. We can then bound $\|\E[Z|\curr = a] - \E[Z|\prev = b]\|$ using the triangle inequality. We then use Pinsker's inequality to bound $\int_{\mathbb{R}^d} |p_{Z|\curr = a}(z) - p_{Z|\prev=b}(z)|dz$ using KL divergence. Since $\prev$---$\curr$---$Z$ is a Markov chain, we can convert this into an upper bound using conditional entropy (Lemma~\ref{lem:kl_to_MI}). For more details, refer to Appendix \ref{proof:theorems}.
\end{proof}
Since differential entropy can be negative, one cannot get a direct bound between functional redundancy and informational redundancy in general. But, by considering RVs to be discrete (say, due to quantized activations after compression or finite computer memory), then informational redundancy directly implies functional redundancy. Thus, informational redundancy can be viewed as a more fundamental type of redundancy than functional redundancy. Intuitively, if one layer/token representation is very predictable from another, then their optimal performance on a downstream task will be comparable. Furthermore, the connection to PID gives an interesting way to interpret this bound: under certain conditions, the average difference in performance between optimal MSE estimators based on random variables $X$ and $Y$ is upper bounded by unique information about $X$ that only $X$ has. More details on PID are in Appendix \ref{app:pid}. 

\begin{tcolorbox}[colback=white, colframe=black, title=Key Insight \#1, sharp corners]
Our theory formally establishes that empirical notions of redundancy imply more meaningful notions of redundancy. Specifically, closeness of layer/token representations ensures functional and information theoretic guarantees on downstream usage.
\end{tcolorbox}

\begin{figure}[h]
\centering
\begin{tikzpicture}[node distance=3cm, thick, >=stealth]

\node[draw, rounded corners, minimum width=2.5cm, minimum height=1cm, align=center] 
    (prox) {Proximal\\Redundancy\\(Def.~\ref{def:proxcopy})};

\node[draw, rounded corners, minimum width=2.5cm, minimum height=1cm, align=center, below left of= prox] 
    (geo) {Geometric\\Redundancy\\(Def.~\ref{def:geocopy})};

\node[draw, rounded corners, minimum width=2.5cm, minimum height=1cm, align=center, below right of= prox] 
    (info) {Informational\\Redundancy\\(Def.~\ref{def:infocopy})};

\node[draw, rounded corners, minimum width=2.5cm, minimum height=1cm, align=center, below of= prox, yshift=-1.5cm] 
    (func) {Functional\\Redundancy\\(Def.~\ref{def:funcopy})};

\draw[->, very thick] (prox.south west) -- node[midway, above left]{Prop.~\ref{prop:probgeo}} (geo.north);
\draw[->, very thick] (prox.south east) -- node[midway, above right]{Thm.~\ref{thm:fano}, \ref{thm:infolow}} (info.north);

\draw[->, very thick] (geo.south) -- node[midway, below left] {Thm.~\ref{thm:copying}, \ref{thm:empcopy}} (func.north);
\draw[->, very thick] (info.south) -- node[midway, below right] {Thm.~\ref{thm:1_to_5}} (func.north);

\end{tikzpicture}
\caption{Implication relationships among different notions of redundancy.}
\label{fig:copying_relations}
\end{figure}

\section{Unification of Layer Skipping}
\label{sec:framework}
In this section, we generalize layer skipping methods into Early Exit and Late Entry, arguing that a number of existing layer skipping methods are subsumed by this formulation. We experimentally find when layer skipping is viable using our redundancy framework, and then validate these predictions by showing that when the conditions are met, model performance degrades minimally (and vice versa). Lastly, we provide a generalization bound and experimental validation to show that in practice, a small subset of samples from the dataset is enough to determine the viability of skipping, without access to ground truth labels.
\subsection{Defining Layer Skipping} 

To formally define layer skipping, consider a VLM with $n$ layers, $\phi_{\theta}^n$. Let $X = \begin{pmatrix} X^{(text)}, X^{(vis)} \end{pmatrix}$ be the input to the VLM. We consider late entry, say of the vision tokens, in the first $\ell$ layers to be $\phi_{\theta}^{n-\ell}(\begin{pmatrix}
    \phi_\theta^{\ell}(X^{(text)}), X^{(vis)}
\end{pmatrix})$, and early exit to be $\begin{pmatrix}
    \phi_\theta^{\ell} \left( \left[ \phi_{\theta}^{n-\ell}(X) \right]^{(text)} \right),
    \left[ \phi_{\theta}^{n-\ell}(X) \right]^{(vis)}
\end{pmatrix}$, where the vision tokens skip the last $\ell$ layers. A visual depiction can be seen in Figure \ref{fig:layerskip}. 
\begin{figure}
    \centering
    \includegraphics[trim={0 0 0 3cm},clip,width=\linewidth]{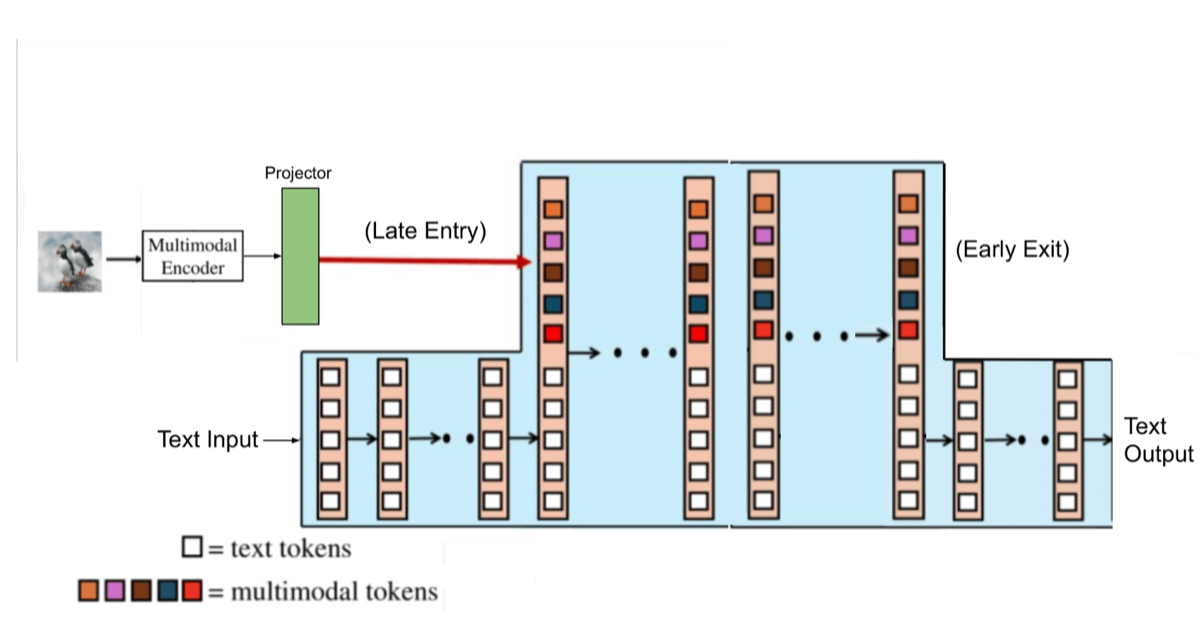}
    \caption{Early Exit and Late Entry. Specifically, the visual tokens are not passed into the first few layers but instead, directly inserted along with the prompt to the chosen layer for insertion or the vision tokens are removed from the forward pass after a certain layer.}
    \label{fig:layerskip}
\end{figure}

We motivate the possibility of late entry layer skipping through the following intuition: given an insertion layer $L$, if the vision activations $V_l,\, l< L$, undergo minimal change (under some suitable metric) and the effect of vision tokens on text representations (via attention) is minimal,
then the output with late entry of vision will be close to the true original output at layer $L$. In addition to this result, we also have that if the vision activations in the final layers undergo minimal change and the effect of vision tokens on text representations is minimal in the final layers, then one can perform early exit. We formally state and prove these as Theorem \ref{thm:skipping} and Lemma \ref{lem:vision_skipping} in Appendix \ref{proof:theorems}.

Note that redundancy is clearly a necessary condition for late entry. If the visual representations have evolved too much, then simple insertion of the original representations at layer $L$ may not necessarily produce quality outputs. Since in the remaining layers, visual tokens can still attend to textual tokens, having minimal cross attention does not clearly indicate a condition to do late entry layer skipping. 

However, for early exit, minimal cross attention from vision to text is sufficient because the output modality is text. If the vision tokens are minimally attending to the text, these tokens can be skipped because they are minimally changing the output tokens. Therefore, the vision tokens can be skipped with minimal output degradation. This is quantified using the visual attention ratio (VAR) \cite{JiangCZLSY2025}.


Many existing techniques can be characterized as either Early Exit or Late Entry methods. Early Exit includes (but is not limited to) VTW \cite{LinLLJ2025}, which uses a a KL divergence criterion to determine when to exit vision tokens, and LayerSkip \cite{ELHOUSHISLH2024}, which proposes an LLM training method allowing for early exit. Late Entry includes DeepInsert \cite{ChorariaWBSWZSV2025}, which withholds vision tokens until a set layer, and similarity-aware token pruning (SAINT) \cite{JeddiBDT2025}, which leverages token similarities to prune tokens from early layers. 

One commonality across all of them is some heuristic measure of ``redundancy" to determine which layers are viable for skipping. However, what is missing is: 1) a principled justification for how this measure equates to operational redundancy and 2) a systematic way to exploit this redundancy, that crucially, can generalize across different models and datasets without needing ground truth labels. The framework proposed here provides a rigorous justification for using a heuristic (such as cosine distance) for redundancy and then further justifies the use of layer skipping from this redundancy.

\label{sec:val_func_and_info}

\subsection{Empirical Viability of Early Exit and Late Entry Layer Skipping}
\label{sec:val_conditions}

As described in the previous section, for late entry techniques to be viable, we need geometric and proximal redundancy, and for early exit techniques to be viable, minimal cross attention from vision to text is sufficient. In this section, we determine when these conditions are met empirically, across different models, datasets, and tasks. 

\subsubsection{Experimental Setup}
We consider the hidden states of each token across each layer and compute both the average cosine distance and the probability of a small average cosine distance between adjacent hidden states. We separate vision and textual tokens so that we can evaluate their differences. Formally, let $H_T$ and $H_V$ denote the sets of textual and visual hidden states for all layers. For token $i$ at layer $\ell$, let $h_{\ell,i}$ denote its hidden state. We define the average cosine distance between adjacent layers as
\begin{align}
    \mathcal{D}_{\ell}^{(M)} := \frac{1}{N_M} \sum_{i=1}^{N_M} \rho\!\left(h_{\ell,i}^{M}, h_{\ell-1,i}^{M}\right),
\end{align}
and the probability of adjacent hidden states being close as
\begin{align}
    p_{\ell}(t; H_M) := \frac{1}{N_M}\sum_{i=1}^{N_M} \mathbbm{1} \{\rho\!\left(h_{\ell,i}^{M}, h_{\ell-1,i}^{M}\right)<t\},
\end{align}

where $M\in\{V, T\}$ refers to the modality (vision or text), $t$ is a user-specified threshold, $N_T = |H_T^{(\ell)}|$ and $N_V = |H_V^{(\ell)}|$ are the number of textual and visual tokens, respectively, $h_{\ell, i}^T,h_{\ell-1, i}^T\in H_{T}$, $h_{\ell, i}^V,h_{\ell-1, i}^V\in H_{V}$, and $\rho(\cdot, \cdot)$ denotes cosine distance. This is done for $\ell \in[1, N]$, where $N$ is the number of layers in the specific model.

\subsubsection{Models \& Datasets}
\label{sec:viability_exp}
For our experiments, we use LLaVA 1.5 (7B and 13B) \citep{LiuLWL2023} and LLaVA NeXT (1.6) \citep{LiuLLLZSL2024} as VLMs. Additional results using DeepSeek-VL (7B base) \citep{LuLZWDLSRLetal2024} and Qwen 2.5 VL \citep{YangYZHZYetal2025} are in Appendix \ref{sec:further_hidden_state}.

For our experiments, we consider both multiple choice and free response datasets, spanning a diverse set of vision-language tasks including general question answering (GQA \citep{HM2019}, VQA \citep{AgrawalLAMZBP2015}, Visual7W \citep{ZhuGBL}), text, OCR, and document-based (AI2D \citep{KembhaviSKSHF2016}, OCRBench \citep{LiuLHYYLYLJX2024}, TextVQA \citep{SinghNSJCetal2019}), and multimodal reasoning (MMMU \citep{YueNZZLetal2024}, RealWorldQA \citep{xai2024}, MMStar \citep{ChenLDZZetal2024}, MathVision \citep{WangPSLRetal2024}). Further details on dataset and evaluation protocols are in Appendix \ref{sec:datasets}.

\subsubsection{Geometric and Proximal Redundancy Analysis}
\label{sec:results}

\begin{figure*}[t!]
    \centering
    \begin{subfigure}[b]{0.35\textwidth}
        \centering
        \includegraphics[width=\textwidth]{./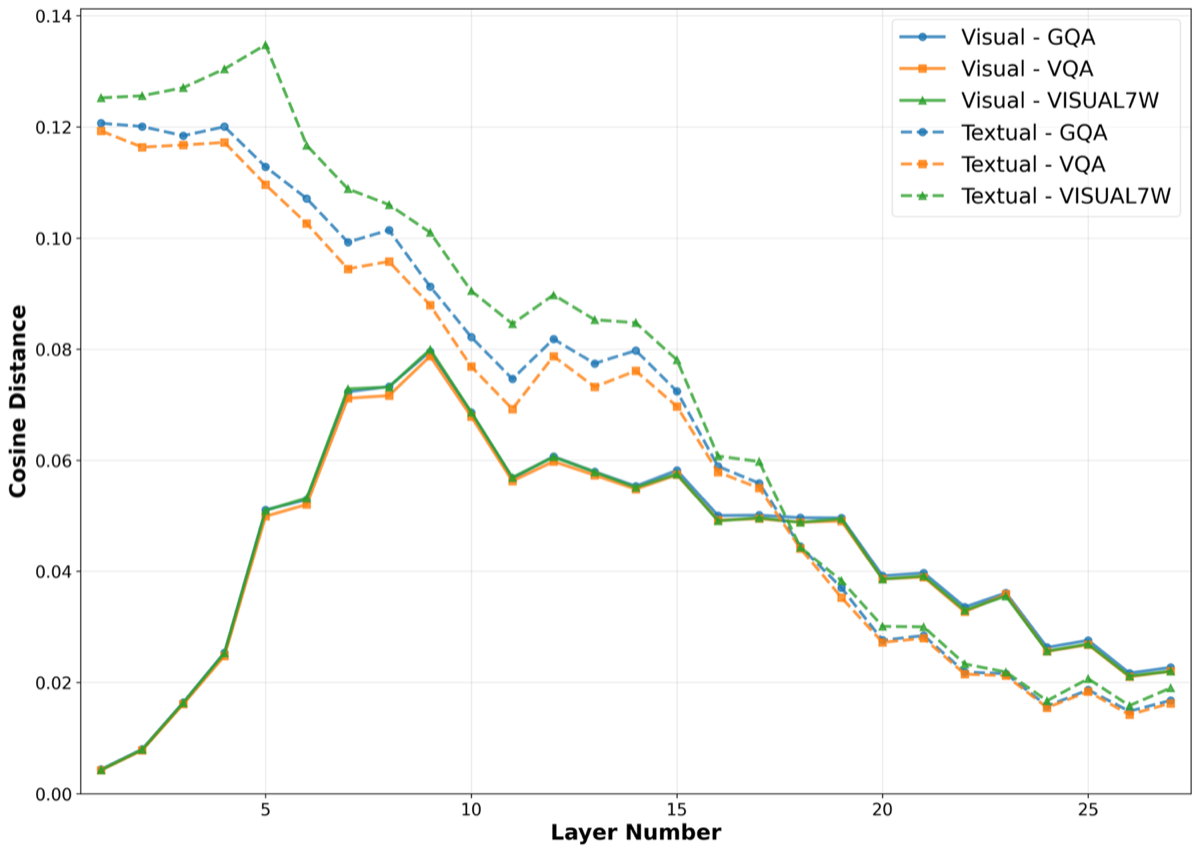}
        \caption{LLaVA 1.5 7B}
    \end{subfigure}
    \hspace{2em}
    \begin{subfigure}[b]{0.35\textwidth}
        \centering
        \includegraphics[width=\textwidth]{./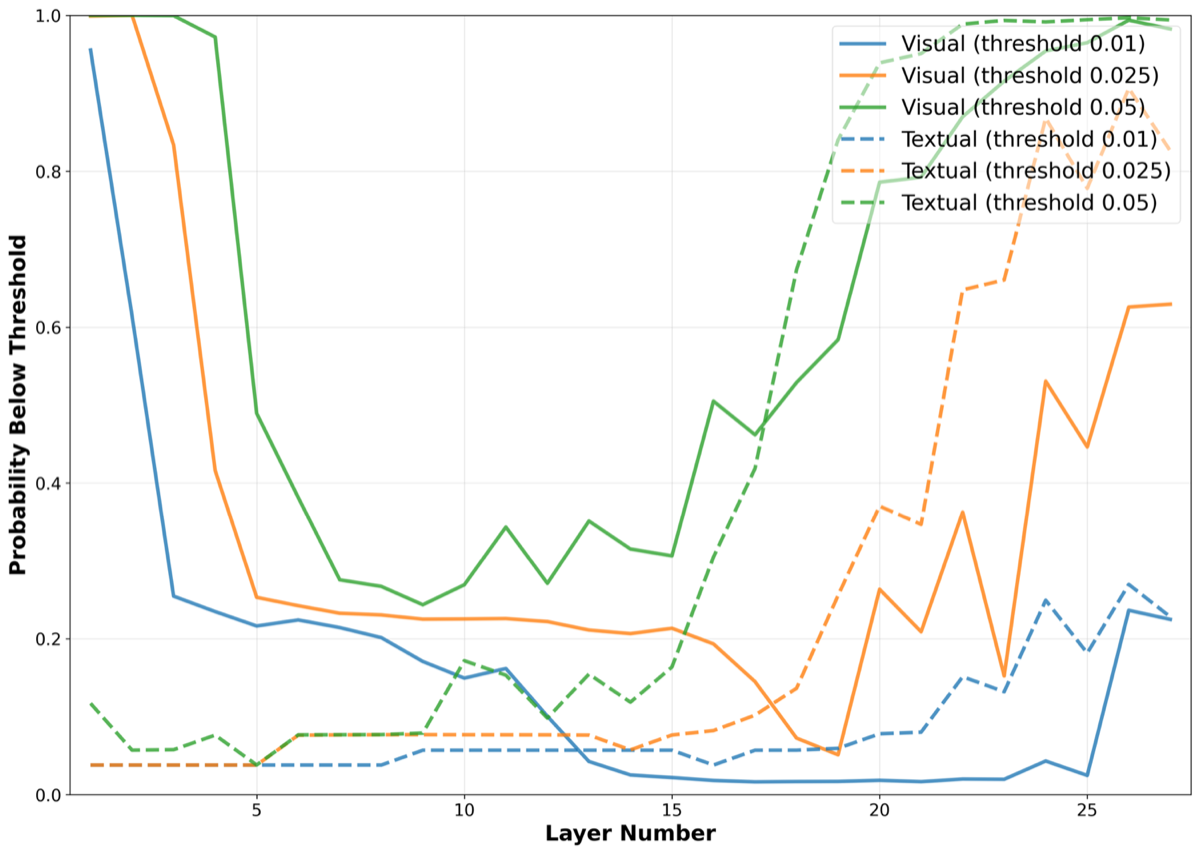}
        \caption{LLaVA 1.5 7B}
    \end{subfigure}
    
    \vspace{0.3em}
    
    \begin{subfigure}[b]{0.35\textwidth}
        \centering
        \includegraphics[width=\textwidth]{./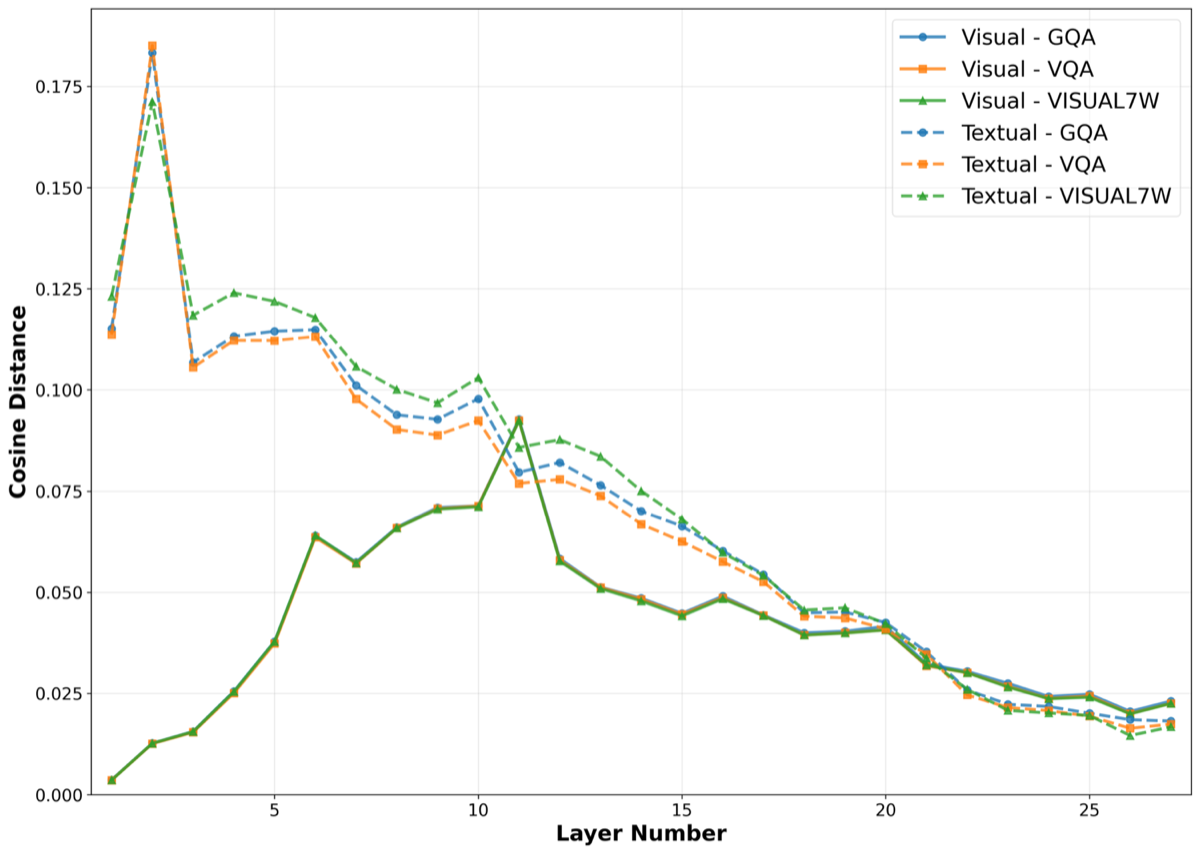}
        \caption{LLaVA 1.5 13B}
    \end{subfigure}
    \hspace{2em}
    \begin{subfigure}[b]{0.35\textwidth}
        \centering
        \includegraphics[width=\textwidth]{./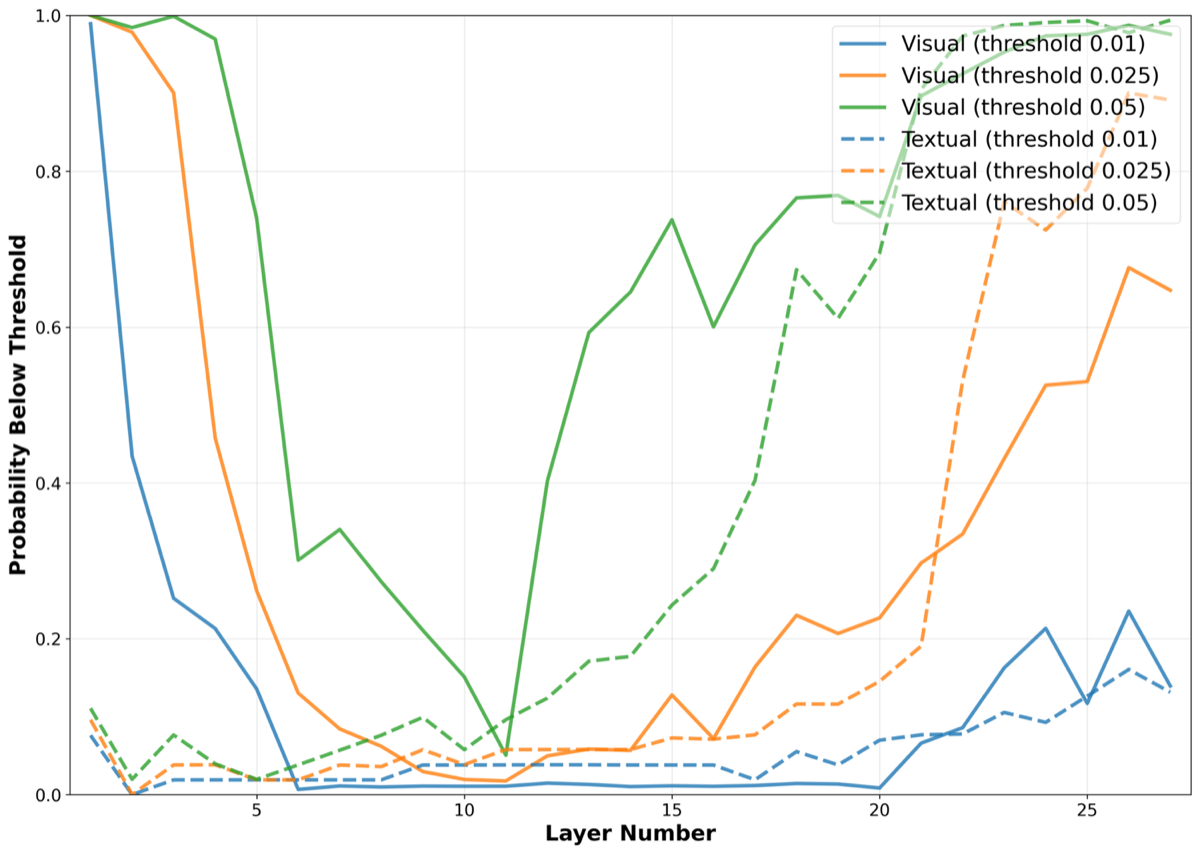}
        \caption{LLaVA 1.5 13B}
    \end{subfigure}
    
    \vspace{0.3em}
    
    \caption{Empirical geometric and proximal redundancy experiments across layers for the LLaVA 1.5 7B/13B. Across all the general VQA tasks (see Table \ref{table:datasets}) and models, the early layer vision tokens have low adjacent token cosine distances, and the textual and visual tokens have low adjacent token cosine distances in later layers. To keep the visualization clean, experiments on other tasks and models are moved to Appendix \ref{sec:further_hidden_state}.}
    \label{fig:hs_comparison_general}
\end{figure*}

Figure \ref{fig:hs_comparison_general} shows that the early layer vision tokens have very low cosine distances with each other and very high probability of being close to each other ($0.05$-proximal with probability $>1-\epsilon$). In the later layers of LLaVA 1.5 7B and 13B, both the early and textual tokens demonstrate proximal redundancy. In LLaVA 1.6, this trend is visible but not as extreme. Note that the 0.05 threshold is arbitrary; we seek to highlight that early vision tokens have low high probabilities of being close, which means that there are redundancies via Theorems \ref{thm:copying}---\ref{thm:infolow}. Please refer to Appendix \ref{sec:further_hidden_state} for additional results. We verify that this is trend is consistent when using other distance metrics, such as centralized kernel alignment (CKA) \citep{KornblithNLH2019} in Appendix \ref{sec:further_hidden_state}. Our results indicate that these models exhibit clear redundancy that can be exploited for efficiency improvements. Next, we validate that the inter-modal attention is sufficiently low to allow for late entry.

\subsubsection{Inter-modal Attention Analysis}
\label{sec:cross-attention}
To determine layers viable for skipping, we analyze inter-modal attention in addition to redundancy. If inter-modality interaction is not minimal, then even if one modality has significant redundancy, its output is still necessary for processing in the other modality, so skipping is not viable. In Figures \ref{fig:llava_attn_1}--\ref{fig:llava_attn_3} we use the VAR from \citet{JiangCZLSY2025} to visualize the self-attention between vision and text. Specifically, VAR for each head $h$ at layer $\ell$ for the $k$th text token $\textbf{y}_k$ is defined as
\vspace{-0.5em}
\begin{equation}
    \textrm{VAR}^{(\ell)}(\textbf{y}_k) \triangleq \sum_{j=0}^h\sum_{i=1}^n \textbf{A}_k^{(\ell,j)}(a_k,i) 
\end{equation}
where $\textbf{A}_k^{(\ell,j)}(a_k, i)$ is the head-wise sum of the attention weights of the newly generated token $\textbf{y}_k$ assigned to the image token $\textbf{v}_i$. We visualize just the VAR with respect to the answer token. From the plots, in general across all datasets and models, the early and late layers have minimal cross attention according to this metric compared to the middle layers. Previous work has proposed this happens because the majority of visual information processing happens in these layers \citep{LinLLJ2025, ShangCXLY2025,ChorariaWBSWZSV2025, JiangCZLSY2025}.

\begin{figure}[t!]
    \centering
        \centering
        \includegraphics[width=0.8\linewidth]{./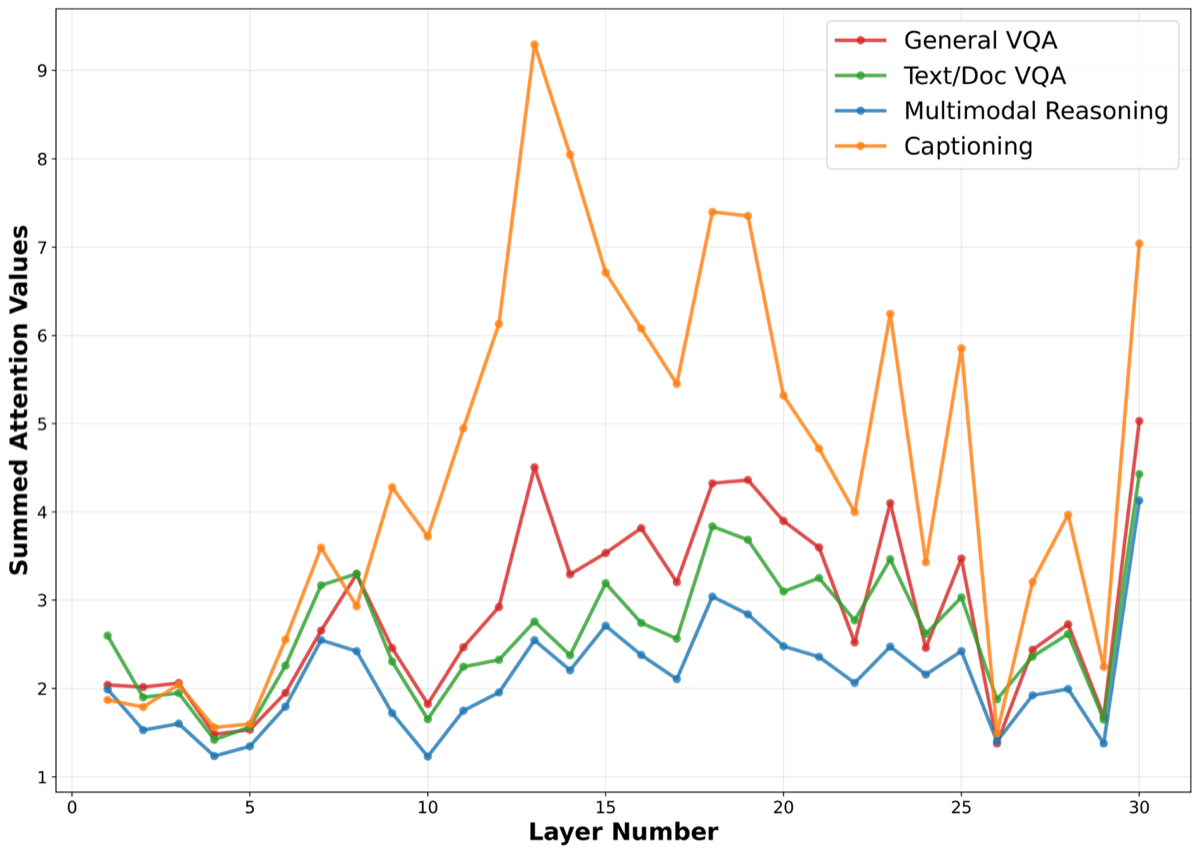}
    \caption{Visual Attention Ratio with respect to the answer text token for LLaVA 1.5 7B. In general, the vision-to-text attention is mainly in the middle layers, with the early and late layers having minimal vision-to-text attention.}
    \label{fig:llava_attn_1}
\end{figure}

\begin{figure}[t!]
    \centering
        \centering
        \includegraphics[width=0.8\linewidth]{./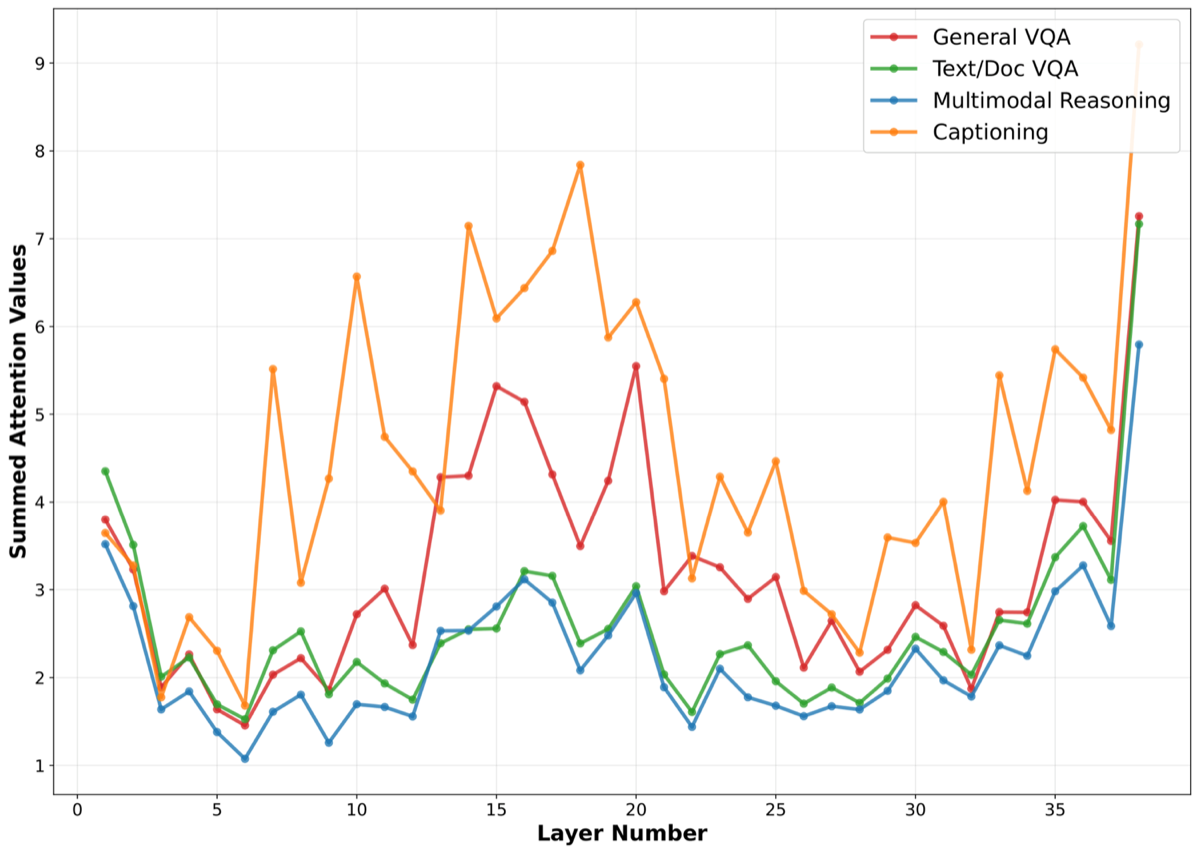}
    \caption{Visual Attention Ratio with respect to the answer text token for LLaVA 1.5 13B. In general, the vision-to-text attention is mainly in the middle layers, with the early and late layers having minimal vision-to-text attention.}
    \label{fig:llava_attn_2}
\end{figure}

\begin{figure}[t!]
    \centering
        \centering
        \includegraphics[width=0.8\linewidth]{./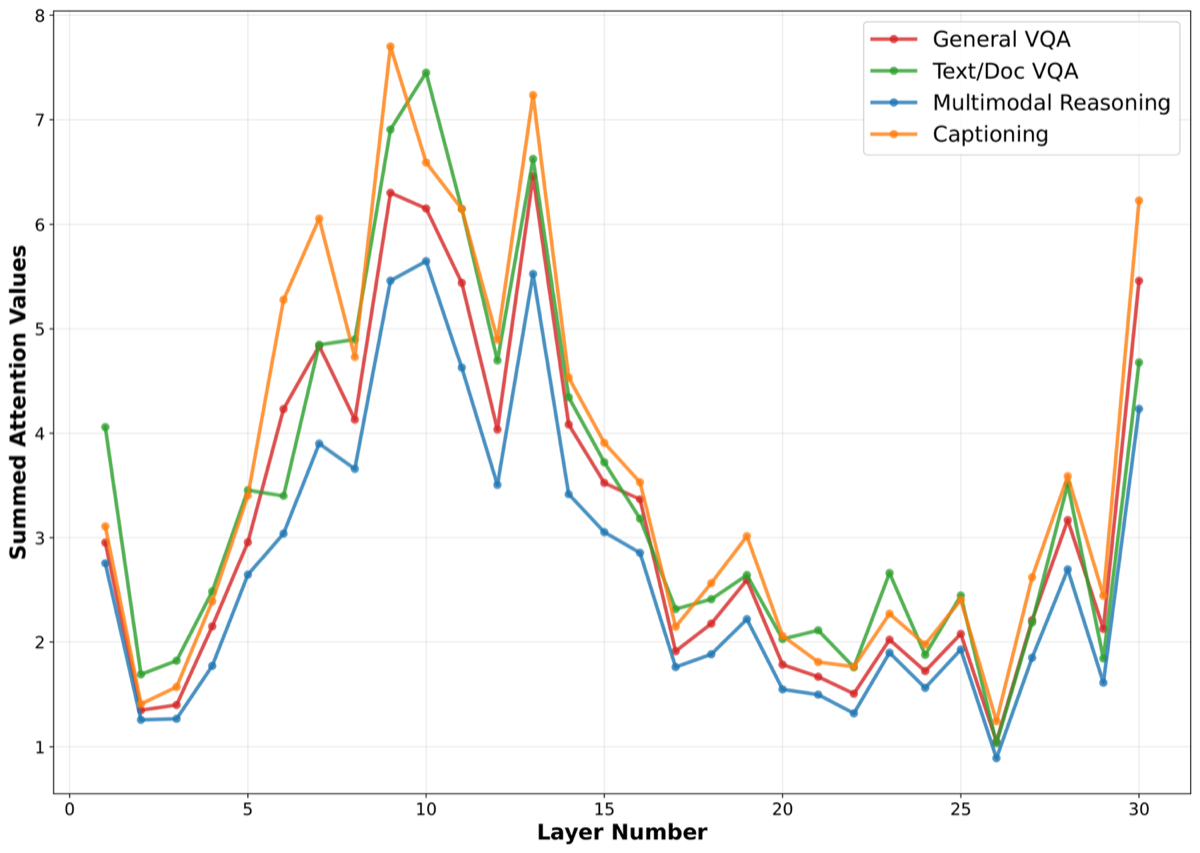}
    \caption{Visual Attention Ratio with respect to the answer text token for LLaVA NeXT 7B Mistral. In general, the vision-to-text attention is mainly in the middle layers, with the early and late layers having minimal vision-to-text attention.}
    \label{fig:llava_attn_3}
\end{figure}

\subsection{Connecting conditions to performance degradation}
\label{sec:valcondition}
Based on our results in Section \ref{sec:val_conditions} and Theorems \ref{thm:copying}--\ref{thm:infolow}, we deduce that the early and late layers are often highly redundant with respect to visual information. Therefore, for the vision tokens, we skip the first $i$ layers (late entry) and the last $j$ layers (early stopping). We then compare the model performance against whether or not the redundancy conditions are being met, for each choice of skipping. The model forward pass is run while doing Late Entry and Early Exit at varying layers to showcase both positive and negative cases of conditions being met. This experiment aims to verify that our proposed theoretical conditions are good indicators for predicting model performance with skipping. 

We begin with analyzing Late Entry layer skipping in Table \ref{table:summary_results_late_entry}, finding that while geometric and proximal redundancy are similar for the two LlaVA models for multimodal reasoning and VQA tasks, they noticeably differ from other models on the Captioning task. This implies that redundancy for Late Entry layer skipping is model and task dependent.

\begin{table*}[h!]
\centering
\begin{adjustbox}{max width=\textwidth}
\begin{tabular}{|c|c|*{4}{c|c|c|c|}}
\hline
\multirow{2}{*}{Task} & \multirow{2}{*}{Metric}
& \multicolumn{4}{c|}{LLaVA 1.5 7B}
& \multicolumn{4}{c|}{LLaVA NeXT 7B Mistral}
& \multicolumn{4}{c|}{LLaVA 1.5 13B}
& \multicolumn{4}{c|}{Qwen 2.5 VL} \\
\cline{3-18}
& & 0 & 4 & 8 & 12 & 0 & 4 & 8 & 12 & 0 & 4 & 8 & 12 & 0 & 4 & 8 & 12\\
\hline
\multirow{3}{*}{General VQA}
& $\rho(\currV, \prevV)$ & -- & 0.025 & 0.073 & 0.060 & -- & 0.006 & 0.037 & 0.070 & -- & 0.025 & 0.066 & 0.058 & -- & 0.0323 & 0.050 & 0.0353 \\
& $\P[D_V<0.05]$ & -- & 0.972 & 0.267 & 0.271 & -- & 0.974 & 0.791 & 0.334 & -- & 0.965 & 0.273 & 0.403 & -- & 0.9246 & 0.625 & 0.922 \\
& Accuracy & \textbf{0.564} & \textbf{0.553} & 0.370 & 0.261 & \textbf{0.770} & \textbf{0.721} & 0.630 & 0.514 & \textbf{0.782} & \textbf{0.779} & 0.747 & 0.5010 & \textbf{0.823} & 0.587 & 0.579 & 0.526 \\
\hline

\multirow{3}{*}{Text/Doc VQA}
& $\rho(\currV, \prevV)$ & -- & 0.020 & 0.061 & 0.058 & -- & 0.007 & 0.039 & 0.037 &-- & 0.019 & 0.059 & 0.059 & -- & 0.506 & 0.0692 & 0.0444 \\

& $\P[D_V<0.05]$ & -- & 0.992 & 0.378 & 0.337 & -- & 0.967 & 0.891 & 0.410 & -- & 0.995 & 0.362 & 0.411 & -- & 0.6515 & 0.337 & 0.70 \\

& Accuracy & \textbf{0.5790} & \textbf{0.5640} & 0.5130 & 0.4920 & \textbf{0.703} & 0.5400 & 0.4420 & 0.3320 & \textbf{0.6920} & \textbf{0.6780} & 0.6260 & 0.6000 & \textbf{0.804} & 0.710 & 0.549 & 0.513 \\
\hline

\multirow{3}{*}{\shortstack{Multimodal Reasoning}}
& $\rho(\currV, \prevV)$ & -- & 0.025 & 0.072 & 0.059 & -- & 0.008 & 0.036 & 0.067 & -- & 0.024 & 0.066 & 0.058 & -- & 0.044 & 0.071 & 0.045 \\

& $\P[D_V<0.05]$ & -- & 0.965 & 0.333 & 0.322 & -- & 0.961 & 0.821 & 0.346 & -- & 0.978 & 0.314 & 0.441 & -- & 0.731 & 0.335 & 0.692 \\

& Accuracy & \textbf{0.325} & \textbf{0.321} & 0.302 & 0.233 & \textbf{0.384} & 0.248 & 0.258 & 0.155 & \textbf{0.341} & \textbf{0.343} & 0.316 & 0.245 & \textbf{0.637} & 0.412 & 0.401 & 0.325 \\
\hline

\multicolumn{2}{|c|}{} & 0 & 8 & 12 & 16 & 0 & 8 & 12 & 16 & 0 & 8 & 12 & 16 & 0 & 8 & 12 & 16 \\
\hline

\multirow{3}{*}{\shortstack{Captioning}}
& $\rho(\currV, \prevV)$ & -- & 0.027 & 0.0729 & 0.0603 & -- & 0.006 & 0.0372 & 0.0710 & -- & 0.027 & 0.0669 & 0.0589 & -- & 0.035 & 0.0519 & 0.0369 \\

& $\P[D_V<0.05]$ & -- & 0.951 & 0.267 & 0.270 & -- & 0.976 & 0.782 & 0.314 & -- & 0.948 & 0.265 & 0.386 & -- & 0.906 & 0.587 & 0.897 \\

& BLEU & \textbf{0.217} & \textbf{0.270} & \textbf{0.261} & \textbf{0.221} & \textbf{0.168} & \textbf{0.242} & \textbf{0.158} & \textbf{0.181} & \textbf{0.199} & \textbf{0.182} & \textbf{0.222} & \textbf{0.214} & \textbf{0.141} & 0.112 & 0.121 & 0.067 \\

& CIDEr & \textbf{0.819} & \textbf{0.902}& \textbf{0.872} & 0.720 & \textbf{0.583} & \textbf{0.766} & \textbf{0.521} & \textbf{0.596} & \textbf{0.526} & 0.464 & \textbf{0.651} & \textbf{0.678} & \textbf{0.395} & \textbf{0.374} & 0.303 & 0.155 \\

& SPICE & \textbf{0.213} & \textbf{0.215} & \textbf{0.197} & 0.163 & \textbf{0.171} & \textbf{0.179} & \textbf{0.161} & 0.147 & \textbf{0.200} & 0.186 & \textbf{0.207} & 0.189 & \textbf{0.166} & \textbf{0.1402} & \textbf{0.146} & 0.092 \\

\hline
\end{tabular}
\end{adjustbox}
\vspace{0.5em}
\caption{Summary of results for \textit{Late Entry} on vision tokens. In this table, we only focus on the first 16 layers of each model. After this point, skipping causes each model to degrade even further. We split up the Multimodal Reasoning Task into datasets in which the models have accuracy close to guessing in the dataset. This was done to show how redundancy is less useful when a model has very poor performance, such that there is no model degradation. In this table, accuracy in bold indicates closeness to the baseline performance.}
\label{table:summary_results_late_entry}
\end{table*}

\begin{table*}[h!]
\centering
\begin{adjustbox}{max width=\textwidth}
\begin{tabular}{|c|c|*{3}{c|}*{3}{c|}*{5}{c|}*{3}{c|}}
\hline
\multirow{2}{*}{Task} & \multirow{2}{*}{Metric}
& \multicolumn{3}{c|}{LLaVA 1.5 7B}
& \multicolumn{3}{c|}{LLaVA NeXT 7B Mistral}
& \multicolumn{5}{c|}{LLaVA 1.5 13B}
& \multicolumn{3}{c|}{Qwen 2.5 VL 7B} \\
\cline{3-16}
& & 20 & 24 & 28
  & 20 & 24 & 28
  & 20 & 24 & 28 & 32 & 36
  & 20 & 24 & 28 \\
\hline

\multirow{2}{*}{General VQA} 
& VAR & 4.33 & 4.11 & 2.44 & 2.53 & 1.98 & 2.19 & 4.26 & 3.27 & 2.64 & 2.58 & 4.01 & 8.43 & 6.65 & 3.45 \\
& Accuracy & \textbf{0.582} & \textbf{0.584} & \textbf{0.581} & \textbf{0.647} & \textbf{0.647} & \textbf{0.647} & 0.739 & \textbf{0.787} & \textbf{0.787} & \textbf{0.780} & \textbf{0.787} & \textbf{0.853} & \textbf{0.853} & \textbf{0.874} \\
\hline

\multirow{2}{*}{Text/Doc VQA}
& VAR & 5.95 & 5.63 & 2.93 & 2.73 & 2.44 & 2.39 & 5.38 & 4.11 & 3.40 & 3.42 & 5.05 & 6.44 & 5.81 & 3.42 \\
& Accuracy & \textbf{0.592} & \textbf{0.594} & \textbf{0.595} & \textbf{0.684} & \textbf{0.701} & \textbf{0.701} & \textbf{0.691} & \textbf{0.710} & \textbf{0.708} & \textbf{0.708} & \textbf{0.708} & 0.883 & 0.882 & \textbf{0.912} \\
\hline
\multirow{2}{*}{\shortstack{Multimodal Reasoning}} 
& VAR & 3.00 & 2.60 & 1.97 & 2.33 & 1.93 & 1.92 & 2.75 & 2.13 & 1.67 & 2.01 & 3.14 & 1.49 & 1.05 & 1.01 \\
& Accuracy & \textbf{0.322} & \textbf{0.325} & \textbf{0.326} & 0.343 & 0.341 & 0.342 & 0.313 & \textbf{0.335} & \textbf{0.335} & \textbf{0.335} & \textbf{0.333} & 0.533 & 0.530 & \textbf{0.593} \\
\hline

\multicolumn{2}{|c|}{} & 20 & 24 & 28 & 20 & 24 & 28 & 16 & 20 & 24 & 28 & 32 & 20 & 24 & 28 \\
\hline
\multirow{5}{*}{Captioning}
& VAR & 7.35 & 6.24 & 3.21 & 3.01 & 2.27 & 2.62 & 5.87 & 4.29 & 2.72 & 4.00 & 5.74 & 8.49 & 3.56 & 1.32 \\
& BLEU & \textbf{0.203} & \textbf{0.219} & \textbf{0.216} & \textbf{0.165} & \textbf{0.168} & \textbf{0.168} & \textbf{0.143} & \textbf{0.161} & \textbf{0.184} & \textbf{0.194} & \textbf{0.143} & 0.069 & \textbf{0.107} & \textbf{0.121} \\
& CIDEr & 0.573 & \textbf{0.620} & \textbf{0.625} & \textbf{0.561} & \textbf{0.560} & \textbf{0.566} & 0.377 & 0.433 & \textbf{0.499} & \textbf{0.509} & \textbf{0.511} & 0.185 & \textbf{0.320} & \textbf{0.358} \\
& SPICE & \textbf{0.206} & \textbf{0.213} & \textbf{0.213} & \textbf{0.163} & \textbf{0.164} & \textbf{0.165} & 0.160 & 0.176 & \textbf{0.193} & \textbf{0.195} & \textbf{0.197} & 0.094 & \textbf{0.138} & \textbf{0.152} \\
\hline

\end{tabular}
\end{adjustbox}
\vspace{0.5em}
\caption{Summary of results for \textit{Early Exit} on vision tokens. In this table, we only focus on the layers starting on the 20th layer of each model. Before this point, early skipping causes significant performance degradation across models due to being to visual information still being processed, which is supported in Figures \ref{fig:llava_attn_1}--\ref{fig:llava_attn_3}. We include layer 20 to see how are framework is applicable when VAR is moderate). We split up the Multimodal Reasoning Task into datasets in which the models have accuracy close to guessing in the dataset. This was done to show how redundancy is less useful when a model has very poor performance, such that there is no model degradation. In this table, accuracy in bold indicates closeness to the baseline performance.}
\label{table:summary_early_exit}
\end{table*}

For the Early Exit layer skipping, in Table \ref{table:summary_early_exit}, the VAR across models differs, however minimally changes across tasks. Notably, however, VAR scores in the later layers on Captioning is much higher than the VQA/Mulimodal reasoning tasks. Notice that in both paradigms, performance decays when redundancy and attention conditions are not met. 

\begin{tcolorbox}[colback=white, colframe=black, title=Key Insight \#2, sharp corners]
Early layer textual redundancy is low in discrimative VQA tasks, but occurs after the Captioning tasks (See Figures \ref{fig:captioning_redundancy_llava} and \ref{ig:captioning_redundancy_qwen} in the appendix). VAR scores in the later layers are much higher on Captioning tasks than discriminative VQA. This indicates that the task itself influences model redundancy.
\end{tcolorbox}



\subsection{Practical implications of this framework}
This framework can be used to justify why and when certain layer skipping methods. However, using an entire dataset to find redundancy is a drawback. Therefore, we also provide empirical insights into the generalizability of adjacent layer cosine distance in Appendix \ref{sec:generalizability} and a theoretical functional redundancy generalization bound in Theorem \ref{thm:general} in Appendix \ref{app:proof}. We show that while only considering a subset of examples, our framework still accurately measures model redundancy.  
Leveraging this theorem, we provide pseudo code to demonstrate how to algorithmically identify which layers to skip based using a label-free calibration subset in Algorithm \ref{algo:find_layers}.
\begin{algorithm}
\caption{Finding Skippable Layers}
\begin{algorithmic}[1]
\label{algo:find_layers}
\REQUIRE VLM $\Phi$ with $N$ layers, calibration subset $D'$, and geometric redundancy threshold $\epsilon_g$, proximal redundancy distance threshold $t$, proximal redundancy probability threshold $\alpha$, and visual attention ratio threshold $\tau$.
\ENSURE Late Entry index $L$, Early Exit index $j$.

\STATE \textit{// 1. Compute task-specific metrics for all layers $l \in \{1, \dots, N\}$}
\FOR{each layer $l = 1\to N$}
    \STATE $\mathcal{D}_l \leftarrow$ Expected adjacent layer cosine distance over $D'$.
    \STATE $\mathbb{P}_l \leftarrow$ Expected probability of $t$-proximal states over $D'$.
    \STATE $V_l \leftarrow$ Expected Visual Attention Ratio (VAR) over $D'$.
\ENDFOR

\STATE \textit{// 2. Determine Late Entry point $L$ (Last layer of initial redundant block)}
\STATE Search for the largest layer index $l \in \{1, \dots, N\}$ such that for every preceding layer $k \le l$:
\STATE \quad $\mathcal{D}_k < \epsilon_g$ AND $\mathbb{P}_k \ge \alpha$.
\STATE $L \leftarrow \max \{ l \mid \text{Condition 2 is satisfied} \}$.

\STATE \textit{// 3. Determine Early Exit point $j$ (First layer of terminal redundant block)}
\STATE Search for the smallest layer index $l \in \{1, \dots, N\}$ such that for every subsequent layer $k \ge l$:
\STATE \quad $\mathcal{D}_k < \epsilon_g$ AND $\mathbb{P}_k \ge \alpha$ AND $V_k < \tau$.
\STATE $j \leftarrow \min \{ l \mid \text{Condition 3 is satisfied} \}$.

\STATE \textbf{Return} Late Entry layer $L$, Early Exit layer $j$.
\end{algorithmic}
\end{algorithm}

\section{Discussion}\label{sec:pruning}

Our redundancy framework is grounded in probability and information theory, making it general enough to provide a unified foundation for analyzing a wide range of pruning techniques. To illustrate its flexibility, we show how it connects to several existing methods. The layer-level redundancy claims of \citet{ShukorC2024} find theoretical justification through our framework, as does Skip-Vision \citep{ZengHJY2025}, which merges tokens based on cosine similarity rankings---a strategy whose justification aligns closely with our functional redundancy theorems. Similarly, FlexiDepth \citep{LuoWY2025} can be interpreted as learning $\mathbb{E}[Z \mid \curr]$ at each layer~$\ell$, while ST$^{3}$ \citep{ZhuangLDHCLH2024} identifies ``lazy'' layers that contribute minimally beyond their predecessors using cosine similarity. Our framework applies directly to all of these approaches, offering a common theoretical lens for understanding why they work.

The task type itself influences when VLMs have redundancy. Following the results of \citep{JiangCZLSY2025, KADURICG2025}, visual information processing is done predominately in the middle layers of the model. Additionally, \citep{HARTMANJCBV2025} showed that vision tokens are likely copied in the early layers. Therefore, we hypothesize that textual tokens are initially processed in the early model layers to understand what information is needed from the visual tokens in the middle layers. In the case of captioning versus discriminative VQA, the prompt for a captioning task (e.g. "Write a caption for this image.") likely requires less processing in these early layers than a discriminative VQA task (e.g. "Where is the building in image located?"), since the VQA query is often more complex to understand what information is needed from the visual tokens. This can be visualized in Figures \ref{fig:hs_comparison_multimodal_llava}  and \ref{fig:captioning_redundancy_llava}, where the textual geometric redundancy is much higher in the captioning task than the discriminative VQA task.

\section{Conclusion \& Future Work}


In this work, we propose a theoretical framework for studying redundancy in VLMs. We define a general notion of layer skipping that encompasses many existing techniques, providing a unified lens for understanding when and why they succeed. Our framework yields easy-to-compute conditions that can identify redundant layers using only a small subset of unlabeled data, without requiring downstream task performance. We validate this empirically by showing that our conditions predict when layer skipping is viable across different models and datasets, and that skipping non-redundant layers leads to significant degradation.

A natural direction for future work is understanding why these redundancies arise, whether as a byproduct of vision-language pretraining strategies or as an intentional mechanism for multimodal processing. Beyond this, the general nature of our framework suggests it may extend to other multimodal settings such as audio-language or video-language models. Investigating whether redundancy patterns change predictably with model scale could also inform the design of more efficient architectures.

\section*{Impact Statement}
By understanding when existing layer skipping techniques can be applied successfully without model degradation, our paper motivates the development of improved techniques. Layer skipping techniques reduce the overhead of compute needed for inference on VLMs. Our paper introduces theory which better motives why these techniques work in practice. Which in turn motivates work which reduces the number of tokens per layer, thereby decreasing the inference time of VLMs. This has very important applications to edge computing, which is becoming more and more prevalent as privacy of foundation models becomes a growing concern. Our work directly contributes to a theoretical understanding of redundancy in multimodal models.

\bibliography{example_paper}
\bibliographystyle{icml2026}

\newpage
\appendix
\onecolumn
\appendix
\section*{Appendix}
\section{Proofs}\label{app:proof}
\setcounter{thm}{0}
\setcounter{prop}{0}
\setcounter{lem}{0}
\subsection{Proofs of Lemmas}\label{proof:lemmas}
\begin{lem}\label{lem:tri_AM-GM}
    Let $(X,d)$ be a metric space and $x,y,z \in X$. Then $d(x,y)^2 \leq 2d(x,z)^2 + 2d(z,y)^2.$
\end{lem}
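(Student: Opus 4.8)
The plan is to combine the triangle inequality with the elementary quadratic bound $(a+b)^2 \le 2a^2 + 2b^2$, which is just a restatement of the AM–GM inequality $2ab \le a^2 + b^2$ (equivalently $(a-b)^2 \ge 0$). Writing $a = d(x,z)$ and $b = d(z,y)$, both quantities are nonnegative, so squaring the triangle inequality is order-preserving and there are no sign subtleties to worry about.

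Concretely, first I would invoke the triangle inequality for the metric $d$ to get $d(x,y) \le d(x,z) + d(z,y)$. Since both sides are nonnegative, squaring gives
$$
d(x,y)^2 \le \bigl(d(x,z) + d(z,y)\bigr)^2 = d(x,z)^2 + 2\,d(x,z)\,d(z,y) + d(z,y)^2.
$$
Next I would apply the bound $2ab \le a^2 + b^2$ with $a = d(x,z)$, $b = d(z,y)$ to replace the cross term, yielding $2\,d(x,z)\,d(z,y) \le d(x,z)^2 + d(z,y)^2$. Substituting this into the previous display collapses the right-hand side to $2\,d(x,z)^2 + 2\,d(z,y)^2$, which is exactly the claimed inequality.

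There is essentially no obstacle here: the only things used are the triangle inequality axiom of a metric and a nonnegativity argument, both completely routine. The one point worth stating explicitly in the write-up is that squaring preserves the inequality because $d \ge 0$, so that no absolute values or case distinctions are needed. This lemma will then be used downstream (e.g. in the cosine-distance-to-MSE conversions of Theorems~\ref{thm:copying} and~\ref{thm:empcopy}) to split a squared distance across an intermediate point at the cost of a factor of $2$.
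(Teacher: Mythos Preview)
Your proof is correct and follows exactly the approach indicated in the paper, which simply states that the lemma ``follows from the triangle inequality and the AM-GM inequality.'' Your write-up spells out precisely these two steps.
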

\begin{proof}
    Follows from the triangle inequality and the AM-GM inequality.
\end{proof}
\begin{lem}\label{lem:cos_to_l2}
    Suppose $X,Y$ are random vectors with unit norm (with probability 1). Let $\rho(x,y) = 1-\frac{\inner{x}{y}}{\|x\|\|y\|}$ be the cosine distance between $x$ and $y$. Then $\E[\rho(X,Y)] < \frac{\epsilon}{2}$ implies $\E[\|X-Y\|_2^2] < \epsilon.$
\end{lem}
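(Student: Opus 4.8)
\textbf{Proof plan for Lemma \ref{lem:cos_to_l2}.}
The plan is to work pointwise first and then take expectations. Fix a realization $(x,y)$ of $(X,Y)$ lying in the unit sphere. I would expand the squared Euclidean distance directly:
\[
\|x-y\|_2^2 = \|x\|_2^2 + \|y\|_2^2 - 2\inner{x}{y} = 2 - 2\inner{x}{y},
\]
using $\|x\|_2 = \|y\|_2 = 1$. Since the unit-norm assumption also forces $\|x\|\,\|y\| = 1$ in the denominator of $\rho$, we have $\rho(x,y) = 1 - \inner{x}{y}$, and hence the exact identity $\|x-y\|_2^2 = 2\rho(x,y)$ holds almost surely.

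From here I would simply take expectations of both sides of this almost-sure identity and use linearity of expectation: $\E[\|X-Y\|_2^2] = 2\,\E[\rho(X,Y)]$. Plugging in the hypothesis $\E[\rho(X,Y)] < \epsilon/2$ yields $\E[\|X-Y\|_2^2] < \epsilon$, which is the claim.

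There is essentially no obstacle here: the only point requiring care is that the identity $\|x-y\|_2^2 = 2\rho(x,y)$ relies on both vectors having unit norm (so that both the $\|x\|_2^2 + \|y\|_2^2 = 2$ step and the cancellation of the $\|x\|\|y\|$ normalizer are valid), which is exactly the standing assumption; and that the identity holds with probability $1$, so taking expectations is unproblematic (both sides are nonnegative and measurable, so no integrability concerns arise beyond finiteness of $\E[\rho(X,Y)]$, which is given).
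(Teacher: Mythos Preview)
Your proposal is correct and takes essentially the same approach as the paper: both derive the pointwise identity $\|X-Y\|_2^2 = 2\rho(X,Y)$ under the unit-norm assumption and then take expectations. The paper's proof just writes this directly inside the expectation rather than isolating the pointwise step first, but the argument is identical.
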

\begin{proof}
    Observe that 
    \begin{align}
        \E[\|X-Y\|_2^2] = \E[\|X\|^2 + \|Y\|^2 - 2\inner{X}{Y}] &= 2\E[1-\inner{X}{Y}] \\
        &= 2\E[\rho(X,Y)] \\
        &< \epsilon.
    \end{align}
\end{proof}

\begin{lem}\label{lem:norms}
    Let $(X, \inner{\cdot}{\cdot})$ be a real inner product space and $a,b,c \in X$. Then $\|a+b+c\|^2 \leq 3(\|a\|^2 + \|b\|^2 + \|c\|^2)$.
\end{lem}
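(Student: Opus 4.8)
The plan is to prove Lemma~\ref{lem:norms} as a direct generalization of Lemma~\ref{lem:tri_AM-GM}, but via a slightly cleaner Cauchy--Schwarz argument rather than iterating the triangle inequality. First I would expand the squared norm: $\|a+b+c\|^2 = \|a\|^2 + \|b\|^2 + \|c\|^2 + 2\inner{a}{b} + 2\inner{a}{c} + 2\inner{b}{c}$. The task is then to show the three cross terms are bounded by $2(\|a\|^2 + \|b\|^2 + \|c\|^2)$.

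The key step is to control each cross term with the elementary inequality $2\inner{x}{y} \le \|x\|^2 + \|y\|^2$, which follows from $0 \le \|x-y\|^2$. Applying this to each of the three pairs gives $2\inner{a}{b} \le \|a\|^2 + \|b\|^2$, $2\inner{a}{c} \le \|a\|^2 + \|c\|^2$, and $2\inner{b}{c} \le \|b\|^2 + \|c\|^2$. Summing these three bounds, each of $\|a\|^2$, $\|b\|^2$, $\|c\|^2$ appears exactly twice, so the cross terms together are at most $2(\|a\|^2 + \|b\|^2 + \|c\|^2)$. Adding back the $\|a\|^2 + \|b\|^2 + \|c\|^2$ from the diagonal terms yields $\|a+b+c\|^2 \le 3(\|a\|^2 + \|b\|^2 + \|c\|^2)$, as desired.

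Alternatively, one could phrase this as an instance of the power-mean (or Cauchy--Schwarz) inequality: for any vectors $v_1,\dots,v_n$ in an inner product space, $\|\sum_i v_i\|^2 \le n \sum_i \|v_i\|^2$, which is just Jensen's inequality applied to the convex function $t \mapsto t^2$ after normalizing, or Cauchy--Schwarz on the pairing of $(v_1,\dots,v_n)$ with $(1,\dots,1)$. Here $n=3$ gives exactly the claim. I would probably present the elementary expansion argument since it is self-contained and matches the style of the surrounding lemmas.

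There is no real obstacle here; the only thing to be careful about is that the inner product is real (so $\inner{x}{y} = \inner{y}{x}$ and the expansion has no conjugates), which is given in the hypothesis. The lemma is purely a convenience result used later — presumably to split $\hat{f}_\ell(\curr) - \hat{f}_{\ell-1}(\prev)$ into three pieces in the proof of Theorem~\ref{thm:empcopy} and bound the sum of their squared norms — so a two-line proof suffices.
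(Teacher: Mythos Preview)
Your proof is correct and essentially identical to the paper's: both expand $\|a+b+c\|^2$ and reduce the cross-term bound to the nonnegativity of $\|a-b\|^2 + \|a-c\|^2 + \|b-c\|^2$, which is exactly your three applications of $2\inner{x}{y} \le \|x\|^2 + \|y\|^2$ summed together. The paper just packages the three pairwise inequalities into a single observation rather than applying them one at a time.
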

\begin{proof}
    Expanding $\|a+b+c\|^2$ we get $\|a\|^2 + \|b\|^2 + \|c\|^2 + 2\inner{a}{b} + 2\inner{b}{c} + 2\inner{a}{c}$. Thus we want to show that $0 \leq 2\|a\|^2 + 2\|b\|^2 + 2\|c\|^2 - 2\inner{a}{b} - 2\inner{b}{c} - 2\inner{a}{c}$. This is equivalent to showing $0 \leq \|a-b\|^2 + \|b-c\|^2 + \|a-c\|^2$ and since $\|\cdot\|^2$ is non-negative, we are done.
\end{proof}

\begin{lem}\label{lem:kl_to_MI}
    Suppose $Y$---$X$---$Z$ is a Markov chain. Then $\E_{(X,Y)}[D(p_{Z|X}||p_{Z|Y})] = I(Z; X|Y)$.
\end{lem}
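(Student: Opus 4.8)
The plan is to expand the conditional mutual information $I(Z;X\mid Y)$ as an expectation of a log-likelihood ratio, collapse that ratio using the Markov hypothesis, and then recognize the inner conditional expectation as a KL divergence via the tower property. Concretely, start from
$$
I(Z;X\mid Y)=\E_{(X,Y,Z)}\!\left[\log\frac{p_{Z\mid X,Y}(Z\mid X,Y)}{p_{Z\mid Y}(Z\mid Y)}\right],
$$
which is the standard identity obtained by writing $I(Z;X\mid Y)=\E_Y[D(p_{X,Z\mid Y}\,\|\,p_{X\mid Y}p_{Z\mid Y})]$ and cancelling the common factor $p_{X\mid Y}$ inside the logarithm.

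The next step is to use the Markov chain $Y$---$X$---$Z$, equivalently $Z\perp Y\mid X$, in two places. First, it gives $p_{Z\mid X,Y}(\cdot\mid x,y)=p_{Z\mid X}(\cdot\mid x)$ for almost every $(x,y)$, so the integrand above becomes $\log\frac{p_{Z\mid X}(Z\mid X)}{p_{Z\mid Y}(Z\mid Y)}$. Second, the same property says that the conditional law of $Z$ given $(X,Y)=(x,y)$ is precisely $p_{Z\mid X=x}$. Applying the tower property, conditioning first on $(X,Y)$, then yields
$$
I(Z;X\mid Y)=\E_{(X,Y)}\!\left[\E\!\left[\log\frac{p_{Z\mid X}(Z\mid X)}{p_{Z\mid Y}(Z\mid Y)}\ \Big|\ X,Y\right]\right].
$$
For fixed $(X,Y)=(x,y)$, the inner expectation is over $Z\sim p_{Z\mid X=x}$, so it equals $\E_{Z\sim p_{Z\mid X=x}}\!\big[\log\tfrac{p_{Z\mid X}(Z\mid x)}{p_{Z\mid Y}(Z\mid y)}\big]=D\big(p_{Z\mid X=x}\,\big\|\,p_{Z\mid Y=y}\big)$. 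Taking the outer expectation over $(X,Y)$ gives exactly $\E_{(X,Y)}[D(p_{Z\mid X}\,\|\,p_{Z\mid Y})]$, as claimed.

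I do not expect a genuine obstacle here; the proof is essentially bookkeeping. The only points requiring a little care are (i) that the relevant KL divergences are well defined, i.e.\ $p_{Z\mid X=x}\ll p_{Z\mid Y=y}$ on the appropriate events, which follows from the Markov structure and is automatic in the discrete setting relevant to our applications; and (ii) applying the Markov identity $p_{Z\mid X,Y}=p_{Z\mid X}$ consistently, both when rewriting the log-ratio and when identifying the conditional distribution of $Z$ given $(X,Y)$ that governs the inner expectation. With those noted, the chain of equalities above is immediate.
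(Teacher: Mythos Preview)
Your argument is correct and is essentially the same chain of equalities as the paper's proof, just traversed in the opposite direction: the paper starts from $\E_{(X,Y)}[D(p_{Z\mid X}\,\|\,p_{Z\mid Y})]$ and works toward $I(Z;X\mid Y)$, while you start from $I(Z;X\mid Y)$ and unwind it to the expected KL divergence. Both invoke the Markov property $p_{Z\mid X,Y}=p_{Z\mid X}$ and the tower property at the same points.
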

\begin{proof}
    Observe that 
    \begin{align}
        \E[D(p_{Z|X}||p_{Z|Y})] &= \E\left[\int p_{Z|X}(z|X) \log \frac{p_{Z|X}(z|X)}{p_{Z|Y}(z|Y)}dz\right]\\
        &= \E_{(X,Y),Z\sim p(\cdot | X)}\left[\log \frac{p(Z|X)}{p(Z|Y)}\right]\\\label{eq:markov}
        &= \E_{(X,Y), Z\sim p(\cdot|X,Y)}\left[\log \frac{p(Z|X,Y)}{p(Z|Y)}\right]\\
        &= \E_{X,Y,Z}\left[ \log \frac{p(Z,X|Y)}{p(X|Y)p(Z|Y)}\right]\\\label{eq:mutualinfo}
        &= I(Z; X|Y)
    \end{align}
    where \eqref{eq:markov} follows from the Markov property and \eqref{eq:mutualinfo} is the definition of conditional mutual information.
\end{proof}

\subsection{Proofs of Propositions and Theorems}\label{proof:theorems}
\begin{prop}
    Let $\rho: \mathcal{X} \times \mathcal{X} \to \mathbb{R}$ be a symmetric function with $0\leq \rho \leq 1$. Then $$\P[\rho(X,Y)>t] < \frac{\epsilon - t}{1-t}\; \textrm{ implies }\; \E[\rho(X,Y)] < \epsilon$$
\end{prop}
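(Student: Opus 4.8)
The plan is to use the layer-cake (tail integration) representation of the expectation of the bounded nonnegative random variable $W := \rho(X,Y)$, which takes values in $[0,1]$. Writing $\bar F(s) := \P[\rho(X,Y) > s]$ for the survival function, we have
$$\E[\rho(X,Y)] = \int_0^1 \bar F(s)\, ds = \int_0^t \bar F(s)\, ds + \int_t^1 \bar F(s)\, ds.$$
We tacitly assume $t \in [0,1)$ so that $\tfrac{\epsilon-t}{1-t}$ is well defined; moreover, if $t \ge \epsilon$ then $\tfrac{\epsilon-t}{1-t} \le 0$ and the hypothesis $\bar F(t) < \tfrac{\epsilon-t}{1-t}$ is vacuous (a probability cannot be negative), so we may also assume $t < \epsilon$.

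First I would bound the first piece trivially: since $\bar F(s) \le 1$ for all $s$, we get $\int_0^t \bar F(s)\, ds \le t$. For the second piece, I would use that $\bar F$ is non-increasing (as $\{\rho > s\} \subseteq \{\rho > t\}$ whenever $s \ge t$), so $\bar F(s) \le \bar F(t)$ for every $s \in [t,1]$; hence $\int_t^1 \bar F(s)\, ds \le (1-t)\,\bar F(t)$. Combining these two bounds and invoking the hypothesis $\bar F(t) < \tfrac{\epsilon - t}{1-t}$ yields
$$\E[\rho(X,Y)] \;\le\; t + (1-t)\,\bar F(t) \;<\; t + (1-t)\cdot\frac{\epsilon - t}{1-t} \;=\; \epsilon,$$
which is exactly the claim. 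Note that the symmetry of $\rho$ is not actually needed; only the range condition $0 \le \rho \le 1$ is used.

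There is no substantive obstacle here: the only steps requiring any care are the monotonicity of the survival function used in the second integral and the harmless degenerate cases ($t = 1$, or $t \ge \epsilon$) noted above, both of which are dispatched in a line. The whole argument is a two-term split of the tail integral followed by an elementary simplification.
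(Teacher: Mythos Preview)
Your proof is correct and follows essentially the same route as the paper: both define $D = \rho(X,Y)$, invoke the tail integration formula $\E[D] = \int_0^1 \P[D>s]\,ds$, split the integral at $t$, bound the two pieces by $t$ and $(1-t)\P[D>t]$ respectively, and substitute the hypothesis. Your additional remarks on the degenerate cases and the unused symmetry assumption are helpful but not present in the paper's version.
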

\begin{proof}
    Define the random variable $D := \rho(X,Y)$. By the tail integration formula, 
    \begin{align}
        \E[D] &= \int_0^1 \P[D > s] ds\\
        &= \int_0^t \P[D > s]ds + \int_t^1 \P[D > s]ds\\
        &\leq t\cdot 1 + (1-t)\P[D > t]\\
        &< t + (1-t)(\frac{\epsilon-t}{1-t})\\
        &= \epsilon.
    \end{align}
\end{proof}
\begin{thm}\label{proof:thm1}
    Let $\curr, \prev$ be unit-norm random variables and $Z$ be another random variable (e.g. hidden representations of layers $\ell, \ell-1$ and the task ground truth respectively). Let $\rho(x,y) = 1-\frac{\inner{x}{y}}{\|x\|\|y\|}$. Assume $\E[\rho(X,Y)] < \frac{\epsilon}{2}$ and that  
    $$
        h(x,y)= E[Z|X_\ell=x, X_{\ell-1}=y]
    $$
    is $\alpha$-Lipschitz in the first argument and $\beta$-Lipschitz in the second. Then $E[\|E[Z|X_\ell] - E[Z|X_{\ell-1}]\|_2^2] < 2(\alpha^2 + \beta^2)\epsilon$.
\end{thm}
\begin{proof}
    Observe that 
    \begin{align}
        \E[Z|\curr] - E[Z|\prev] &= \E[\E[Z|\curr, \prev]|\curr] - \E[\E[Z|\curr, \prev]|\prev]\\
        &= \E[h|\curr] - \E[h|\prev]
    \end{align}
    By Lemma \ref{lem:cos_to_l2} we have that $\E[\|\curr-\prev\|_2^2] < \epsilon$ since $\|\curr\|$ and $\|\prev\|$ are unit-norm.\\
    By Lemma \ref{lem:tri_AM-GM} we have 
    $$
        (\E[h|\curr] -\E[h|\prev])^2 \leq 2(\E[h|\curr] - h)^2 + 2(\E[h|\prev] - h)^2.
    $$
    Furthermore, since expectation is order-preserving, we have
    \begin{align}
        \E[(\E[h|\curr] -\E[h|\prev])^2] &\leq 2\E[(\E[h|\curr] - h)^2] + 2\E[(\E[h|\prev] - h)^2]\\
        &= 2\E[\Var(h|\curr)] + 2\E[\Var(h|\prev)].
    \end{align}
    By definition,
    \begin{align}
        \E[\Var(h(\curr, \prev)|\curr = x)] &= \E[\E[(h(\curr , \prev) - \E[h(\curr, \prev)|\curr = x])^2|\curr=x]]\\\label{eq:firstMMSE}
        &\leq \E[\E[(h(\curr, \prev) - h(\curr, \prev=\E[\prev|\curr=x]))^2|\curr = x]]\\\label{eq:Lip}
        &\leq \beta^2\E[\E[\|\prev-\E[\prev|\curr=x]\|_2^2|\curr = x]]\\\label{eq:tower}
        &= \beta^2\E[\|\prev-\E[\prev|\curr=x]\|_2^2]\\\label{eq:secMMSE}
        &\leq \beta^2 \E[\|\prev-\curr\|_2^2]\\
        &< \beta^2 \epsilon
    \end{align}
    where \eqref{eq:firstMMSE} holds by the optimality of the minimum mean squared error (MMSE) estimator, \eqref{eq:Lip} holds by the Lipschitz assumption, \eqref{eq:tower} holds by the tower property (Law of Total Expectation), and \eqref{eq:secMMSE} holds by the optimality of the MMSE estimator once again.
    By symmetry, the same holds $\prev$ (i.e. $\E[\Var(h|\prev=y)] < \alpha^2\epsilon$)
    so $\E[(\E[Z|\curr] - \E[Z|\prev])^2] \leq 2(\alpha^2 + \beta^2)\epsilon$
\end{proof}

\begin{thm}
    Let $\curr, \prev$ be unit-norm random variables and $Z$ be another random variable (e.g. layer activations of layers $\ell, \ell-1$ and a task variable respectively). Let $\rho(x,y) = 1-\frac{\inner{x}{y}}{\|x\|\|y\|}$ . Assume $\E[\rho(X,Y)] < \frac{\epsilon}{2}$ and that  
    $$
        h(x,y)= E[Z|X_\ell=x, X_{\ell-1}=y]
    $$
    is $\alpha$-Lipschitz in the first argument and $\beta$-Lipschitz in the second. Let $\hat{f}_\ell$ be a finite-sample estimate of $f\opt_\ell(x) = \E[Z|X_\ell=x]$ and $\hat{f}_{\ell-1}(x)$ be a finite-sample estimate of $f\opt_{\ell-1}(x) = \E[Z|X_{\ell-1} = x]$. Further let $\eta_\ell= \E[\|\hat{f}_\ell(X_\ell) - f\opt_\ell(X_\ell)\|^2]$ and $\eta_{\ell-1} = \E[\|\hat{f}_{\ell-1}(\prev) - f\opt_{\ell-1}(\prev)\|^2]$. We then have $$\E[\|\hat{f}_\ell(\curr) - \hat{f}_{\ell-1}(\prev)\|^2] < 3\eta_\ell + 3\eta_{\ell-1} + 6(\alpha^2 + \beta^2)\epsilon.$$ 
\end{thm}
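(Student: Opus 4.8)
The plan is to reduce Theorem~\ref{thm:empcopy} to Theorem~\ref{thm:copying} by an add-and-subtract decomposition combined with Lemma~\ref{lem:norms}. First I would write the pointwise identity
\begin{align}
    \hat{f}_\ell(\curr) - \hat{f}_{\ell-1}(\prev)
    &= \bigl(\hat{f}_\ell(\curr) - f\opt_\ell(\curr)\bigr) + \bigl(f\opt_{\ell-1}(\prev) - \hat{f}_{\ell-1}(\prev)\bigr) + \bigl(f\opt_\ell(\curr) - f\opt_{\ell-1}(\prev)\bigr),
\end{align}
so that the quantity of interest is the squared norm of a sum of three vectors. Applying Lemma~\ref{lem:norms} with $a = \hat{f}_\ell(\curr) - f\opt_\ell(\curr)$, $b = f\opt_{\ell-1}(\prev) - \hat{f}_{\ell-1}(\prev)$, and $c = f\opt_\ell(\curr) - f\opt_{\ell-1}(\prev)$ gives
\begin{align}
    \|\hat{f}_\ell(\curr) - \hat{f}_{\ell-1}(\prev)\|^2 \leq 3\|a\|^2 + 3\|b\|^2 + 3\|c\|^2,
\end{align}
and taking expectations turns the first two terms into exactly $3\eta_\ell$ and $3\eta_{\ell-1}$ by the definitions of $\eta_\ell$ and $\eta_{\ell-1}$.

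It remains to bound $3\,\E[\|c\|^2] = 3\,\E[\|f\opt_\ell(\curr) - f\opt_{\ell-1}(\prev)\|^2]$. But $f\opt_\ell(\curr) = \E[Z|\curr]$ and $f\opt_{\ell-1}(\prev) = \E[Z|\prev]$ by definition, so this is precisely $3\,\E[\|\E[Z|\curr] - \E[Z|\prev]\|^2]$, which Theorem~\ref{thm:copying} bounds by $3 \cdot 2(\alpha^2+\beta^2)\epsilon = 6(\alpha^2+\beta^2)\epsilon$ under the standing hypotheses (unit norm, $\E[\rho(X,Y)] < \epsilon/2$, and the Lipschitz assumption on $h$), which are assumed verbatim here. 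Summing the three contributions yields
\begin{align}
    \E[\|\hat{f}_\ell(\curr) - \hat{f}_{\ell-1}(\prev)\|^2] < 3\eta_\ell + 3\eta_{\ell-1} + 6(\alpha^2+\beta^2)\epsilon,
\end{align}
which is the claimed bound.

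There is essentially no hard part here: the argument is a mechanical triangle-inequality-type split plus a direct invocation of the previous theorem, and the constant $3$ in Lemma~\ref{lem:norms} is what propagates into the factor-of-three inflation of each term. The only point requiring a moment's care is making sure the hypotheses of Theorem~\ref{thm:copying} are genuinely in force for the middle term — in particular that $Z$, $\curr$, $\prev$ are the same objects with the same unit-norm and Lipschitz assumptions — which they are by construction. One could also note that the decomposition is not unique (any grouping of the error terms works), and that if one used a two-term split plus Lemma~\ref{lem:tri_AM-GM} instead one would get factors of $4$ rather than $3$; the three-way split with Lemma~\ref{lem:norms} is the tighter choice and the one matching the stated constants.
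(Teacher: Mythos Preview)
Your proposal is correct and matches the paper's own proof essentially line for line: the same three-term add-and-subtract decomposition, the same application of Lemma~\ref{lem:norms}, and the same invocation of Theorem~\ref{thm:copying} for the cross term. The paper additionally opens by restating Lemma~\ref{lem:cos_to_l2} to record $\E[\|\curr-\prev\|_2^2]<\epsilon$, but that fact is only used inside Theorem~\ref{thm:copying} and is not needed separately, so your omission is harmless.
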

\begin{proof}\label{proof:thm2}
    By Lemma \ref{lem:cos_to_l2} we have that $\E[\|\curr - \prev\|_2^2] < \epsilon$.
    
    Observe that $$\hat{f}_\ell(\curr) - \hat{f}_{\ell-1}(\prev) = (-f\opt_\ell(\curr) + \hat{f}_\ell(\curr)) + (f\opt_{\ell-1}(\prev)-\hat{f}_{\ell-1}(\prev)) + (f\opt_\ell(\curr) - f\opt_{\ell-1}(\prev)).$$ Thus by Lemma \ref{lem:norms} we have:
    $$
    \|\hat{f}_\ell(\curr) - \hat{f}_{\ell-1}(\prev)\|^2 \leq 3(\|f\opt_\ell(X_\ell) - \hat{f}_\ell(\curr)\|^2 + \|f\opt_{\ell-1}(\prev)-\hat{f}_{\ell-1}(\prev)\|^2 + \|f\opt_{\ell-1}(\prev) - f\opt_\ell(\curr)\|^2).
    $$
    Finally, by taking expectations, we get
    $$
    \E[\|\hat{f}(\curr) - \hat{f}(\prev)\|^2] \leq 3(\eta_\ell + \eta_{\ell-1} + 2(L_1^2 + L_2^2)\epsilon) = 3\eta_\ell + 3\eta_{\ell-1} + 6(\alpha^2 + \beta^2)\epsilon.
    $$
\end{proof}

\begin{thm}[Continuous Fano's Inequality; \citep{DuchiW2013}]\label{thm:fano}
    Let $\curr, \prev$ be unit-norm vectors over the support $\mathcal{X}$. Define $$\overline{\mathbb{B}}_\rho(t) = \{x' \in \mathbb{R}^d | \rho(x,x') \leq t\}.$$ Let $\mu$ be the Lebesgue measure. Assume $\mu(\partial \mathcal{X})$ and $\sup_{x \in \mathcal{X}} \mu(\partial(\mathbb{B}_\rho(t) \cap \mathcal{X}))$ are finite where the Lebesgue measure is taken over their respective dimensions. Let $P_t = \P[\rho(\curr, \prev) \geq t]$. Then if $\curr$ is uniform over $\mathcal{X}$, $$I(\curr, \prev) \geq (1-P_t)\log(\frac{\mu(\mathcal{X})}{\sup_{x \in \mathcal{X}} \mu(\mathbb{B}_\rho(t) \cap \mathcal{X})}) - \log 2.$$
\end{thm}
\begin{proof}
    Observe that $\curr$---$\prev$---$\prev$ is trivially a Markov chain. The result follows from applying results from \citet{DuchiW2013}.
\end{proof}

\begin{thm}[\citep{BraunP2015}]\label{thm:infolow}
Let $\curr, \prev$ be unit-norm random variables with shared support $\mathcal{X}$. Let $\rho: \mathcal{X} \times \mathcal{X} \to \mathbb{R}$ be a symmetric function (e.g. a metric). Let $\overline{B}(t,x) := \{x' \in \mathcal{X} | \rho(x,x') \leq t\}$, $P_t = \P[\rho(\prev, \curr) > t]$. Let $$p_{min}:=\inf\limits_{x\in \mathcal{X}} \P[(\prev, x) \in \overline{B}(t,x)] \;\text{ and } \; p_{max} := \sup\limits_{x \in \mathcal{X}} \P[(\prev, x) \in \overline{B}(t,x)]$$ with $0\leq p_{min} < 1$ and $0 < p_{max} \leq 1$ and $p_{min} + p_{max} < 1$. Then, $$I(\prev; \curr) \geq (1-P_t)\log \frac{1}{p_{max}} -P_t\log (1-p_{min}) - H_2(P_t).$$

\end{thm}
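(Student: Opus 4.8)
The plan is to obtain the bound from a single application of the data-processing inequality for KL divergence to a well-chosen binary statistic, which is the standard way to turn a ``close with probability $1-P_t$'' hypothesis into a mutual-information lower bound (and is essentially the argument behind the cited bound of \citep{BraunP2015}).

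First I would write $I(\prev;\curr) = D(P_{\prev,\curr} \,\|\, P_{\prev}\otimes P_{\curr})$ and introduce the binary map $\Phi(x,x') = \mathbbm{1}\{\rho(x,x') \le t\}$ on $\mathcal{X}\times\mathcal{X}$. Pushing the joint law and the product-of-marginals law through $\Phi$ and invoking the data-processing inequality for KL divergence gives $I(\prev;\curr) \ge d_{\mathrm{KL}}(a \,\|\, q)$, where $d_{\mathrm{KL}}$ denotes binary KL divergence, $a := \P_{\text{joint}}[\rho(\prev,\curr)\le t] = 1 - P_t$, and $q := \P_{\text{product}}[\rho(\prev,\curr)\le t]$ is the same probability computed with $\prev$ and $\curr$ drawn independently from their marginals.

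Second I would control $q$: writing $q = \E_{\curr}\big[\P_{\prev}[\prev \in \overline{B}(t,\curr)]\big]$ with the inner probability over an independent copy of $\prev$, the definitions of $p_{min}$ and $p_{max}$ immediately give $p_{min} \le q \le p_{max}$. Third, I would expand
\[
 d_{\mathrm{KL}}(1-P_t \,\|\, q) = (1-P_t)\log\tfrac{1-P_t}{q} + P_t\log\tfrac{P_t}{1-q},
\]
and bound the two summands in opposite directions: the first is decreasing in $q$, so $q \le p_{max}$ gives $(1-P_t)\log\tfrac{1-P_t}{q} \ge (1-P_t)\log\tfrac{1-P_t}{p_{max}}$; the second is increasing in $q$, so $q \ge p_{min}$ gives $P_t\log\tfrac{P_t}{1-q} \ge P_t\log\tfrac{P_t}{1-p_{min}}$. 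Summing these and using $-H_2(P_t) = (1-P_t)\log(1-P_t) + P_t\log P_t$ rearranges exactly into the claimed inequality. The hypotheses $0\le p_{min}<1$, $0<p_{max}\le 1$, and $p_{min}+p_{max}<1$ are precisely what keeps every logarithm finite — in particular $q \le p_{max} < 1-p_{min} \le 1$, so $1-q>0$.

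The main subtlety is this last step: one cannot simply substitute a single worst-case value of $q$ into $d_{\mathrm{KL}}(a\,\|\,\cdot)$, since that function is convex in $q$ with an interior minimum at $q=a$ and hence is not monotone on $[p_{min},p_{max}]$. The bound survives only because the two terms of the binary KL are individually monotone in $q$ and can therefore be relaxed separately, one against the upper bound $p_{max}$ and the other against the lower bound $p_{min}$. Everything else is routine; as with Theorem~\ref{thm:fano}, the write-up could alternatively just cite \citep{BraunP2015} directly.
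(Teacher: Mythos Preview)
Your proof is correct and self-contained. The paper's own write-up simply says ``Follows directly from Proposition~2.2 in \citep{BraunP2015} with $R = \{(x,x')\in\mathcal{X}\times\mathcal{X} : \rho(x,x')\le t\}$,'' so you have in fact spelled out the underlying argument (data processing for KL through the binary statistic $\Phi$, followed by term-wise relaxation of the binary KL against $p_{max}$ and $p_{min}$) that the paper defers to the reference --- and you already note at the end that one could just cite \citep{BraunP2015} directly, which is exactly what the paper does.
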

\begin{proof}\label{proof:thm4}
    Follows directly from Proposition 2.2 in \cite{BraunP2015} with $R = \{ (x,x') \in \mathcal{X}\times \mathcal{X} : \rho(x,x') \leq t \}$.
\end{proof}
\setcounter{cor}{3}
\begin{cor}[Conditional Entropy Fano's Inequality; \citep{BraunP2015}]\label{cor:infored}
    With the same conditions and notations as Theorem \ref{thm:infolow} we have 
    $$
    H(\curr | \prev) \leq H(\curr) + \log p_{max} + H(P_t) + P_t\log \frac{1-p_{min}}{p_{max}}.
    $$
\end{cor}
\begin{proof}
    Follows from Corollary 2.3 in \cite{BraunP2015} with $R = \{ (x,x') \in \mathcal{X}\times \mathcal{X} : \rho(x,x') \leq t \}$.
\end{proof}
\begin{thm}
    Suppose there are random variables $Z,\curr, \prev$ with $Z \in \mathbb{R}^d$ and $\curr,\prev$ continuous unit-norm random variables. Further suppose $\|Z\|_2 \leq B$ almost surely and that $X_{\ell-1}$--- $X_\ell$--- $Z$ is a Markov chain. Then $\E[\|\E[Z|\curr] - \E[Z|\prev]\|_2^2] \leq 2B^2 I(Z; \curr | \prev)$. \\
    If, in addition, there exists finite $C$ such that $H(\curr | Z, \prev) \geq -C$ then $\E[\|\E[Z|\curr] - \E[Z|\prev]\|_2^2] \leq 2B^2(H(\curr | \prev) + C)$. In particulary, if $\curr$ is discrete then $C = 0$ and if $p_{\curr | Z, \prev}(x) \leq M\;\forall x$ then $C = \log M$.
\end{thm}
\begin{proof}\label{proof:thm5}
    Fix an $a,b$ and consider the conditional probability distributions $p_{Z|\curr=a}$ and $p_{Z|\prev = b}$. We then have that 
    $$
    \E[Z|\curr=a] - \E[Z|\prev=b] = \int_{\mathbb{R}^d} z(p_{Z|\curr=a}(z) - p_{Z|\prev=b}(z)) dz.
    $$
    Thus,
    \begin{align}
        \|\E[Z|\curr=a] - \E[Z|\prev=b]\|_2 &= \left\Vert\int_{\mathbb{R}^d}z(p_{Z|\curr=a}(z)-p_{Z|\prev=b}(z))dz\right\Vert_2\\\label{eq:27}
        &\leq \int_{\mathbb{R}^d} \|z\|_2 |p_{Z|\curr=a}(z) - p_{Z|\prev=b}(z)| dz\\
        &\leq B\int_{\mathbb{R}^d} |p_{Z|\curr=a}(z) - p_{Z|\prev=b}(z)| dz\\
        &= 2B\delta_{TV}(p_{Z|\curr=a}, p_{Z|\prev=b})
    \end{align}
    where \eqref{eq:27} holds by the triangle inequality.
   Thus we have,
   \begin{align}
       \E[\|\E[Z|\curr] - \E[Z|\prev]\|_2^2] &\leq 4B^2\E[\delta_{TV}(p_{Z|\curr=a}, p_{Z|\prev=b})^2] \\
       &\leq 2B^2 \E[D(p_{Z|\curr} || p_{Z|\prev})]\label{eq:31}.
   \end{align}
   where \eqref{eq:31} holds by Pinsker's inequality.
   
   Now, by Lemma \ref{lem:kl_to_MI} we have $2B^2\E[D(p_{Z|\curr} || p_{Z|\prev})] = 2B^2 I(Z; \curr | \prev)$. Finally, we know that $I(Z; \curr | \prev) = H(\curr | \prev) - H(\curr | Z,\prev) \leq H(\curr | \prev) + C$. Thus,
   $$\E[\|\E[Z|\curr] - \E[Z|\prev]\|_2^2] \leq 2B^2 H(\curr | \prev) + C.$$
\end{proof}

\begin{thm}[Late Entry]
\label{thm:skipping}
    Let $f = f^{n}\circ f^{n-1}\circ \cdots \circ f^1$ be the $n$ layers in a VLM. Fix a layer $L \in \{1,2,\ldots,n\}$. Let 
    $$X_l = \begin{pmatrix}
        V_l\\T_l
    \end{pmatrix} = \begin{pmatrix}
        f_{vis}^l(V_{l-1}, T_{l-1})\\f_{text}^l(V_{l-1},T_{l-1})
    \end{pmatrix}$$
    be the true states with $X_0 = (V, T)^T$. Let $\phi = f^{n} \circ \cdots \circ f^{L+1}$ be the ``tail" of the VLM. 
    
    Let $Y_{true} = f(X)$ and $Y_{skip} = \phi\left(\begin{pmatrix} V_1 \\ \tilde{T}_{L} \end{pmatrix}\right)$, where the approximated text sequence is defined recursively as $\tilde{T}_0 = T_0$ and $\tilde{T}_l = f_{text}^l(0, \tilde{T}_{l-1})$ for $l > 0$.
    
    Assume $\phi$ is $\mu$-Lipschitz with respect to the $\ell_2$-norm, and $f_{text}^l$ is $\lambda$-Lipschitz in the second argument. Further assume that $\|V_{i} - V_{i+1}\| \leq \epsilon$ for all $i$, and the visual dependency bound is $\|f_{text}^l(0,T) - f_{text}^l(V_{l-1}, T)\| \leq \delta.$ Then:
    $$\|Y_{true} - Y_{skip}\| \leq \mu\left((L-1)\epsilon + \delta\left(\frac{\lambda^L-1}{\lambda-1}\right)\right).$$
\end{thm}

\begin{proof}
    By repeated application of the triangle inequality, we have $\|V_1 - V_L\| \leq \sum_{i=1}^{L-1} \|V_i - V_{i+1}\| \leq (L-1)\epsilon$. 
    
    Next, we bound the text error. Define $E_l := \|T_l - \tilde{T}_l\|$. For $l \geq 1$:
    \begin{align}
        E_l &= \|f_{text}^l(V_{l-1},T_{l-1}) - f_{text}^l(0,\tilde{T}_{l-1})\|\\
        &\leq \|f_{text}^l(V_{l-1}, T_{l-1}) - f_{text}^l(0,T_{l-1})\| + \|f_{text}^l(0,T_{l-1}) - f_{text}^l(0,\tilde{T}_{l-1})\|\\
        &\leq \delta + \lambda \|T_{l-1} - \tilde{T}_{l-1}\|\\
        &= \delta + \lambda E_{l-1}.
    \end{align}
    With the base case $E_0 = 0$, this linear recurrence has the closed form solution:
    $$
    E_L \leq \delta \sum_{k=0}^{L-1} \lambda^k = \delta\left(\frac{\lambda^L - 1}{\lambda - 1}\right).
    $$
    Finally, bounding the total error:
    \begin{align}
        \|Y_{true} - Y_{skip}\| &= \left\|\phi\begin{pmatrix} V_L \\ T_L \end{pmatrix} - \phi\begin{pmatrix} V_1 \\ \tilde{T}_L \end{pmatrix}\right\|\\
        &\leq \mu \left\| \begin{pmatrix} V_L - V_1 \\ T_L - \tilde{T}_L \end{pmatrix} \right\|_2\\
        &= \mu\sqrt{\|V_L - V_1\|^2 + \|T_L - \tilde{T}_L\|^2}.
    \end{align}
    Using the inequality $\sqrt{a^2 + b^2} \leq a + b$ for non-negative $a,b$, we obtain:
    \begin{align}
        \|Y_{true} - Y_{skip}\| &\leq \mu\left(\|V_L - V_1\| + \|T_L - \tilde{T}_L\|\right) \\
        &\leq \mu\left((L-1)\epsilon + \delta\left(\frac{\lambda^L - 1}{\lambda - 1}\right)\right).
    \end{align}
\end{proof}

\begin{lem}[Early Exit]\label{lem:vision_skipping}
Let $f$ and $X_l$ be defined as in Theorem \ref{thm:skipping}, with $n$ total layers. Fix a vision-exit layer $L \in \{1,2,\ldots,n-1\}$.Let $Y_{true} = X_n = \begin{pmatrix} V_n \\ T_n \end{pmatrix}$ be the true final state. Let $Y_{skip} = \begin{pmatrix} V_L \\ \tilde{T}_n \end{pmatrix}$ be the approximated final state where visual computation halts at layer $L$. The approximated text sequence continues through layer $n$, defined recursively for $l > L$ as $\tilde{T}_l = f_{text}^l(V_L, \tilde{T}_{l-1})$, with the base case $\tilde{T}_L = T_L$.

Assume that visual features minimally update after layer $L$ such that $\|V_i - V_{i-1}\| \leq \epsilon_v$ for all $i > L$. Assume $f_{text}^l$ is $\lambda$-Lipschitz in the second argument. Further, assume the late cross-modal dependency is bounded by $\|f_{text}^l(V_{l-1}, T) - f_{text}^l(V_L, T)\| \leq \delta_v$ for all $l > L$. Then:
$$\|Y_{true} - Y_{skip}\| \leq (n-L)\epsilon_v + \delta_v\left(\frac{\lambda^{n-L-1}-1}{\lambda-1}\right).$$
\end{lem}
\begin{proof}
Similar to Theorem~\ref{thm:skipping}, we bound the distance between the true final state and the vision-exited state by independently bounding the vision and text errors.

First, by repeated application of the triangle inequality on the residual vision updates, we bound the vision error:
\begin{align}
    \|V_n - V_L\| &\leq \sum_{i=L+1}^{n} \|V_i - V_{i-1}\| \\
    &\leq (n-L)\epsilon_v.
\end{align}

Next, we bound the compounding text error. Define $E_l := \|T_l - \tilde{T}_l\|$. For the first skipped layer ($l = L+1$), since both the true and approximated states utilize $V_L$ and $T_L$, the error is exactly zero:
\begin{align}
    E_{L+1} &= \|f_{text}^{L+1}(V_L, T_L) - f_{text}^{L+1}(V_L, \tilde{T}_L)\| \\
    &= 0.
\end{align}

For subsequent layers $l \geq L+2$, the vision inputs diverge. We bound the text error using the triangle inequality:
\begin{align}
    E_l &= \|f_{text}^l(V_{l-1},T_{l-1}) - f_{text}^l(V_L,\tilde{T}_{l-1})\|\\
    &\leq \|f_{text}^l(V_{l-1}, T_{l-1}) - f_{text}^l(V_L,T_{l-1})\| + \|f_{text}^l(V_L,T_{l-1}) - f_{text}^l(V_L,\tilde{T}_{l-1})\|\\
    &\leq \delta_v + \lambda \|T_{l-1} - \tilde{T}_{l-1}\|\\
    &= \delta_v + \lambda E_{l-1}.
\end{align}

Since $E_{L+1} = 0$, we are accumulating the $\delta_v$ error over $(n - L - 1)$ steps. This linear recurrence yields the closed-form sum:
$$E_n \leq \delta_v \sum_{k=0}^{n-L-2} \lambda^k = \delta_v\left(\frac{\lambda^{n-L-1} - 1}{\lambda - 1}\right).$$

Finally, bounding the total state error using the previously established inequality $\sqrt{a^2 + b^2} \leq a + b$:
\begin{align}
    \|Y_{true} - Y_{skip}\| &= \left\| \begin{pmatrix} V_n - V_L \\ T_n - \tilde{T}_n \end{pmatrix} \right\|_2\\
    &\leq \|V_n - V_L\| + \|T_n - \tilde{T}_n\| \\
    &\leq (n-L)\epsilon_v + \delta_v\left(\frac{\lambda^{n-L-1} - 1}{\lambda - 1}\right).
\end{align}
\end{proof}

\begin{thm}\label{thm:general}
    Let $\curr, \prev, Z$ be continuous random variables and $R(\omega) = \|\E[Z | \curr(\omega)] - \E[Z|\prev(\omega)]\|_2^2$ with $0\leq R \leq B$ almost surely. Let $D' = \{\omega_1, \omega_2, \dots, \omega_n\}$ be $n$ independent and identically distributed samples. Let $\mu = \E[R(\omega)]$ and $\hat{\mu} = \frac{1}{n} \sum_{i=1}^n R(\omega_i)$. Then
    \[
    \P[|\hat{\mu} - \mu| \geq t] \leq 2 \exp(-\frac{2nt^2}{B^2}).
    \]
\end{thm}
\begin{proof}
    Observe that the samples in $D'$ are i.i.d and that $R$ is bounded almost surely. Thus, one can apply Hoeffding's inequality to $R$ and establish the final bound.
\end{proof}
Refer to Appendix \ref{sec:generalizability} for an empirical study of the generalizability of our method.

\subsection{Connection to PID}\label{app:pid}
For readers familiar with information theory, much of the vocabulary and intuition discussed regarding informational redundancy may sound very similar to notions of redundant and unique information in partial information decomposition (PID). 
PID (of 3 finite-support random variables) proposes that the mutual information $I(X; Y,Z)$ can be decomposed into the following terms.
\begin{enumerate}
    \item Unique Information: $\textit{Uni}(X:Y\backslash Z)$ and $\textit{Uni}(X: Z\backslash Y)$ for the unique information that $Y$ contains about $X$ and $Z$ contains about $X$ respectively.
    \item Redundant Information: $\textit{Red}(X:Y,Z)$, which is the information about $X$ that both $Y$ and $Z$ share.
    \item Synergistic Information (sometimes called Shared Information): $\textit{Syn}(X:Y,Z)$, which is the information about $X$ that can only be derived from the combination of both $Y$ and $Z$.
\end{enumerate}
The proposed decomposition of the mutual information is given by the following definition.
\begin{defn}[Partial Information Decomposition]\label{def:pid}
    \begin{align*}
        I(X;Y,Z) &\triangleq \textit{Uni}(X:Y\backslash Z) + \textit{Uni}(X: Z\backslash Y) + \textit{Red}(X: Y,Z) + \textit{Syn}(X:Y,Z)\\
        I(X;Y) &\triangleq \textit{Uni}(X:Y\backslash Z) + \textit{Red}(X:Y,Z)\\
        I(X;Z) &\triangleq \textit{Uni}(X:Z\backslash Y) + \textit{Red}(X:Y,Z).
    \end{align*}
\end{defn} 
In fact, the connection between PID and informational redundancy can be somewhat formalized through the following observation.
\begin{lem}\label{lem:pid}
    Let $X,Y$ be discrete random variables. Then $\textit{Uni}(X:X\backslash Y) = H(X|Y)$.
\end{lem}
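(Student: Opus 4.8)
The plan is to unwind the definitions from the PID framework in Definition~\ref{def:pid} and match terms. Recall that for discrete $X, Y$, the PID decomposition gives $I(X; Y) = \textit{Uni}(X : Y \backslash Y) + \textit{Red}(X : Y, Y)$ — but this is the wrong pairing; the statement involves a ``second'' variable $X$ itself playing the role of the predictor, so I need the decomposition of $I(X; X, Y)$. First I would write down $I(X; X, Y)$ using the first line of Definition~\ref{def:pid} with the roles $X \mapsto X$, $Y \mapsto X$, $Z \mapsto Y$: this gives
\[
I(X; X, Y) = \textit{Uni}(X : X \backslash Y) + \textit{Uni}(X : Y \backslash X) + \textit{Red}(X : X, Y) + \textit{Syn}(X : X, Y).
\]
Since $X$ is a deterministic function of the pair $(X,Y)$, we have $I(X; X, Y) = H(X)$.

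Next I would argue that three of the four terms on the right collapse. The variable $Y$ carries no \emph{unique} information about $X$ that $X$ itself lacks, so $\textit{Uni}(X : Y \backslash X) = 0$; similarly there is no synergy beyond what $X$ already determines, so $\textit{Syn}(X : X, Y) = 0$. These vanishing claims should follow from the defining axioms/consistency equations of whatever PID measure is in use (e.g.\ the second and third lines of Definition~\ref{def:pid} applied with appropriate substitutions, together with $I(X;X) = H(X)$). Concretely, applying the third consistency equation with the roles chosen so that the ``predictor of interest'' is $X$ and using $I(X; X) = H(X)$ pins down $\textit{Uni}(X : X \backslash Y) + \textit{Red}(X : X, Y) = H(X)$. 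Combining this with the full decomposition of $I(X; X, Y) = H(X)$ forces $\textit{Uni}(X : Y \backslash X) + \textit{Syn}(X : X, Y) = 0$, and since both are non-negative they are individually zero.

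Then I would use the second consistency equation of Definition~\ref{def:pid} in the form $I(X; Y) = \textit{Uni}(X : Y \backslash X) + \textit{Red}(X : X, Y)$, wait — this needs care about which variable is ``removed.'' The cleaner route: from $\textit{Uni}(X : X \backslash Y) + \textit{Red}(X : X, Y) = H(X)$ (established above) and the identity $\textit{Red}(X : X, Y) = I(X; Y)$ — which holds because the redundant information shared about $X$ by $X$ and $Y$ is exactly everything $Y$ knows about $X$, namely $I(X;Y)$ — I conclude $\textit{Uni}(X : X \backslash Y) = H(X) - I(X; Y) = H(X) - (H(X) - H(X \mid Y)) = H(X \mid Y)$, which is the claim.

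The main obstacle is that ``PID'' does not refer to a single canonical quantity: $\textit{Uni}$, $\textit{Red}$, and $\textit{Syn}$ depend on the chosen measure (Williams--Beer, Bertschinger et al., Griffith--Koch, etc.), and the two ``obvious'' facts I lean on — $\textit{Red}(X : X, Y) = I(X;Y)$ and $\textit{Syn}(X : X, Y) = 0$ — are \emph{axioms} (self-redundancy and the consistency equations) satisfied by all standard measures rather than theorems. So the proof should either (i) explicitly invoke the consistency equations of Definition~\ref{def:pid} as the governing axioms and derive everything algebraically from them plus non-negativity, or (ii) state that it holds for any PID measure satisfying the standard self-redundancy axiom $\textit{Red}(X : X, Y) = I(X; Y)$. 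I would take route (i), since Definition~\ref{def:pid} is already stated in the paper: the whole argument then reduces to solving the linear system given by the three lines of Definition~\ref{def:pid} (with the substitution $Z = Y$, predictor-of-interest $= X$, and the two predictors being $X$ and $Y$) together with $I(X;X,Y) = H(X)$, $I(X;X) = H(X)$, and the non-negativity of $\textit{Uni}$ and $\textit{Syn}$.
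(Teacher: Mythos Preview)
Your argument is correct and rests on the same two ingredients as the paper's proof: the PID consistency equations of Definition~\ref{def:pid} together with non-negativity force $\textit{Uni}(X:Y\backslash X)=\textit{Syn}(X:X,Y)=0$, after which the result is algebra. The only difference is packaging: the paper first derives the general identity $I(X;Y\mid Z)=\textit{Uni}(X:Y\backslash Z)+\textit{Syn}(X:Y,Z)$ (chain rule plus lines one and three of Definition~\ref{def:pid}), then substitutes $Z=X$ to get $0=\textit{Uni}(X:Y\backslash X)+\textit{Syn}(X:X,Y)$ and substitutes the other way to get $H(X\mid Y)=I(X;X\mid Y)=\textit{Uni}(X:X\backslash Y)$ directly, bypassing the detour through $\textit{Red}(X:X,Y)=I(X;Y)$ that you take. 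Your route is a line or two longer but equally valid; the paper's conditional-MI shortcut is worth noting since it avoids ever naming the redundancy term.
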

\begin{proof}
    From the chain rule for mutual information we know that $I(X; Y|Z) = I(X;Y,Z) - I(X;Z)$. Using Definition \ref{def:pid} we see that $I(X; Y|Z) = \textit{Uni}(X: Y\backslash Z) + \textit{Syn}(X: Y,Z)$ and similarly, $I(X; Z|Y) = \textit{Uni}(X: Z\backslash Y) + \textit{Syn}(X: Y,Z)$.
    Thus $I(X; Y|X) = 0 = \textit{Uni}(X: Y\backslash X) + \textit{Syn}(X: X,Y)$. By the non-negativity of PID we get that $\textit{Uni}(X: Y\backslash X) = \textit{Syn}(X: X,Y) = 0$. Thus, $I(X; X|Y) = H(X|Y) = \textit{Uni}(X: X\backslash Y)$.
\end{proof}

Thus, if one considers $X = \curr, Y = \prev$, we recover our definition of informational redundancy using PID. This also offers a PID interpretation of redundancy: the unique information about the current layer, which only the current layer has, should be low. Further, since PID considers three random variables, this also allows us to consider a combination of our functional and informational redundancy by considering the quantity $\textit{Uni}(Z: \curr \backslash \prev)$. This would be the unique information that $\curr$ has about a target random variable $Z$ that $\prev$ does not have. 

The unique information quantity $\textit{Uni}(X: Y\backslash Z)$ also has a widely accepted definition given by \citet{BertschingerROJA2014}, which is the solution to following convex optimzation problem.
\begin{defn}[BROJA definition; \citep{BertschingerROJA2014}]
    \begin{equation}
        \textit{Uni}(X:Y\backslash Z) \triangleq \min_{Q \in \Delta_P} I_Q(X:Y|Z)
    \end{equation}
    where $\Delta$ is the set of all joint distributions on $X,Y,Z$ and $\Delta_P = \{ Q \in \Delta : Q(X=x, Y=y) = P(X=x, Y=y) \text{ and } Q(X=x, Z=z) = P(X=x, Z=z) \; \forall x\in \text{supp}(X)\,,y \in \text{supp}(Y)\,,z \in \text{supp}(Z)\}$. That is the set of distributions that agree on the marginals.
\end{defn}
If one can bound this value, then one recovers a ``functional information-theoretic redundancy''.

\subsection{Estimation of Lipschitz Constant}
\label{sec:lipschitz}
In Theorems \ref{thm:copying} and \ref{thm:empcopy}, we assume the Lipschitz continuity of the estimator. To see if the Lipschitz constants observed are small enough to make the conclusions from the theorems reasonable in practice, the Lipschitz constants of the layers in LLaVA 1.5 7B, LLaVA NeXT 7B Mistral, and Qwen 2.5 VL 7B were estimated via the spectral norm. A plot of the spectral norms can be seen in Figures \ref{fig:lipschitz}, \ref{fig:llava_next_lipschitz}, and \ref{fig:qwen_lipschitz}.
\begin{figure}[h]
    \centering
    \includegraphics[width=0.5\linewidth]{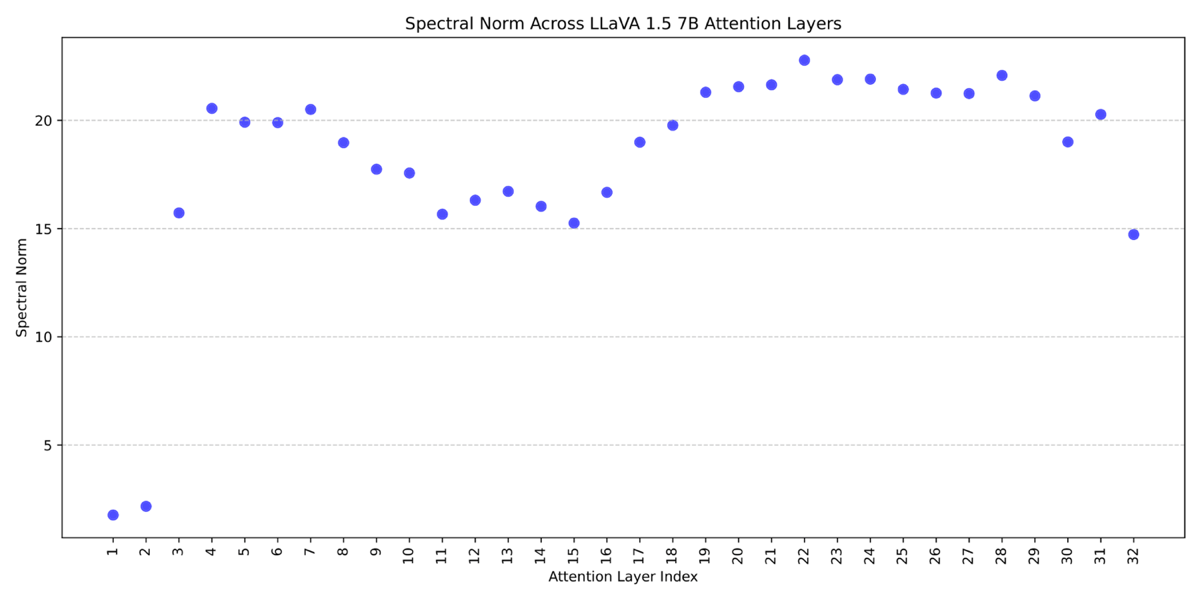}
    \caption{Spectral norm of all layers in LLaVA 1.5 7B. All spectral norms are relatively small (below 25).}
    \label{fig:lipschitz}
\end{figure}

\begin{figure}[h]
    \centering
    \includegraphics[width=0.5\linewidth]{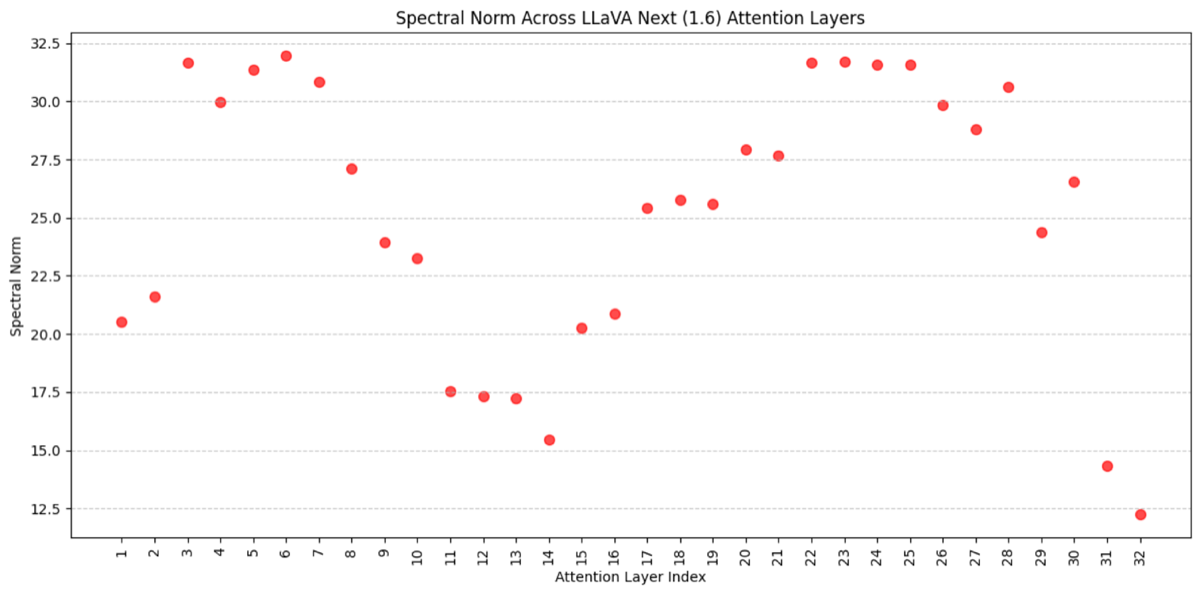}
    \caption{Spectral norm of all layers in LLaVA NeXT 7B Mistral. All spectral norms are relatively small.}
    \label{fig:llava_next_lipschitz}
\end{figure}

\begin{figure}[h]
    \centering
    \includegraphics[width=0.5\linewidth]{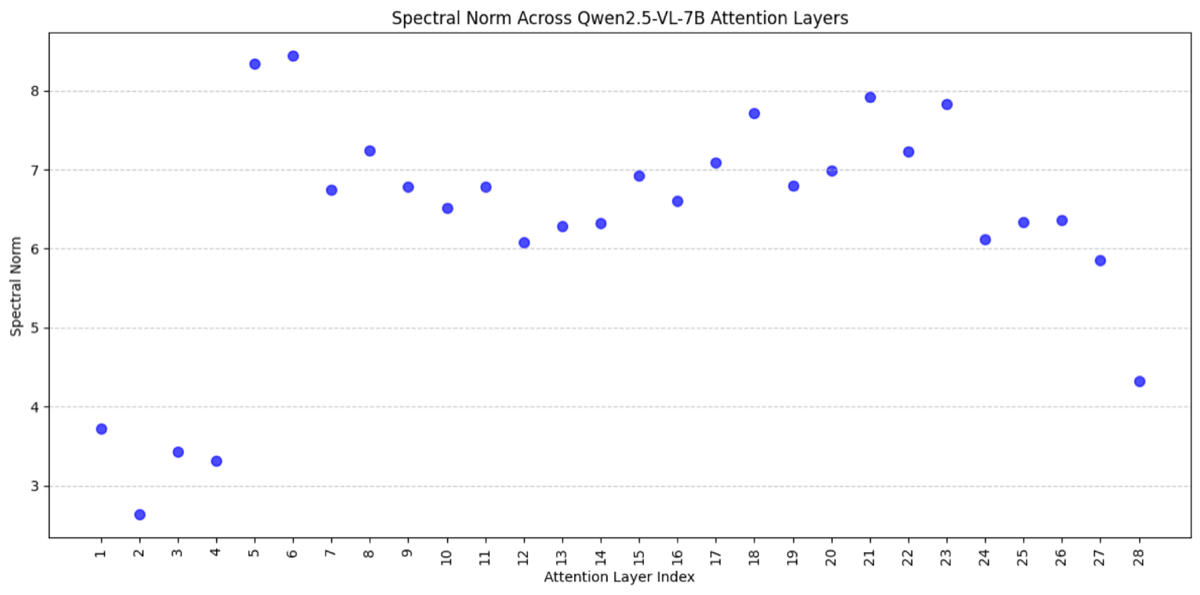}
    \caption{Spectral norm of all layers in Qwen 2.5 VL 7B. All spectral norms are relatively small.}
    \label{fig:qwen_lipschitz}
\end{figure}

\section{Datasets}
\label{sec:datasets}
In this work, we experiment on General Visual Question Answering (VQA), Text/Doc VQA, Multimodal Reasoning, and Math Reasoning. See the table below for a dataset breakdown.

\begin{table}
\centering
\begin{tabular}{|l|l|}
\hline
\textbf{Task} & \textbf{Datasets} \\ \hline
General VQA & GQA, VQA, Visual7W  \\ \hline
Text/Doc VQA & AI2D, OCRBench, TextVQA  \\ \hline
Multimodal Reasoning & MMMU, RealWorldQA, MMStar, MathVision  \\ \hline
Image Captioning & Coco, Flickr30k  \\ \hline
\end{tabular}
\caption{Dataset Organization by task}
\label{table:datasets}
\end{table}

\subsection{Evaluation method of each dataset}
We split our datasets into two groups depending on whether they contain MCQ questions that can be answered in one token or not. The MCQ datasets include Visual7W, AI2D, MMMU, MMStar, and a subset of MathVision. We used an LLM-as-a-judge approach to evaluate the other datasets. These include: VQA, GQA, TextVQA, OCRBench, RealWorldQA, and a subset of MathVision.

For the MCQ datasets, we ran a forward pass to generate exactly the predicted letter (A, B, C, D). We then directly compared the predicted letter to the correct letter. Some of the datasets included Yes/No questions, and these were evaluated the same way.

For the LLM-as-a-judge datasets, we generated 256 tokens. We then used GPT-5 \citep{gpt5systemcard} to evaluate if the predicted answer was correct given the question and correct answer. This approach was beneficial to avoid association reasoning problems that smaller models (13B or less parameters) may have answering complex questions.

\section{Further Results from Section \ref{sec:val_conditions}} 
\label{sec:further_hidden_state}
We include further experiments on General VQA, Text/Doc VQA, Multimodal Reasoning, and Captioning datasets on LLaVA 1.5 7B/13B, LLaVA NeXT 7B Mistral, DeepSeek VL 7B, and Qwen 2.5 VL 7B to validate our results are consistent across task types.

\begin{figure}[H]
    \centering
    \begin{subfigure}[b]{0.4\textwidth}
        \centering
        \includegraphics[width=\textwidth]{./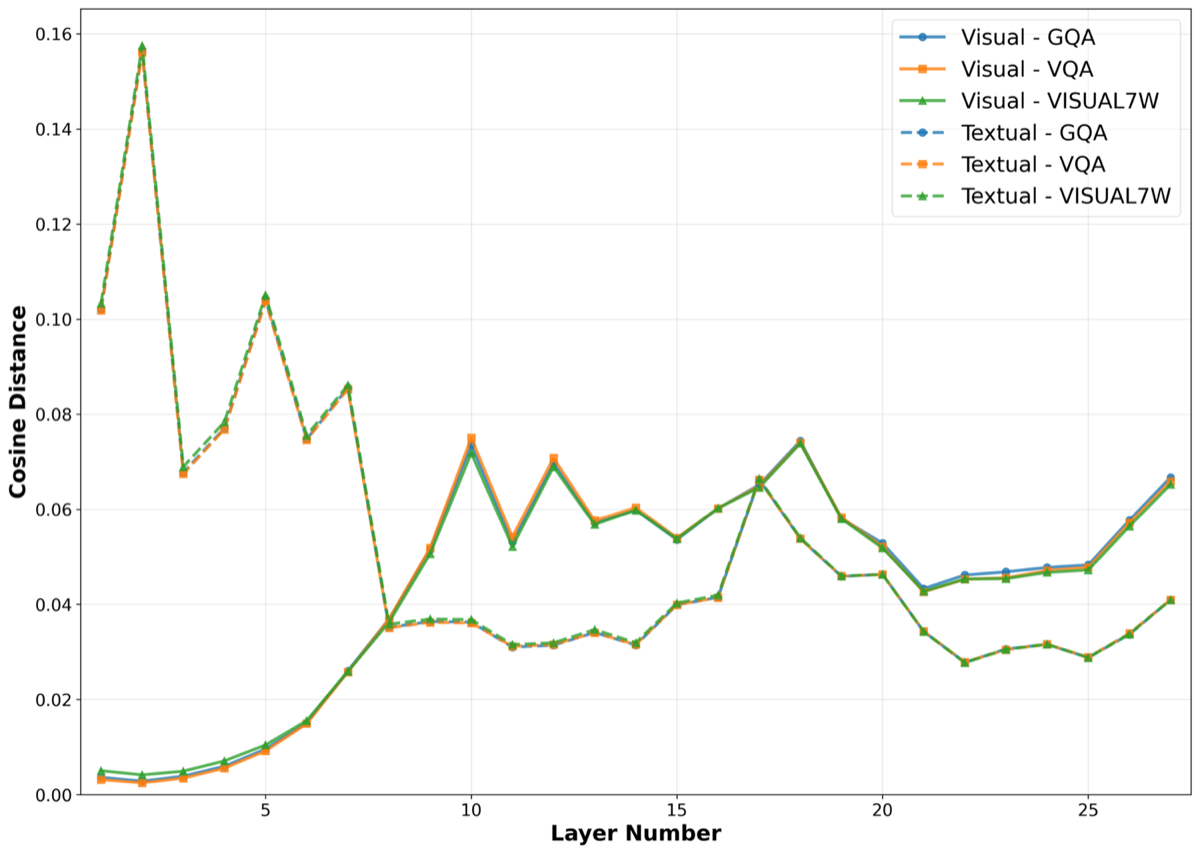}
        \caption{LLaVA NeXT 7B Mistral}
    \end{subfigure}
    \hfill
    \begin{subfigure}[b]{0.4\textwidth}
        \centering
        \includegraphics[width=\textwidth]{./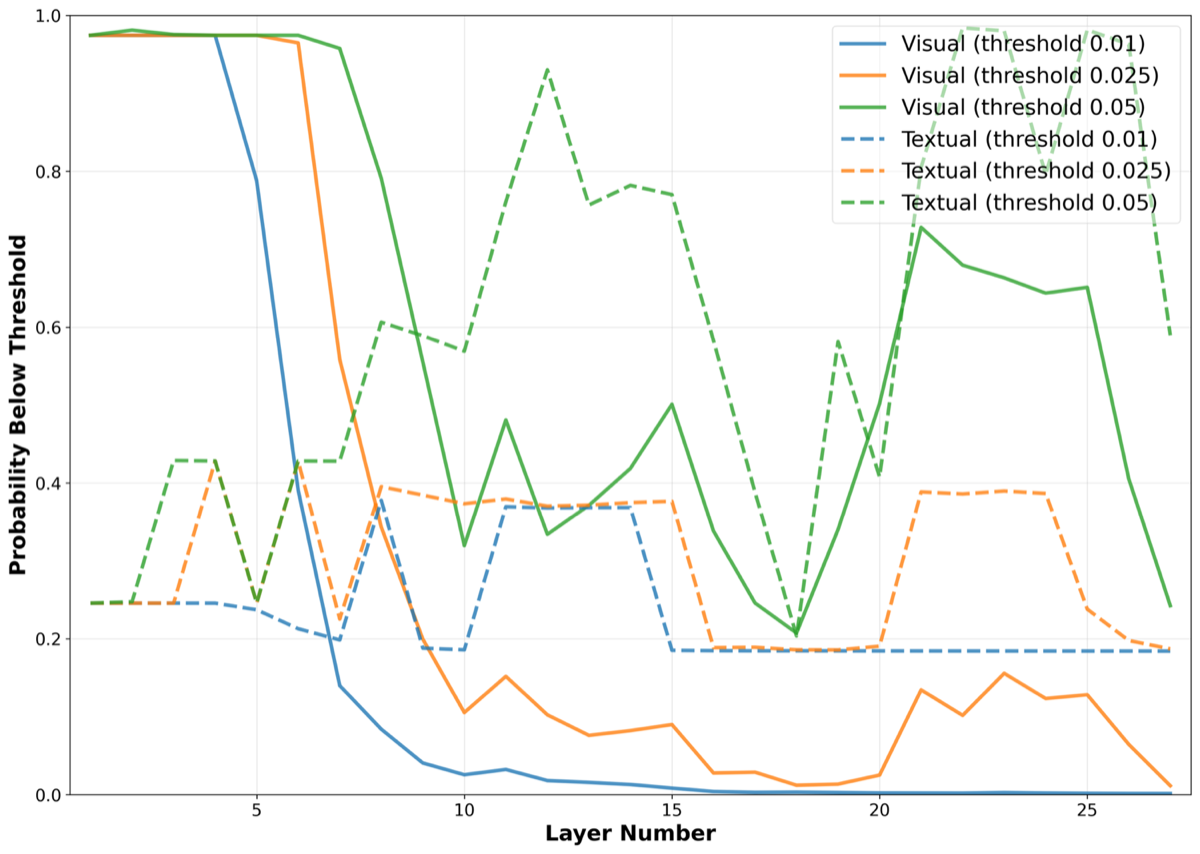}
        \caption{LLaVA NeXT 7B Mistral}
    \end{subfigure}
    \caption{Empirical Geometric and Proximal Redundancy versus layer for LlaVA NeXT 7B Mistral. Across all datasets in the Text/Doc VQA task (see Table \ref{table:datasets}) and models, the early and late layer vision tokens have low adjacent token cosine distances.}
\end{figure}

\begin{figure}[H]
    \centering
    \begin{subfigure}[b]{0.495\textwidth}
        \centering
        \includegraphics[width=\textwidth]{./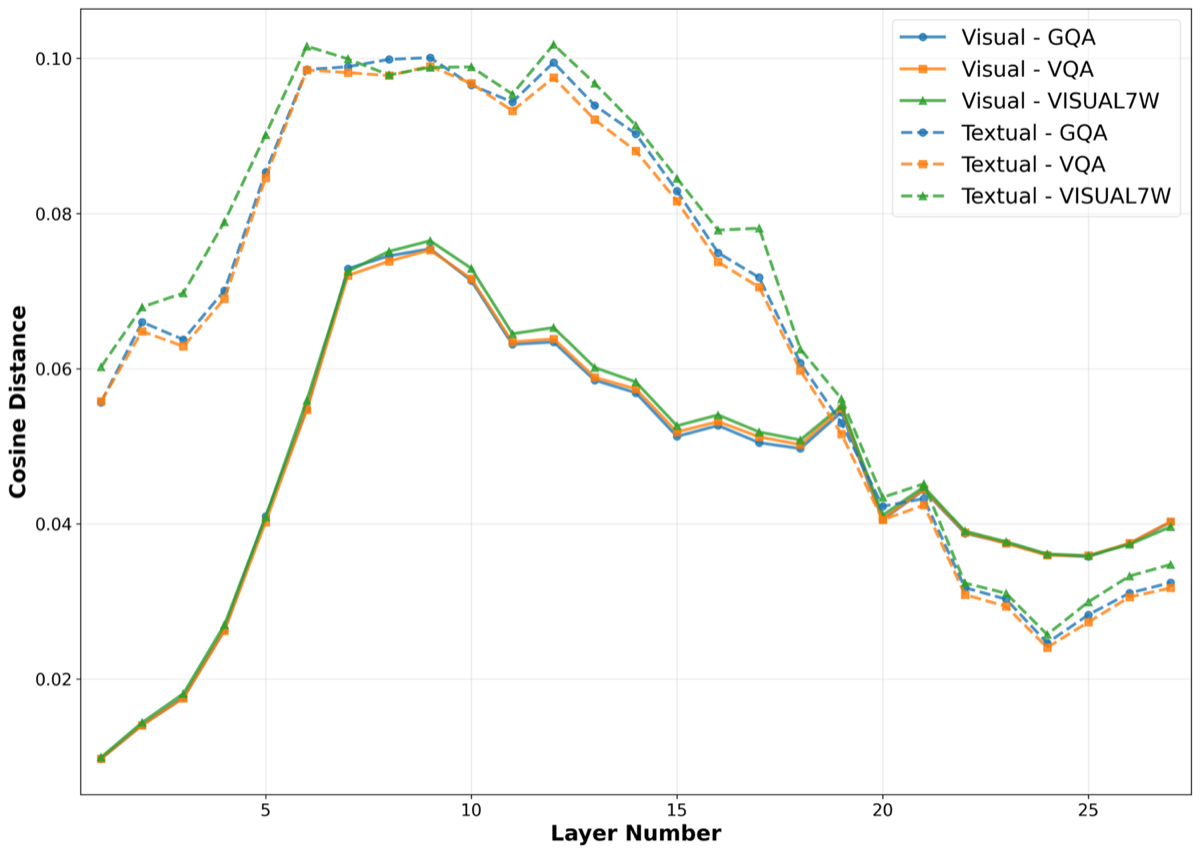}
        \caption{DeepSeek VL 7B Empirical Geometric Redundancy}
    \end{subfigure}
    \begin{subfigure}[b]{0.495\textwidth}
        \centering
        \includegraphics[width=\textwidth]{./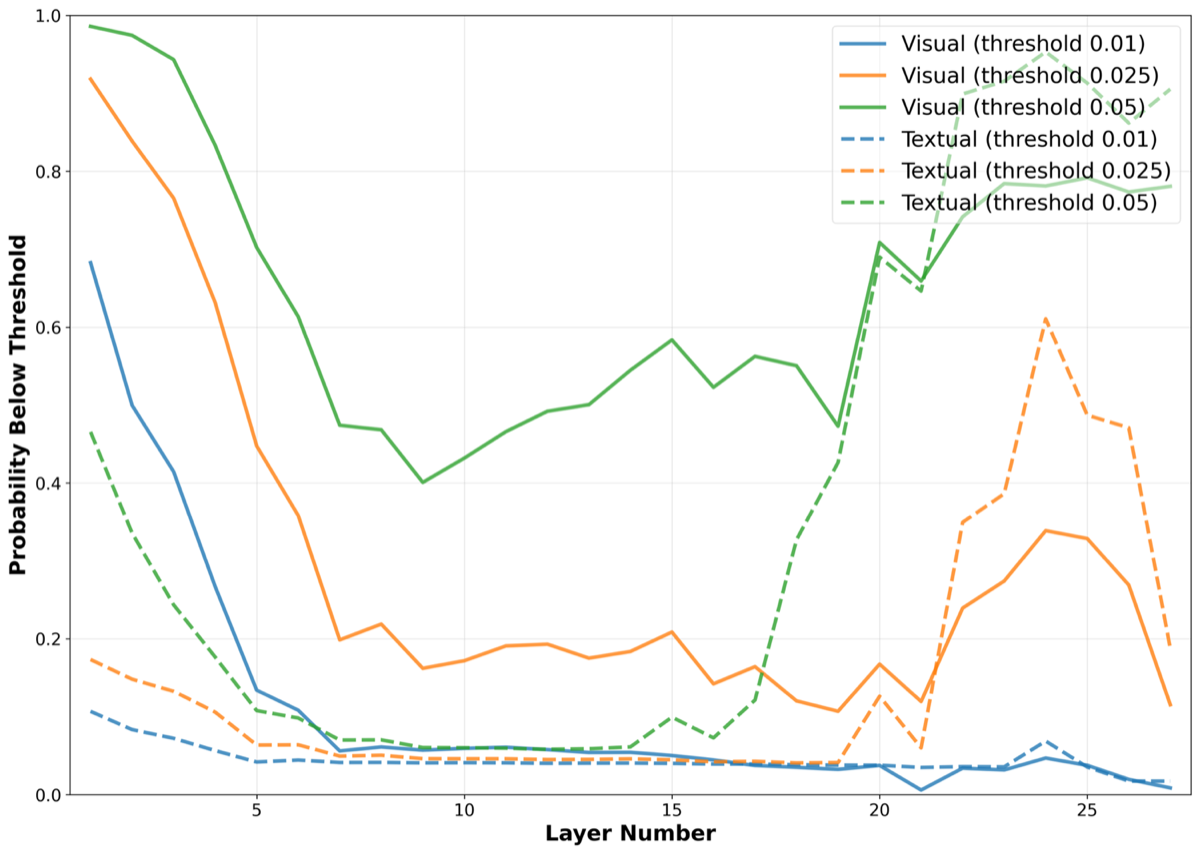}
        \caption{DeepSeek VL 7B Empirical Proximal Redundancy}
    \end{subfigure}

    \vspace{0.5cm}
    \begin{subfigure}[b]{0.495\textwidth}
        \centering
        \includegraphics[width=\textwidth]{./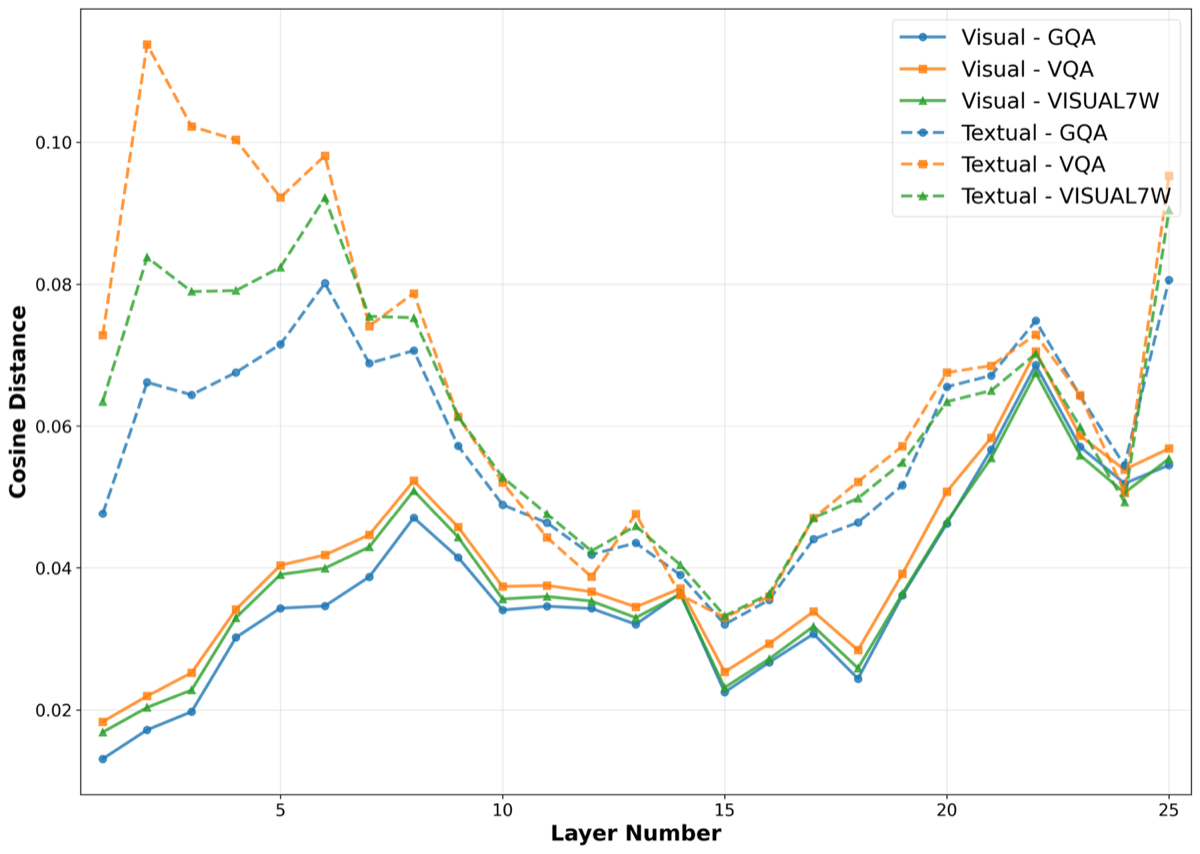}
        \caption{Qwen 2.5 VL 7B Empirical Geometric Redundancy}
    \end{subfigure}
    \begin{subfigure}[b]{0.495\textwidth}
        \centering
        \includegraphics[width=\textwidth]{./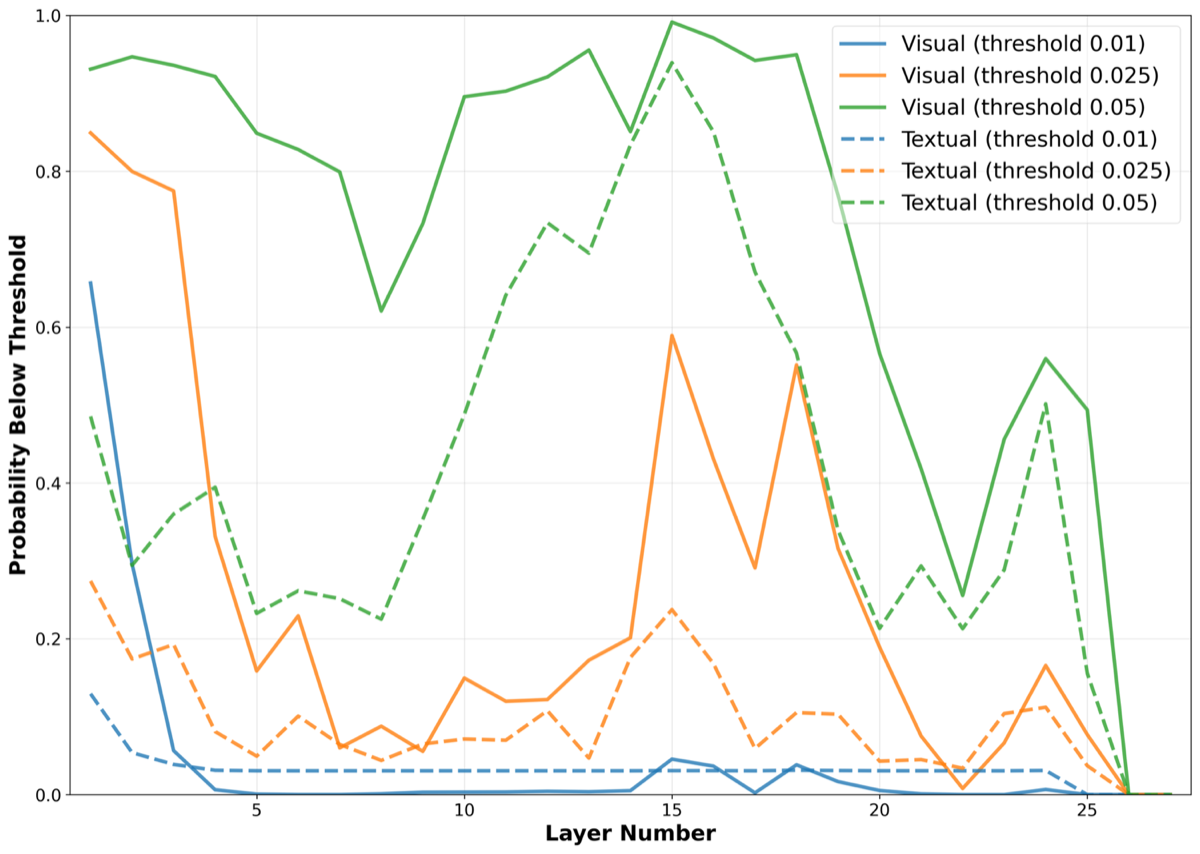}
        \caption{Qwen 2.5 VL 7B Empirical Proximal Redundancy}
    \end{subfigure}
    
    \caption{Empirical Geometric and Proximal Redundancy versus layer for the Qwen 2.5 VL and Deepseek VL 7B VLMs. Across all datasets in the General VQA task (see Table \ref{table:datasets}) and models, the early layer vision tokens have low adjacent token cosine distances, and the textual and visual tokens have low adjacent token cosine distances in later layers.}
    \label{fig:general_vqa_qwen_deep}
\end{figure}
\begin{figure}[H]
    \centering
    \begin{subfigure}[b]{0.495\textwidth}
        \centering
        \includegraphics[width=\textwidth]{./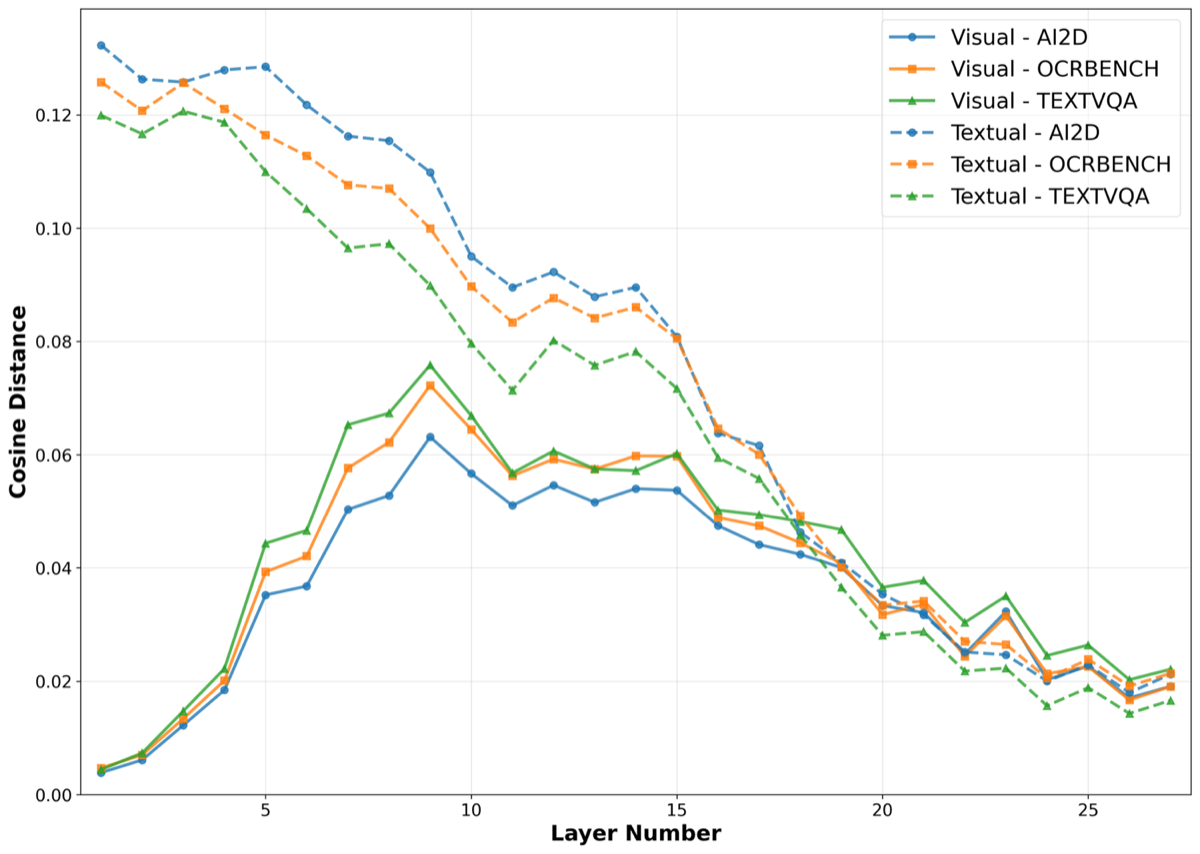}
        \caption{LLaVA 1.5 7B}
    \end{subfigure}
    \begin{subfigure}[b]{0.495\textwidth}
        \centering
        \includegraphics[width=\textwidth]{./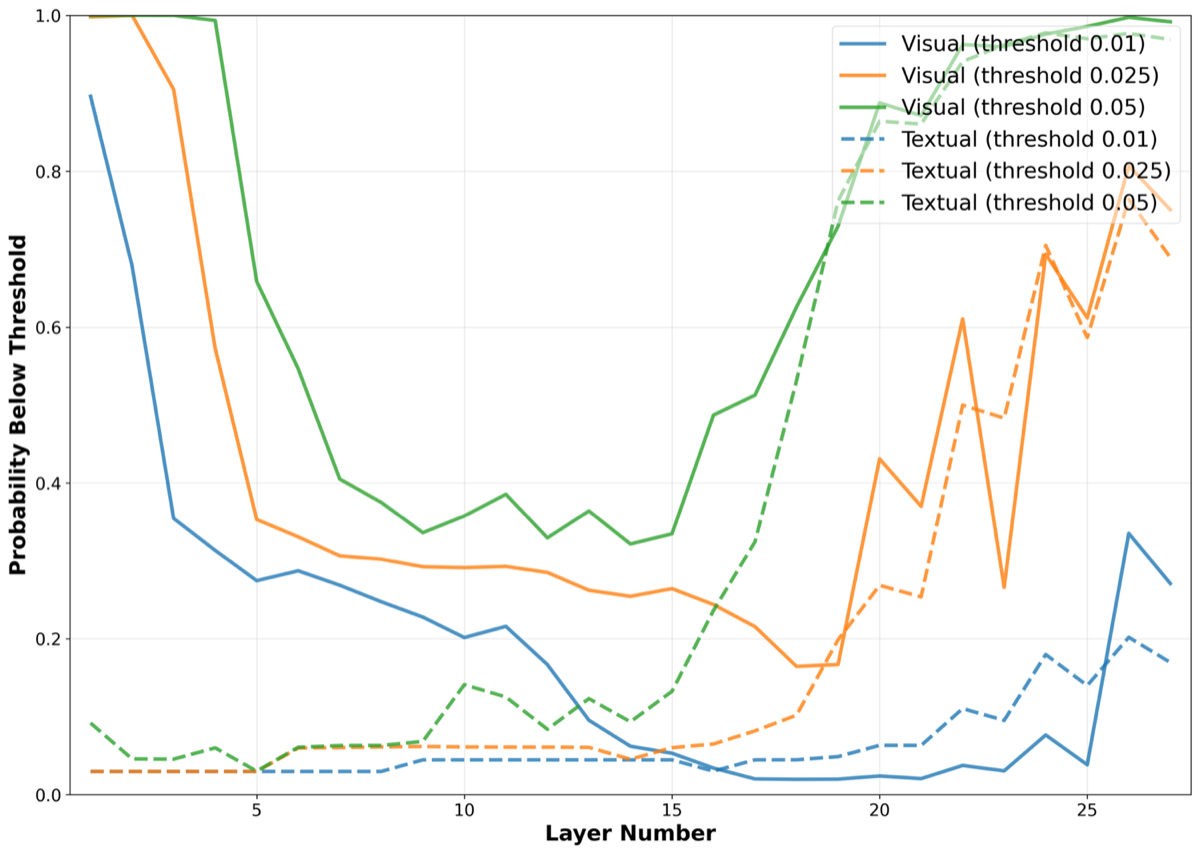}
        \caption{LLaVA 1.5 7B}
    \end{subfigure}
    
    \hfill
    \begin{subfigure}[b]{0.495\textwidth}
        \centering
        \includegraphics[width=\textwidth]{./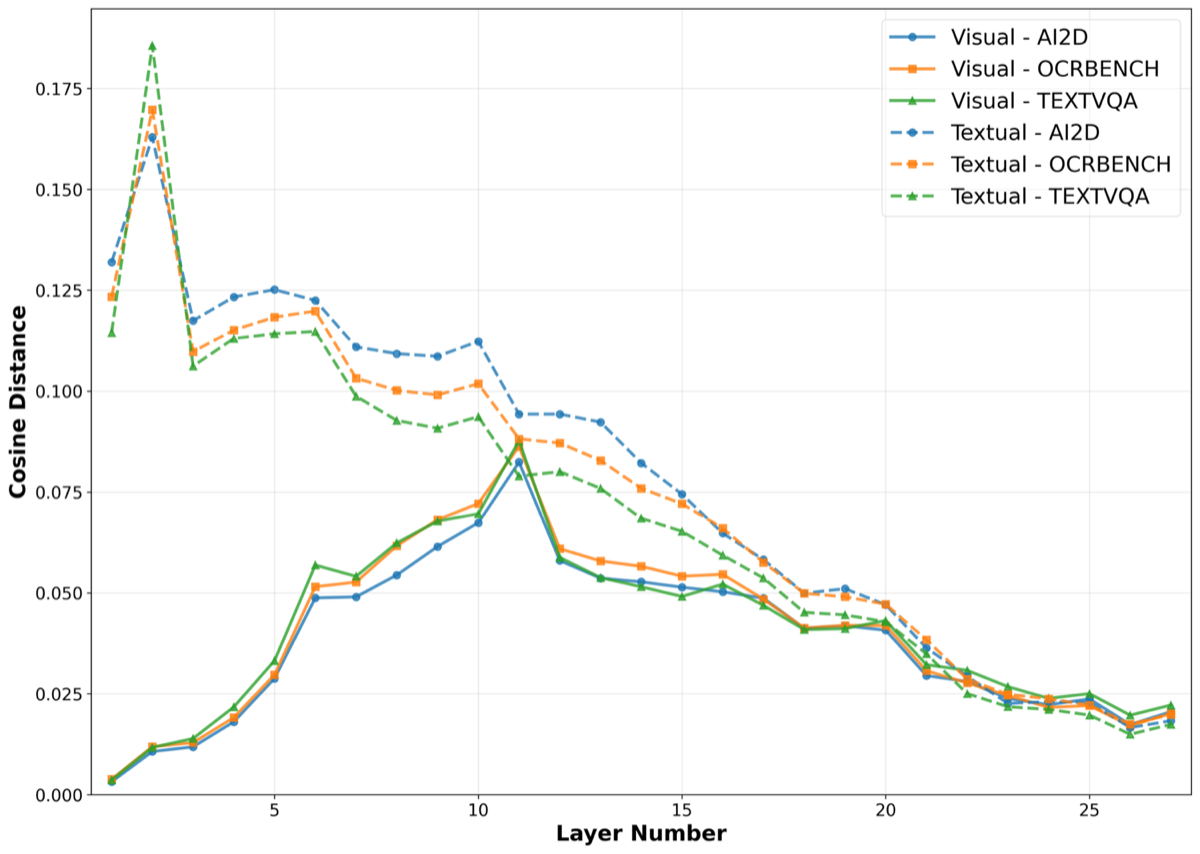}
        \caption{LLaVA 1.5 13B}
    \end{subfigure}
    \begin{subfigure}[b]{0.495\textwidth}
        \centering
        \includegraphics[width=\textwidth]{./plots/probabilities_vs_layers_llava_1.5_13b_General_VQA.png}
        \caption{LLaVA 1.5 13B}
    \end{subfigure}
    \hfill
    
    \begin{subfigure}[b]{0.495\textwidth}
        \centering
        \includegraphics[width=\textwidth]{./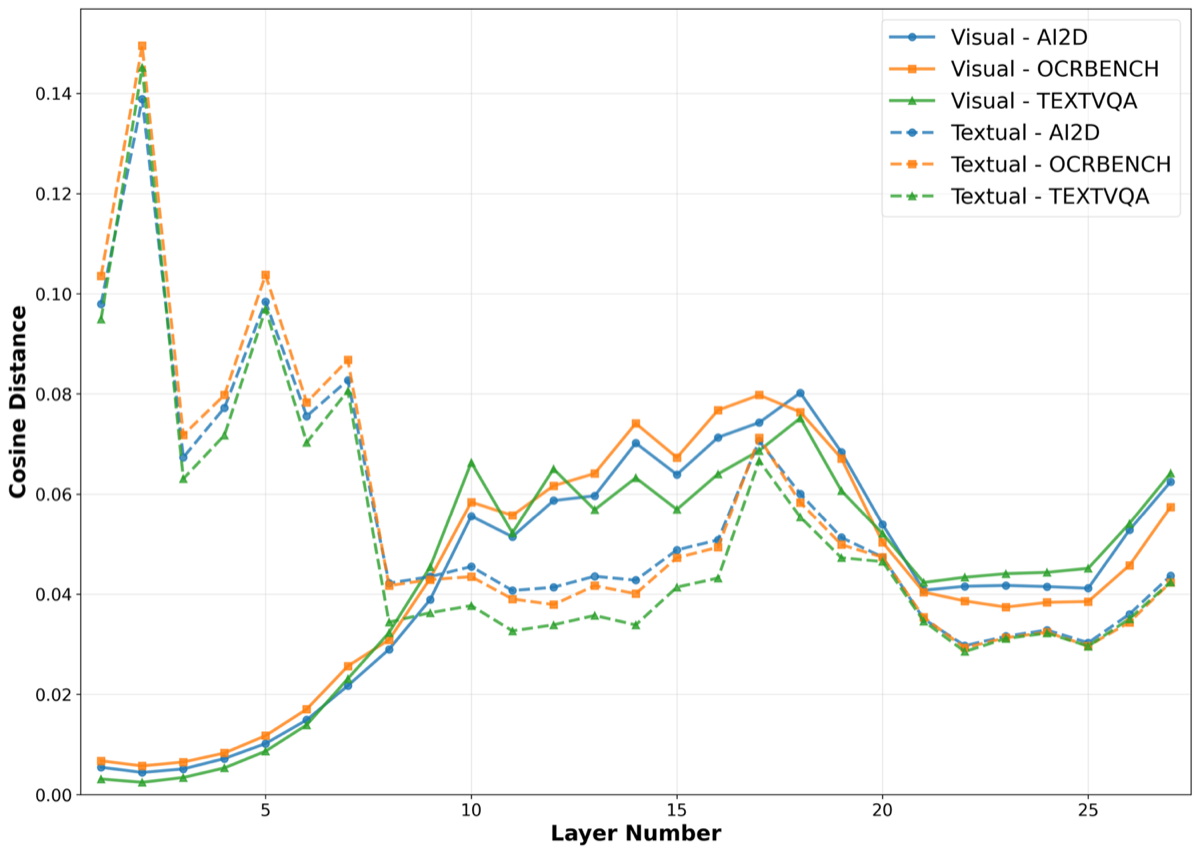}
        \caption{LLaVA NeXT 7B Mistral}
    \end{subfigure}
    \begin{subfigure}[b]{0.495\textwidth}
        \centering
        \includegraphics[width=\textwidth]{./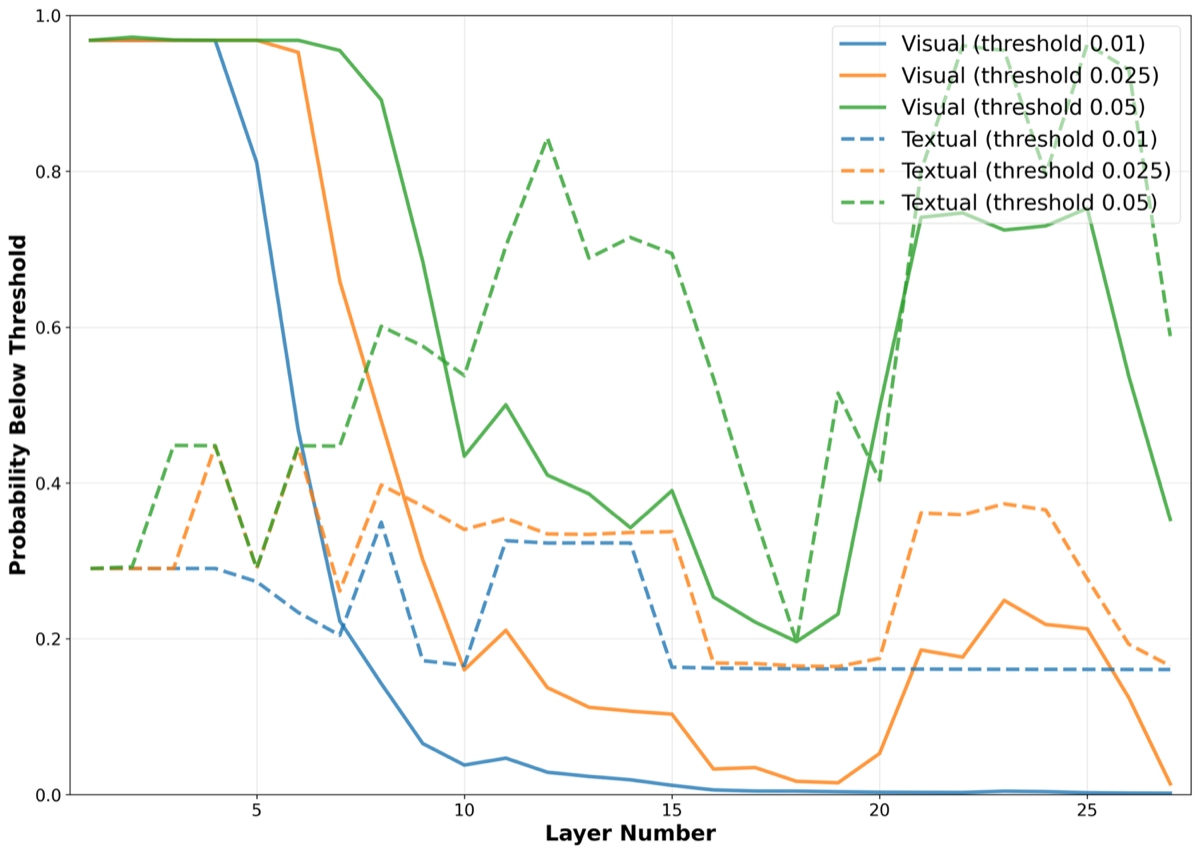}
        \caption{LLaVA NeXT 7B mistral}
    \end{subfigure}
    
    \caption{Empirical Geometric and Proximal Redundancy versus layer for the LlaVA models. Across all datasets in the Text/Doc VQA task (see Table \ref{table:datasets}) and models, the early and late layer vision tokens have low adjacent token cosine distances.}
    \label{fig:hs_comparison_textdoc_llava}
\end{figure}

\begin{figure}[H]
    \centering
    \hfill
    \begin{subfigure}[b]{0.495\textwidth}
        \centering
        \includegraphics[width=\textwidth]{./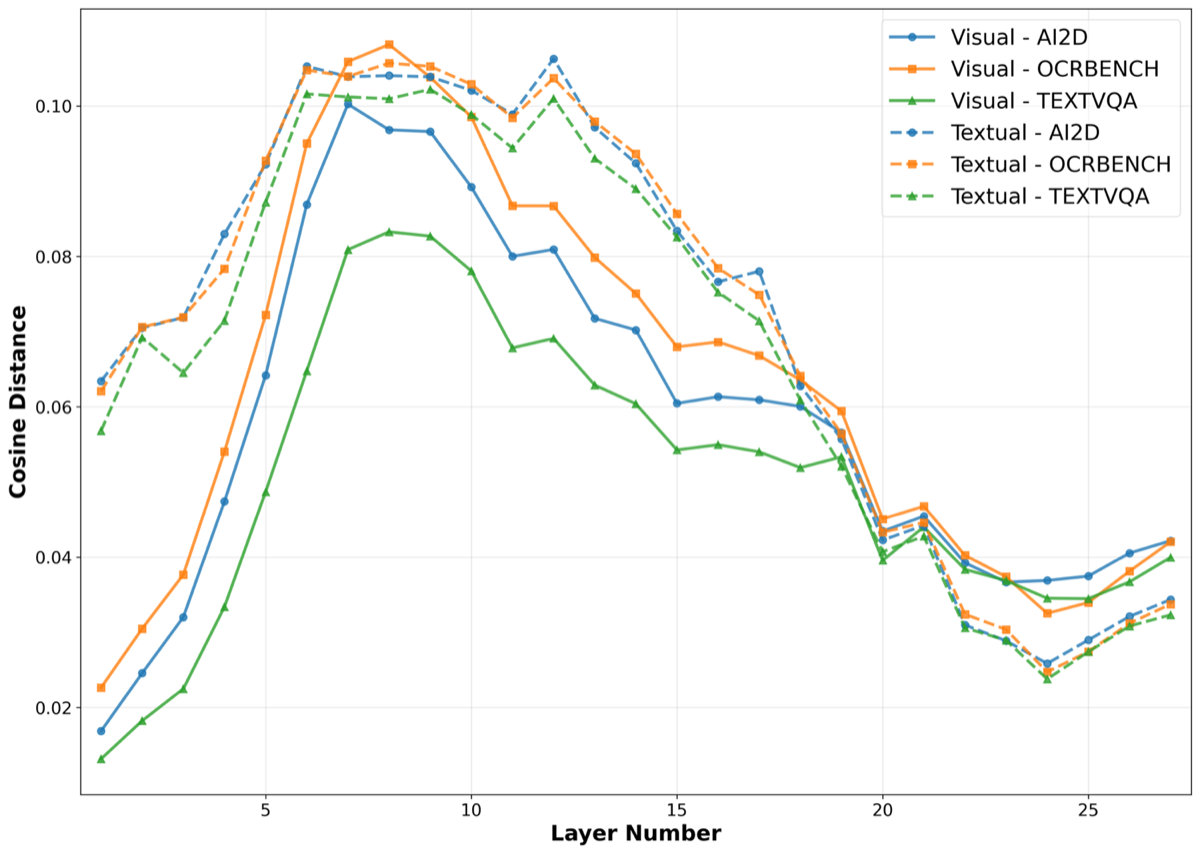}
        \caption{DeepSeek VL 7B}
    \end{subfigure}
    \begin{subfigure}[b]{0.495\textwidth}
        \centering
        \includegraphics[width=\textwidth]{./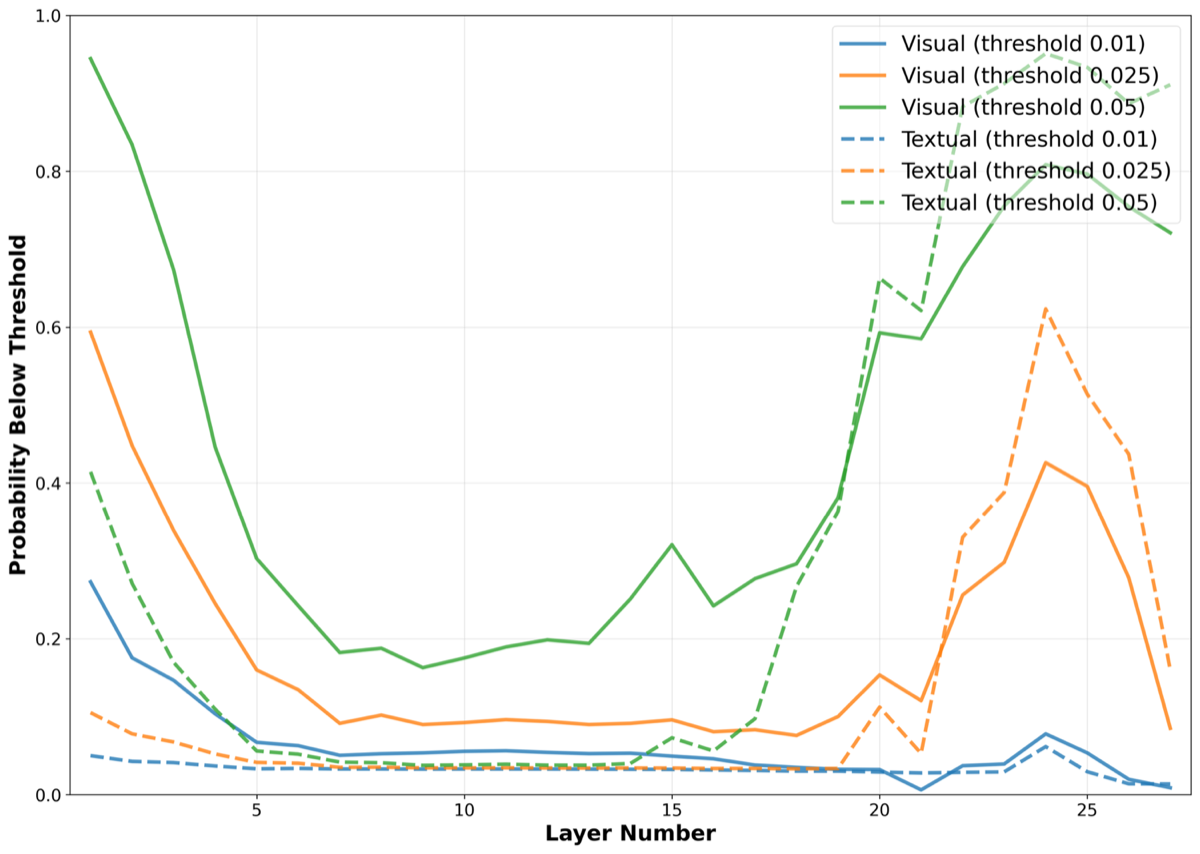}
        \caption{Deepseek VL 7B}
    \end{subfigure}
    \hfill
    \begin{subfigure}[b]{0.495\textwidth}
        \centering
        \includegraphics[width=\textwidth]{./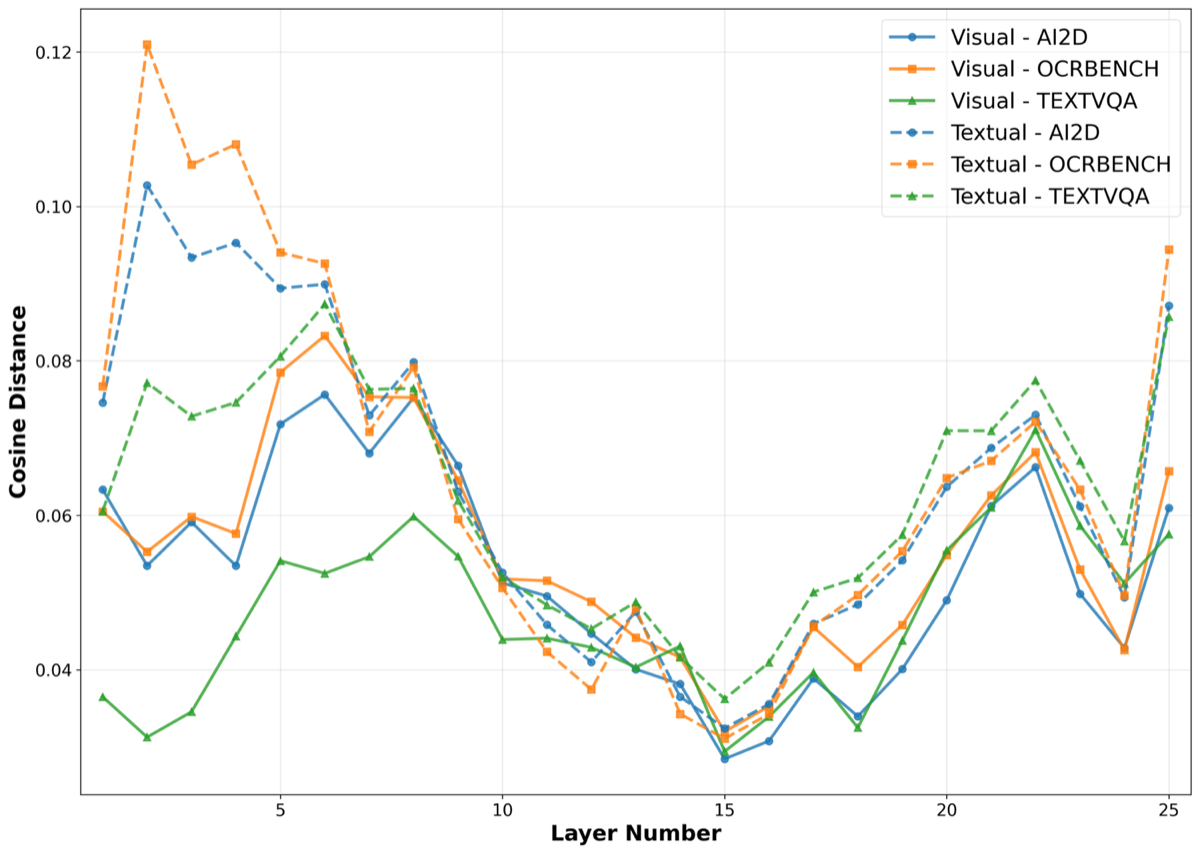}
        \caption{Qwen 2.5 VL 7B}
    \end{subfigure}
    \begin{subfigure}[b]{0.495\textwidth}
        \centering
        \includegraphics[width=\textwidth]{./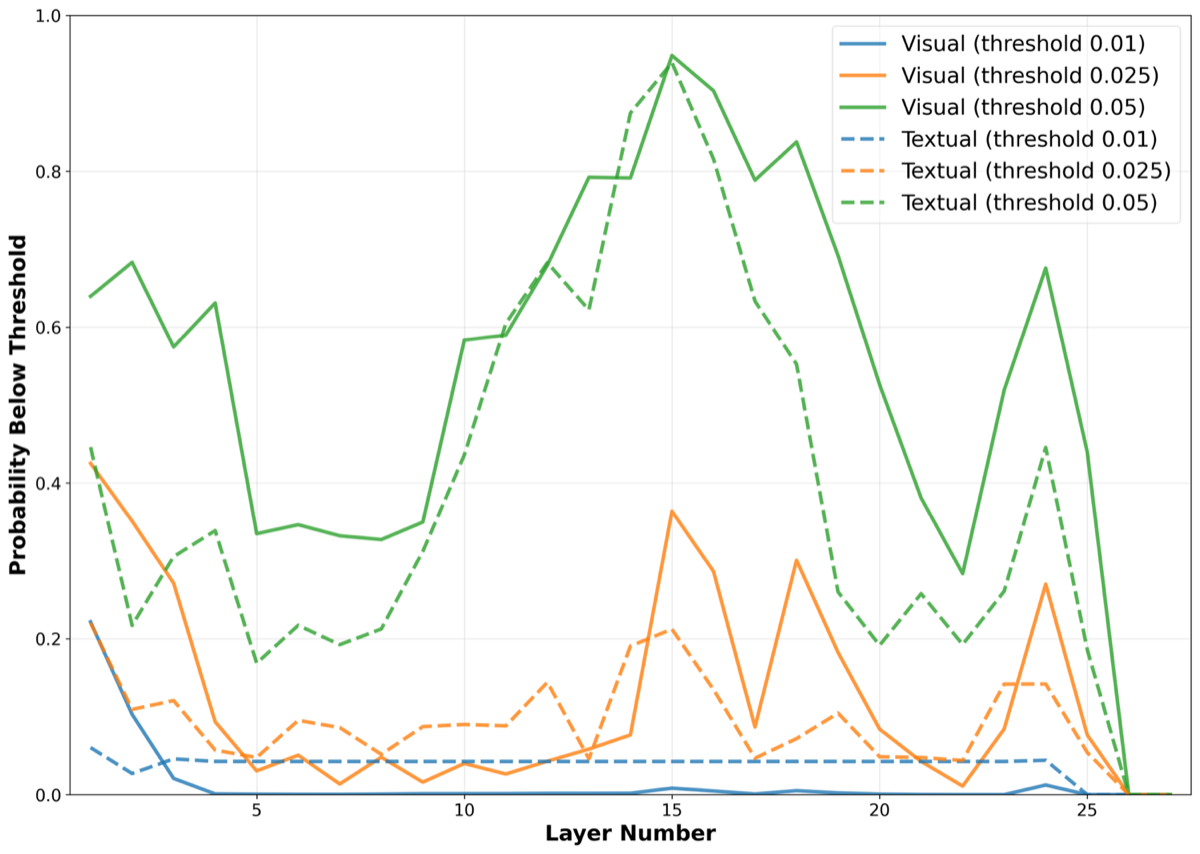}
        \caption{Qwen 2.5 VL 7B}
    \end{subfigure}
    \caption{Empirical Geometric and Proximal Redundancy versus layer for the Qwen 2.5 VL and Deepseek VL 7B VLMs. Across all datasets in the Text/Doc VQA task (see Table \ref{table:datasets}) and models, the early and late layer vision tokens have low adjacent token cosine distances.}
    \label{fig:hs_comparison_textdoc_qwen_deep}
\end{figure}


\begin{figure}[H]
    \centering
    \begin{subfigure}[b]{0.495\textwidth}
        \centering
        \includegraphics[width=\textwidth]{./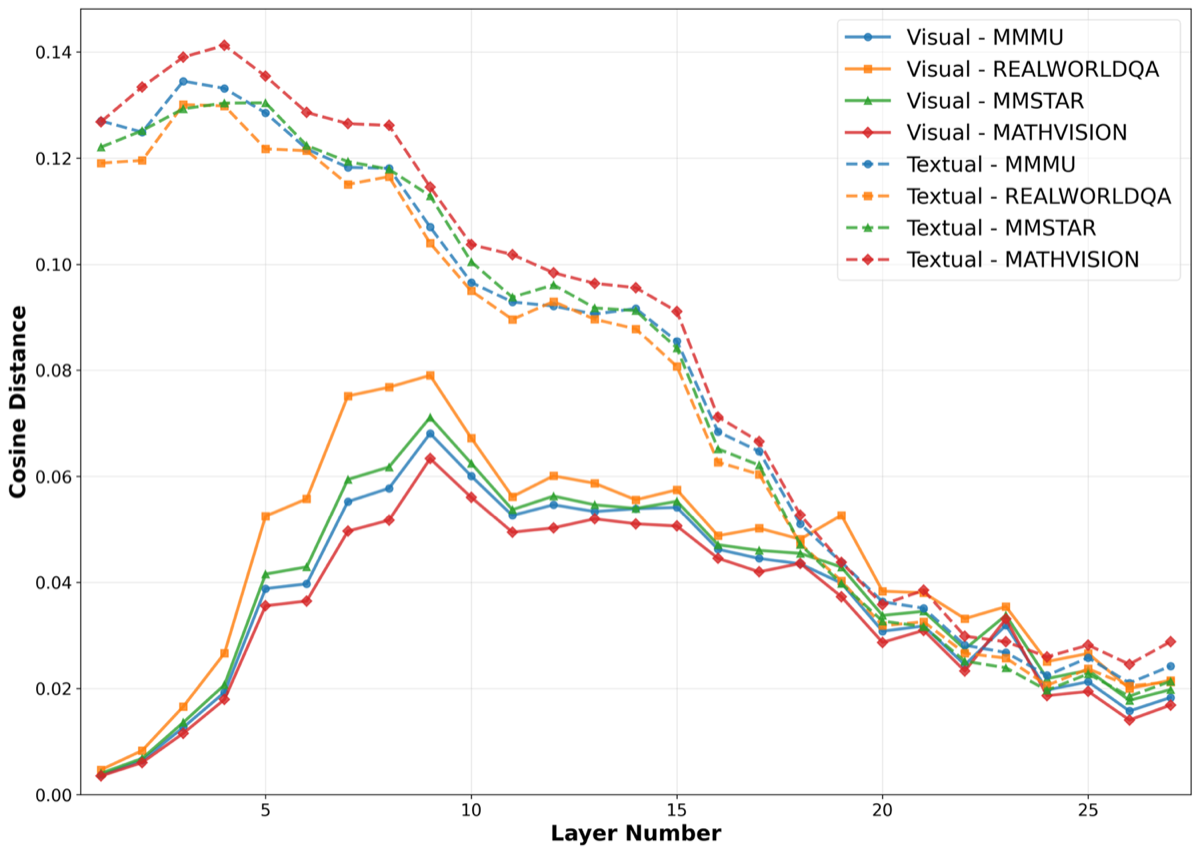}
        \caption{LLaVA 1.5 7B}
    \end{subfigure}
    \begin{subfigure}[b]{0.495\textwidth}
        \centering
        \includegraphics[width=\textwidth]{./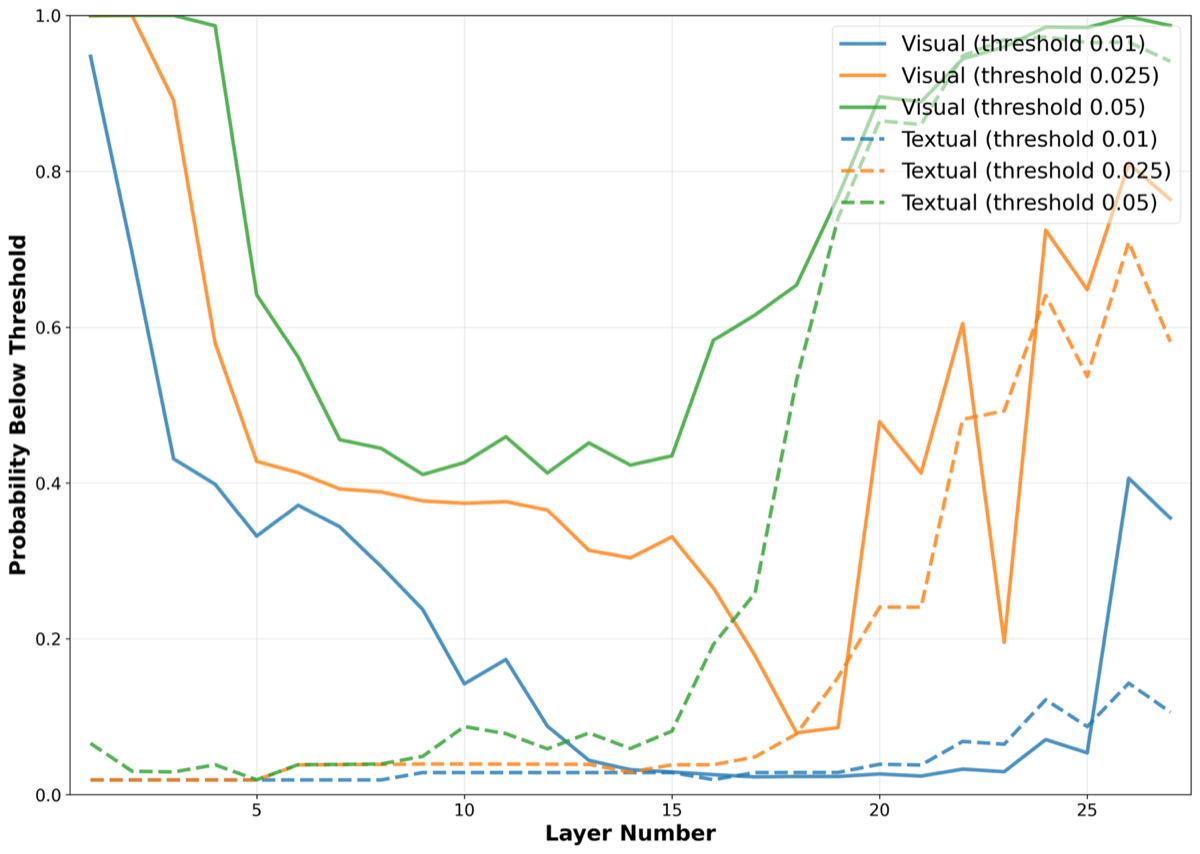}
        \caption{LLaVA 1.5 7B}
    \end{subfigure}
    \hfill
    \begin{subfigure}[b]{0.495\textwidth}
        \centering
        \includegraphics[width=\textwidth]{./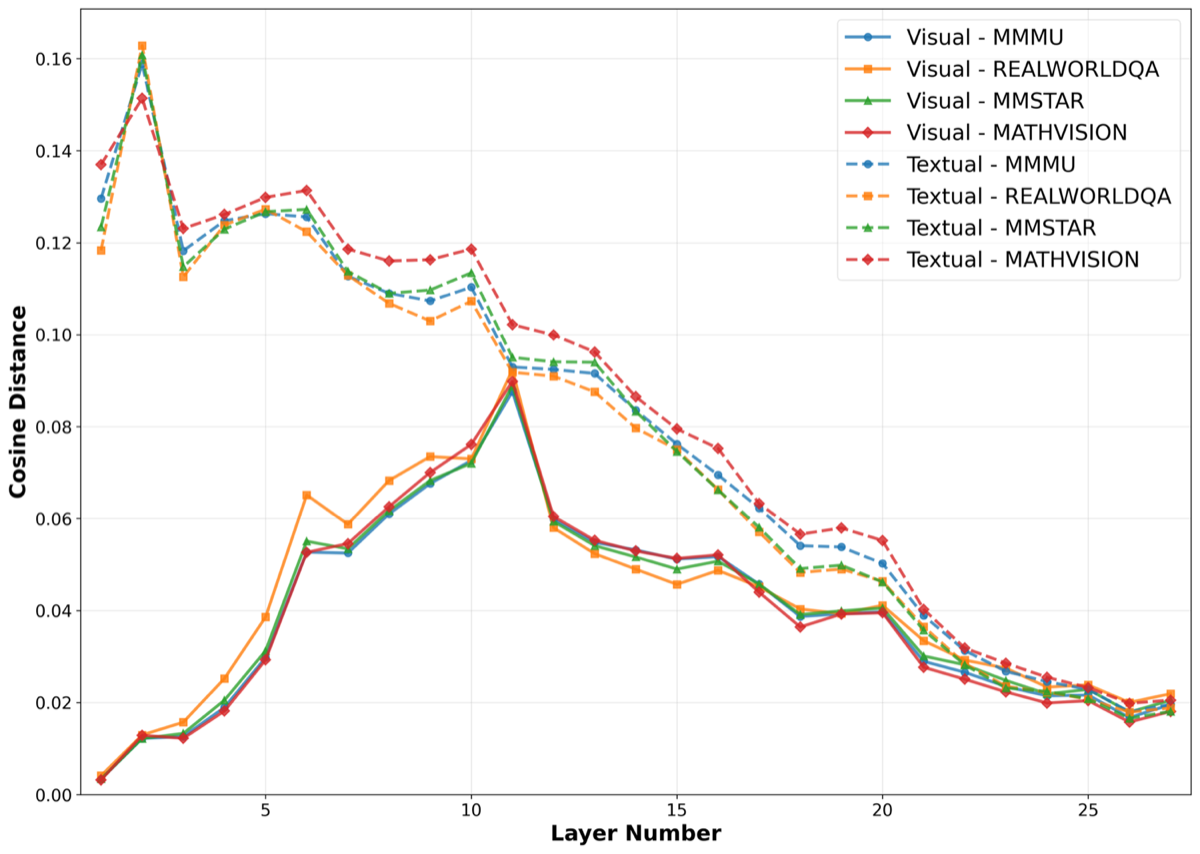}
        \caption{LLaVA 1.5 13B}
    \end{subfigure}
    \begin{subfigure}[b]{0.495\textwidth}
        \centering
        \includegraphics[width=\textwidth]{./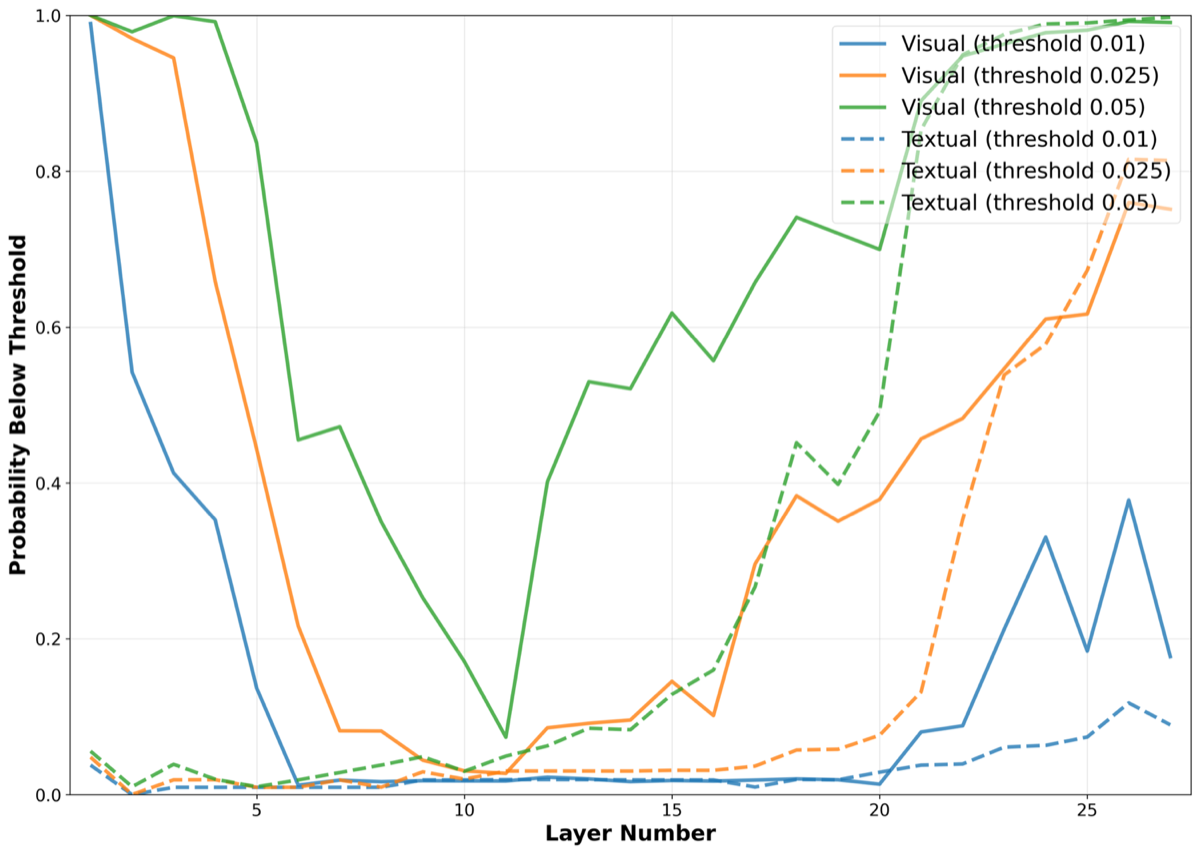}
        \caption{LLaVA 1.5 7B}
    \end{subfigure}
    
    \hfill
    \begin{subfigure}[b]{0.495\textwidth}
        \centering
        \includegraphics[width=\textwidth]{./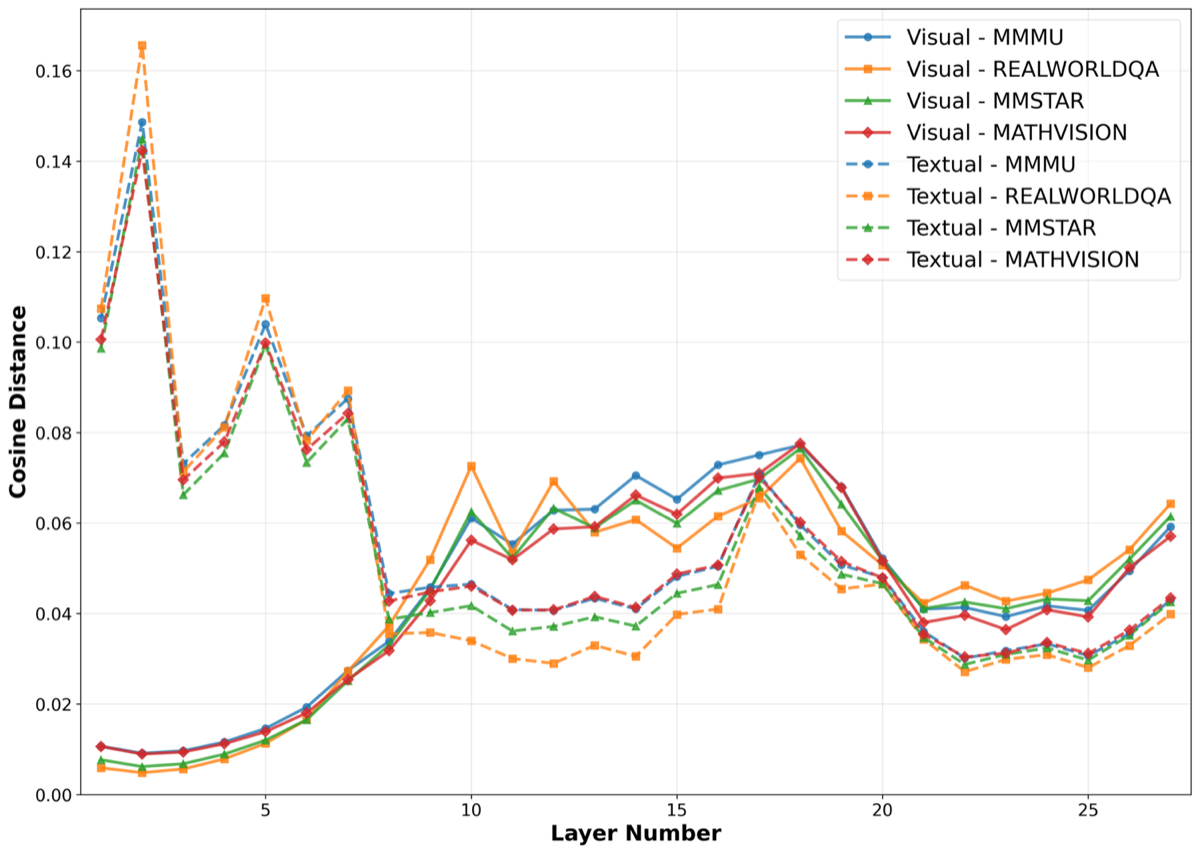}
        \caption{LLaVA NeXT 7B Mistral}
    \end{subfigure}
    \begin{subfigure}[b]{0.495\textwidth}
        \centering
        \includegraphics[width=\textwidth]{./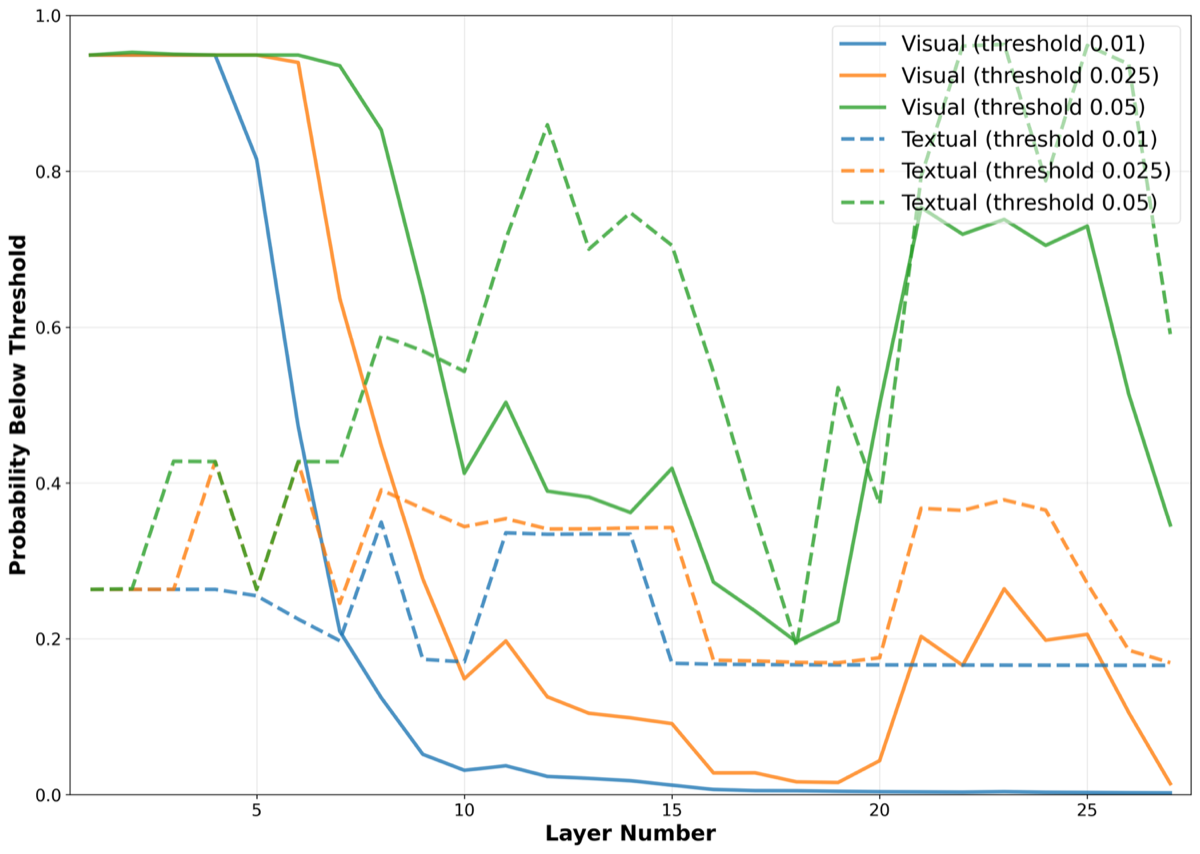}
        \caption{LLaVA NeXT 7B Mistral}
    \end{subfigure}
    \caption{Empirical Geometric Redundancy and Proximal Redundancy between hidden states versus layer. Across all the Multimodal Reasoning task (see Table \ref{table:datasets}) on LLaVA 1.5/1.6, the early and late layer vision tokens have low adjacent token cosine distances.}
    \label{fig:hs_comparison_multimodal_llava}
\end{figure}

\begin{figure}[H]
    \centering
    \hfill
    \begin{subfigure}[b]{0.495\textwidth}
        \centering
        \includegraphics[width=\textwidth]{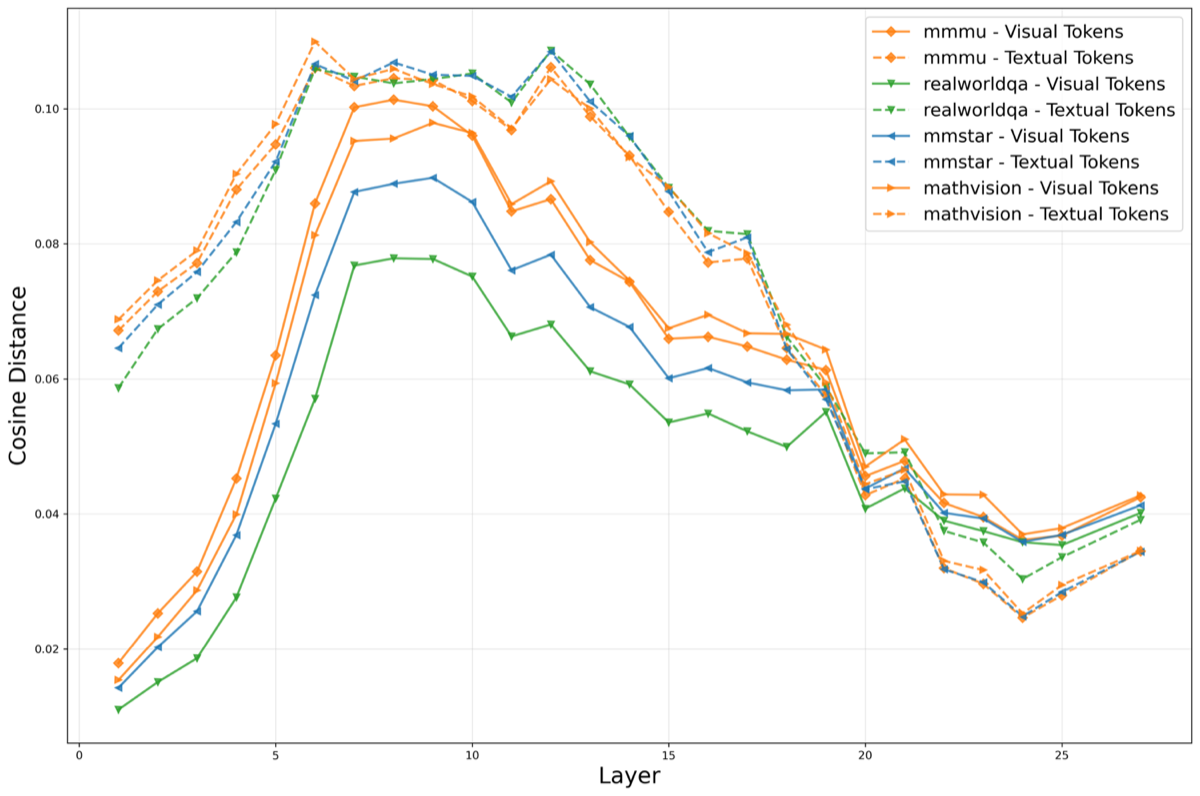}
        \caption{DeepSeek VL 7B}
    \end{subfigure}
    \begin{subfigure}[b]{0.495\textwidth}
        \centering
        \includegraphics[width=\textwidth]{./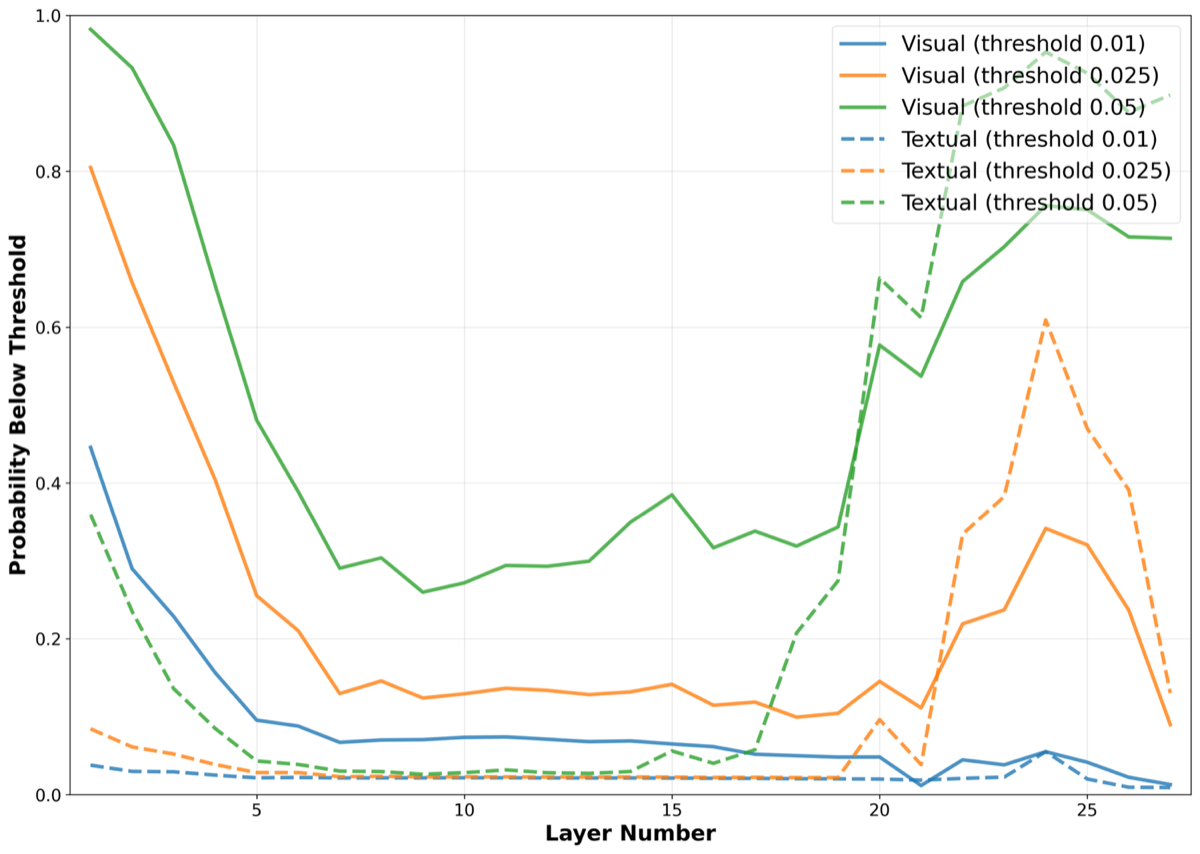}
        \caption{Deepseek VL 7B}
    \end{subfigure}

    \hfill
    \begin{subfigure}[b]{0.495\textwidth}
        \centering
        \includegraphics[width=\textwidth]{./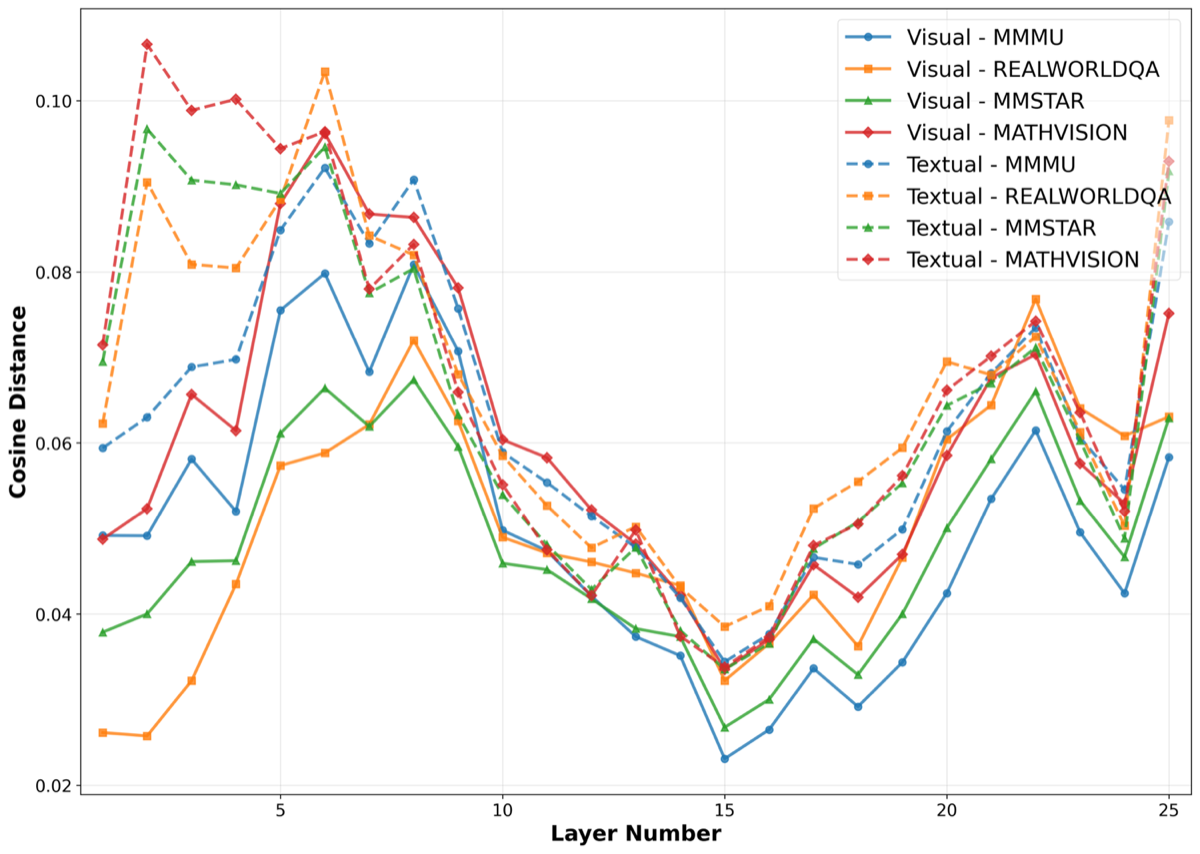}
        \caption{Qwen 2.5 VL 7B Instruct}
    \end{subfigure}
    \begin{subfigure}[b]{0.495\textwidth}
        \centering
        \includegraphics[width=\textwidth]{./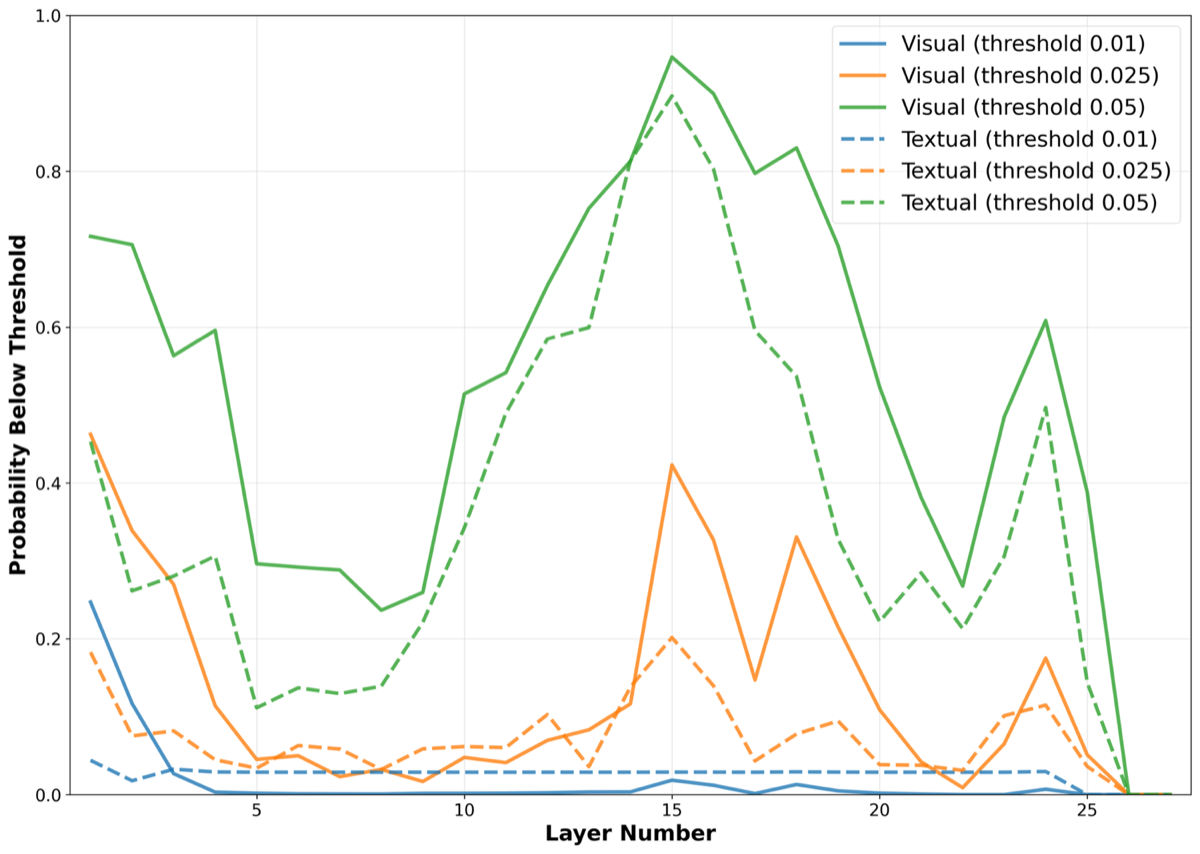}
        \caption{Qwen 2.5 VL 7B Instruct}
    \end{subfigure}
    \caption{Empirical Geometric and Proximal Redundancy versus layer. Across the Multimodal Reasoning task (see Table \ref{table:datasets}) on Qwen 2.5 VL Instruct and DeepSeek VL 7B, the early and late layer vision tokens have low adjacent token cosine distances.}
    \label{fig:hs_comparison_multimodal_qwen_deep}
\end{figure}

\begin{figure}[H]
    \centering
    
    \begin{subfigure}[b]{0.49\textwidth}
        \centering
        \includegraphics[width=\textwidth]{./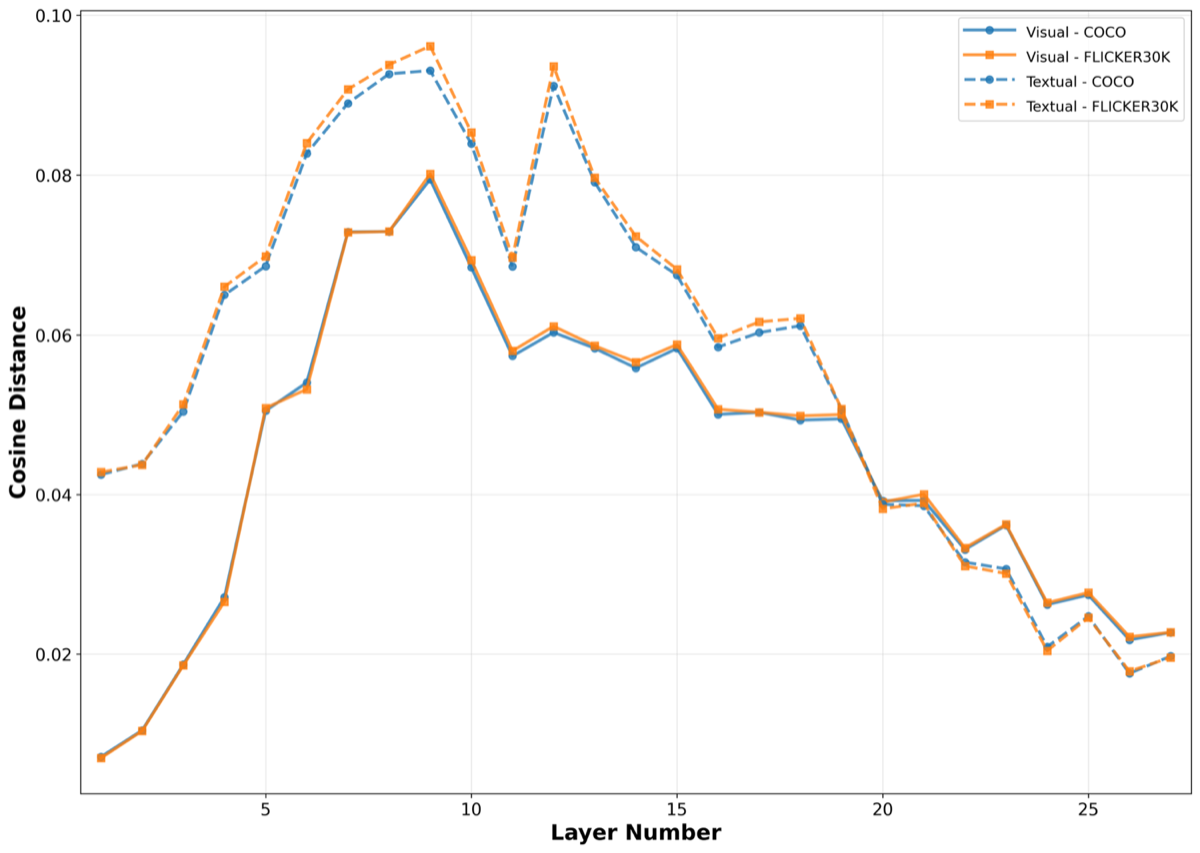}
        \caption{LLaVA 1.5 7B}
    \end{subfigure}
    \hfill
    \begin{subfigure}[b]{0.49\textwidth}
        \centering
        \includegraphics[width=\textwidth]{./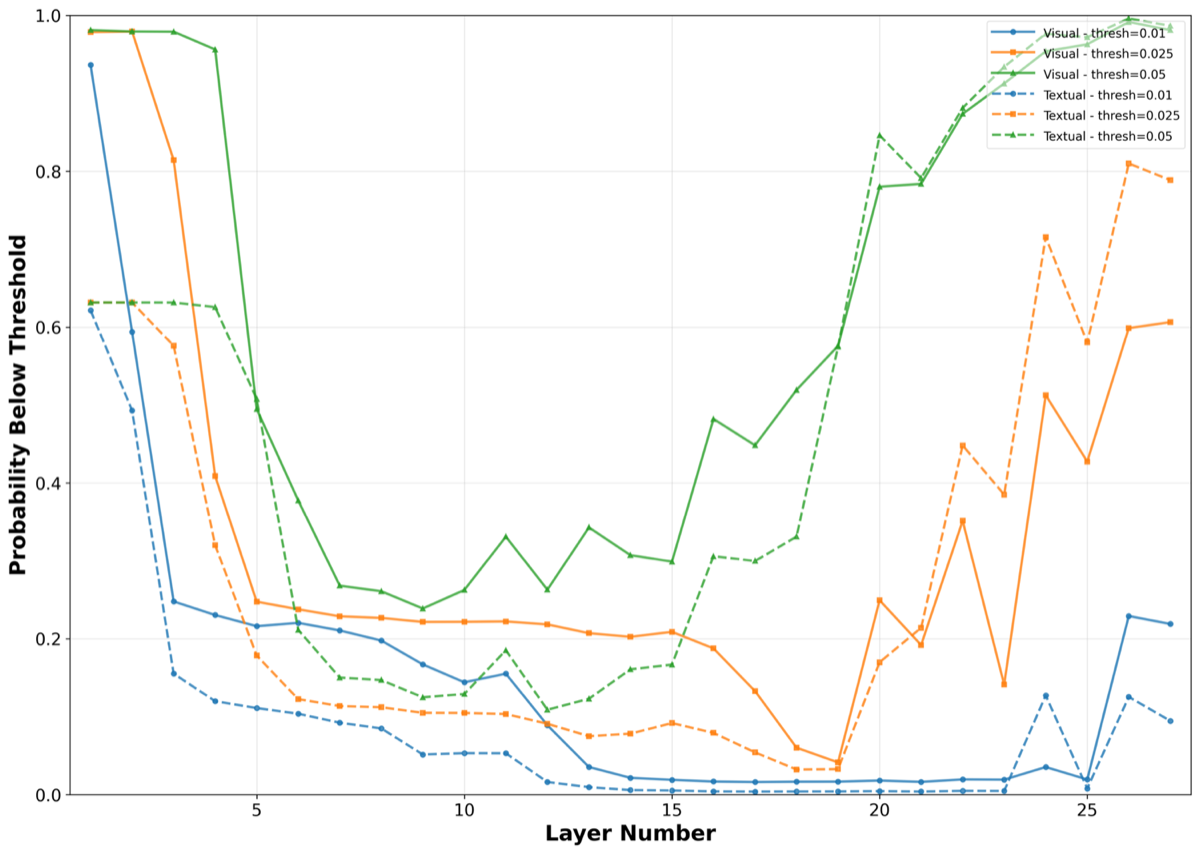}
        \caption{LLaVA 1.5 7B}
    \end{subfigure}
    
    \vspace{1em} 

    \begin{subfigure}[b]{0.49\textwidth}
        \centering
        \includegraphics[width=\textwidth]{./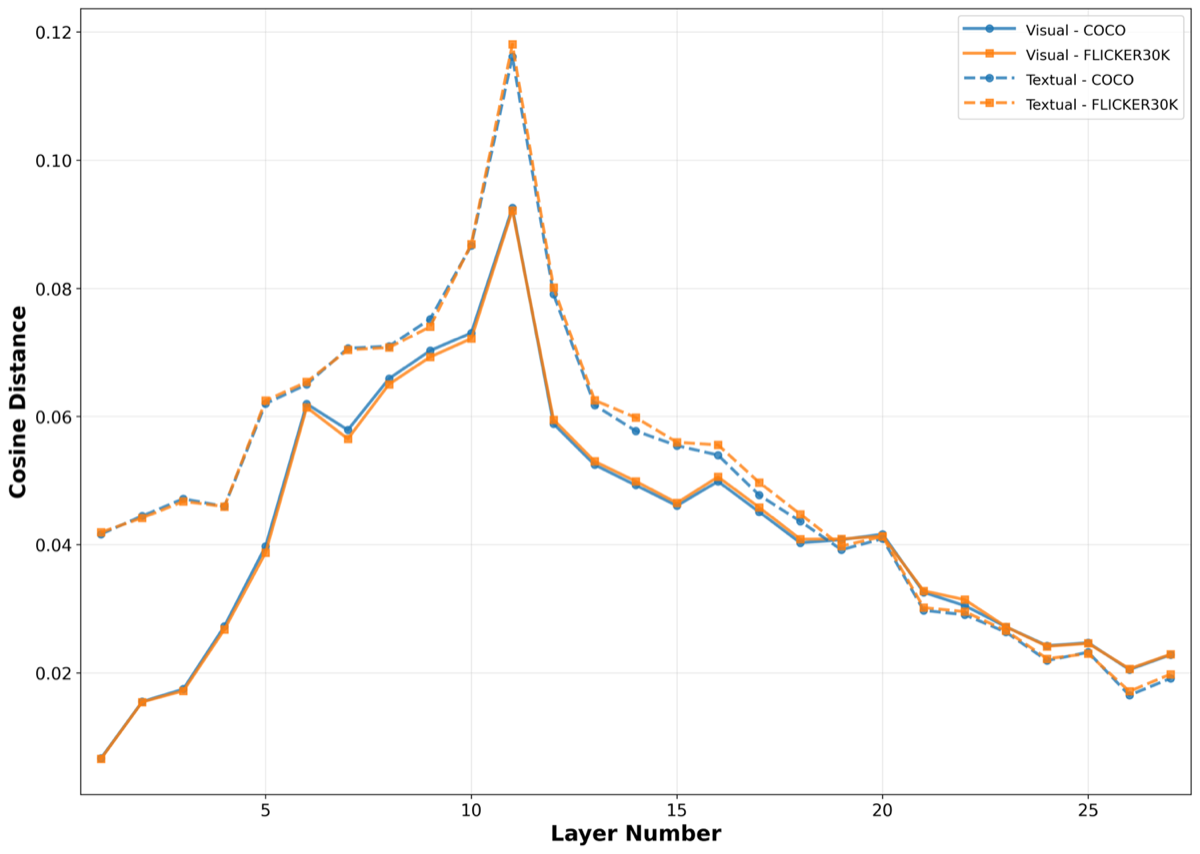}
        \caption{LLaVA 1.5 13B}
    \end{subfigure}
    \hfill
    \begin{subfigure}[b]{0.49\textwidth}
        \centering
        \includegraphics[width=\textwidth]{./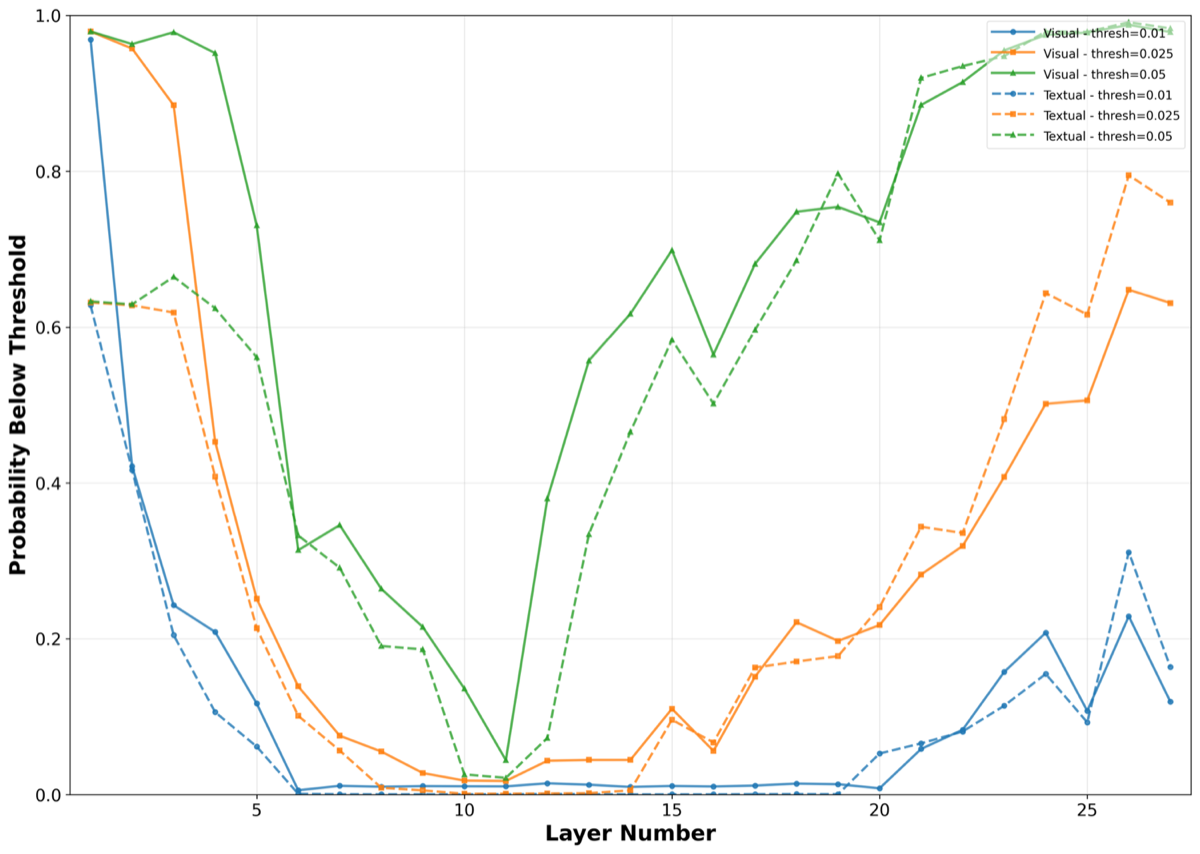}
        \caption{LLaVA 1.5 13B}
    \end{subfigure}

    \caption{Empirical Geometric and Proximal Redundancy versus layer on LLaVA 1.5 architectures. Across the Captioning task (see Table \ref{table:datasets}) on LLaVA 1.5 7B and 13B, the early and late layer vision tokens have low adjacent token cosine distances.}
    \label{fig:captioning_redundancy_llava}
\end{figure}

\begin{figure}[H]
    \centering

    \begin{subfigure}[b]{0.49\textwidth}
        \centering
        \includegraphics[width=\textwidth]{./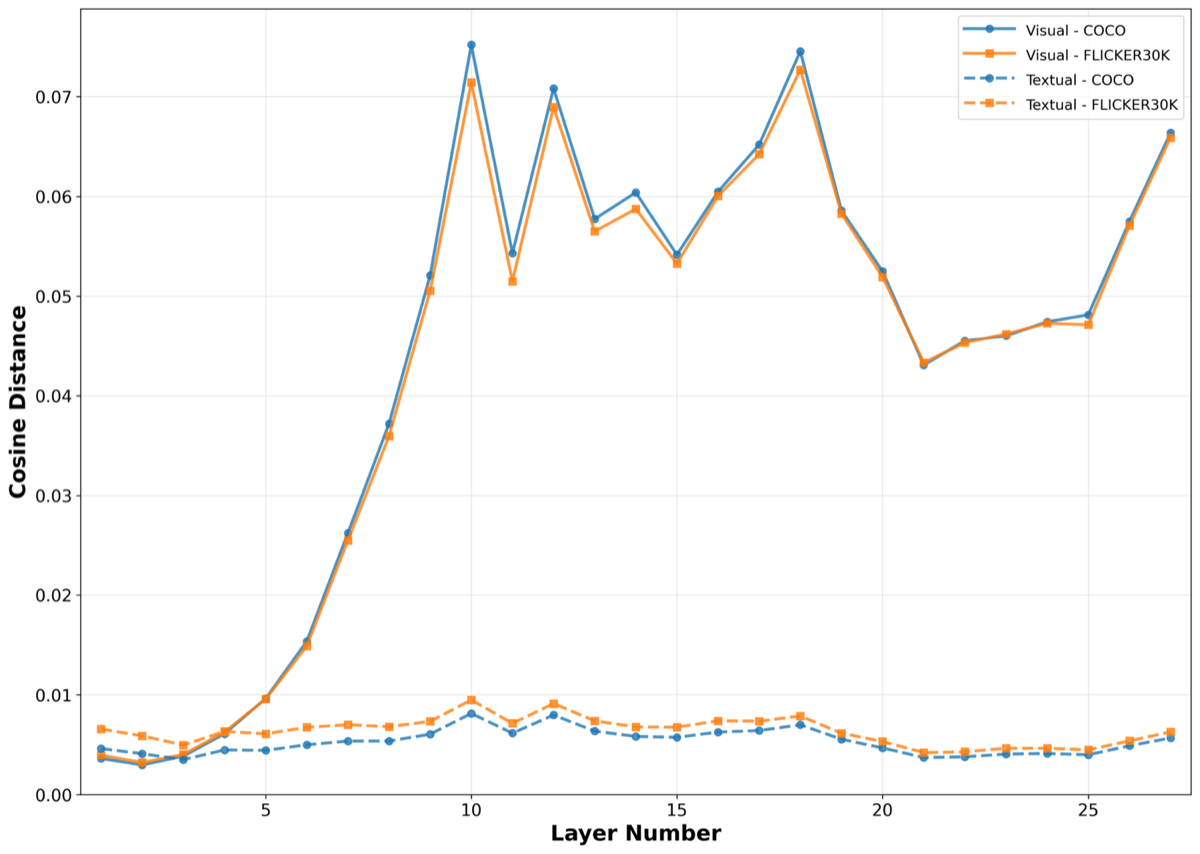}
        \caption{LLaVA NeXT 7B Mistral}
    \end{subfigure}
    \hfill
    \begin{subfigure}[b]{0.49\textwidth}
        \centering
        \includegraphics[width=\textwidth]{./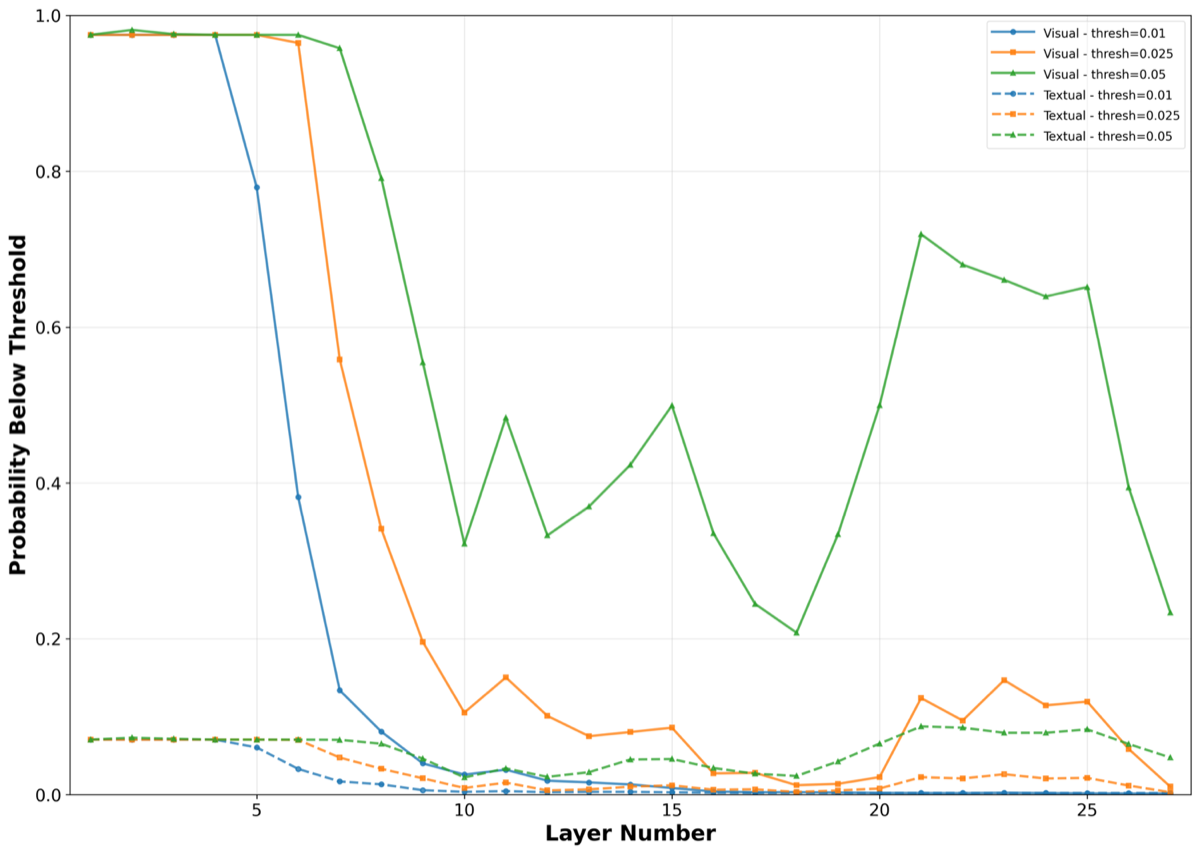}
        \caption{LLaVA NeXT 7B Mistral}
    \end{subfigure}

    \vspace{1em} 

    \begin{subfigure}[b]{0.49\textwidth}
        \centering
        \includegraphics[width=\textwidth]{./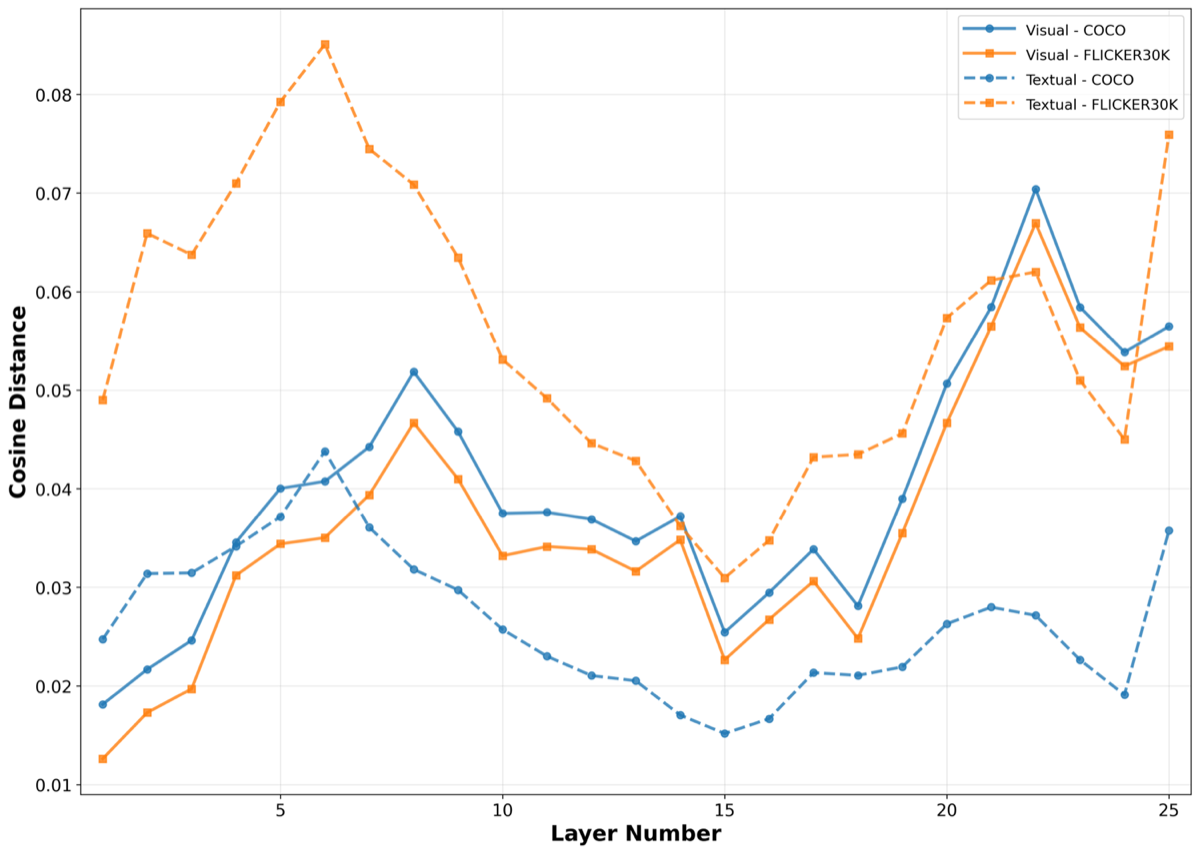}
        \caption{Qwen 2.5 VL 7B Instruct}
    \end{subfigure}
    \hfill
    \begin{subfigure}[b]{0.49\textwidth}
        \centering
        \includegraphics[width=\textwidth]{./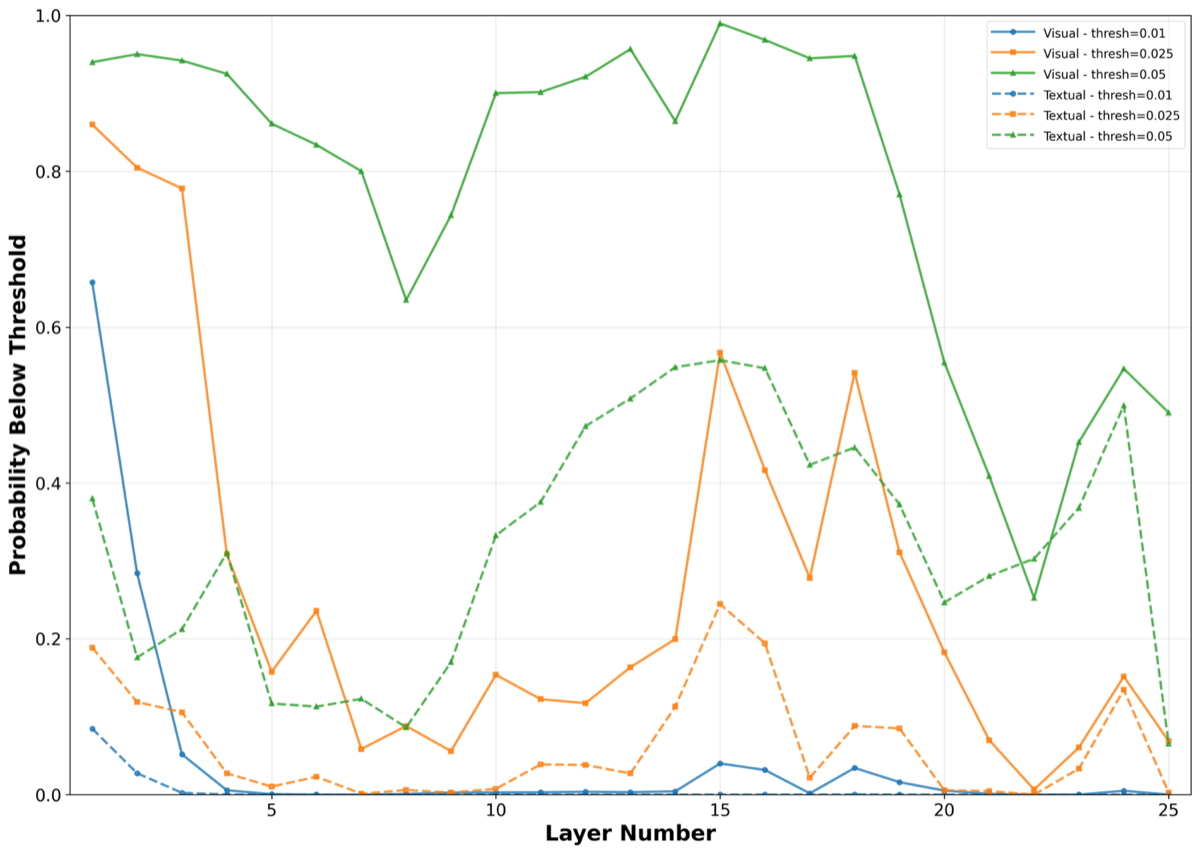}
        \caption{Qwen 2.5 VL 7B Instruct}
    \end{subfigure}

    \caption{Empirical Geometric and Proximal Redundancy versus layer. Across the Captioning task (see Table \ref{table:datasets}) on LLaVA NeXT 7B Mistral and Qwen 2.5 VL Instruct, the early and late layer vision tokens have low adjacent token cosine distances.}
    \label{ig:captioning_redundancy_qwen}
\end{figure}

\begin{figure}[H]
    \centering
    
    \begin{subfigure}[b]{0.49\textwidth}
        \centering
        \includegraphics[width=\textwidth]{./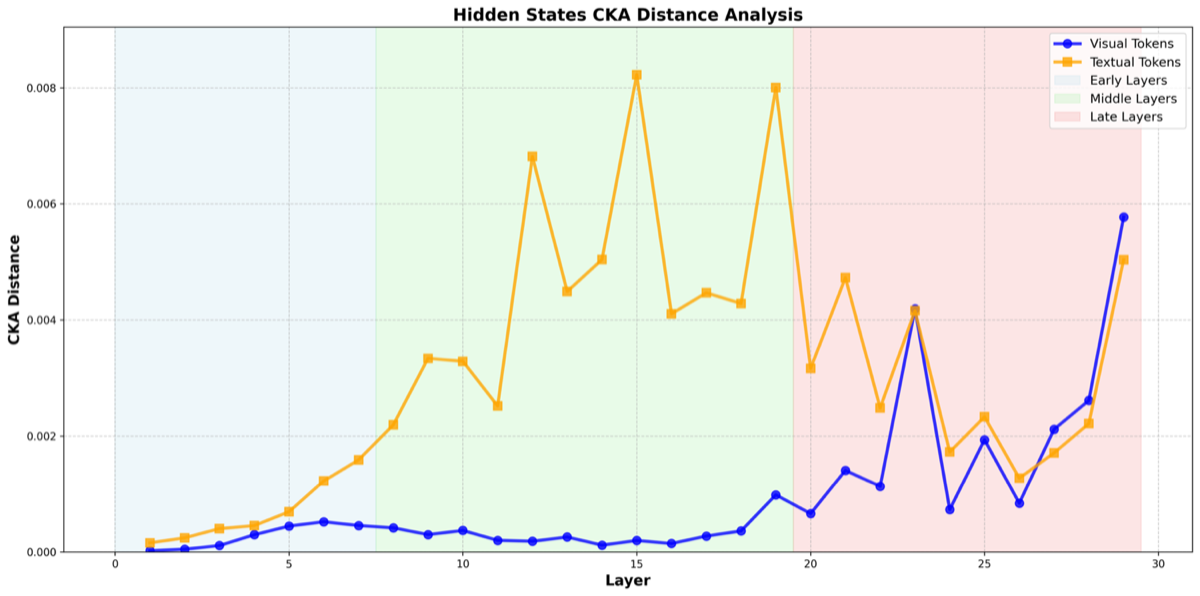}
        \caption{LLaVA 1.5 7B on GQA}
    \end{subfigure}
    \hfill
    \begin{subfigure}[b]{0.49\textwidth}
        \centering
        \includegraphics[width=\textwidth]{./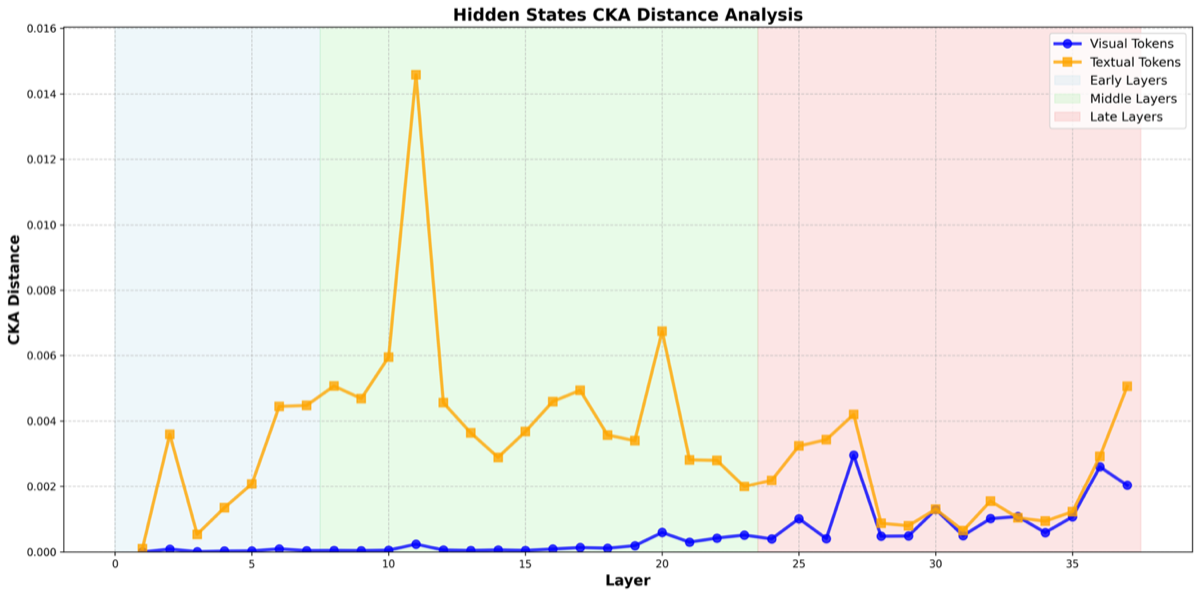}
        \caption{LLaVA 1.5 13B on GQA}
    \end{subfigure}
    
    \vspace{1em} 

    \begin{subfigure}[b]{0.49\textwidth}
        \centering
        \includegraphics[width=\textwidth]{./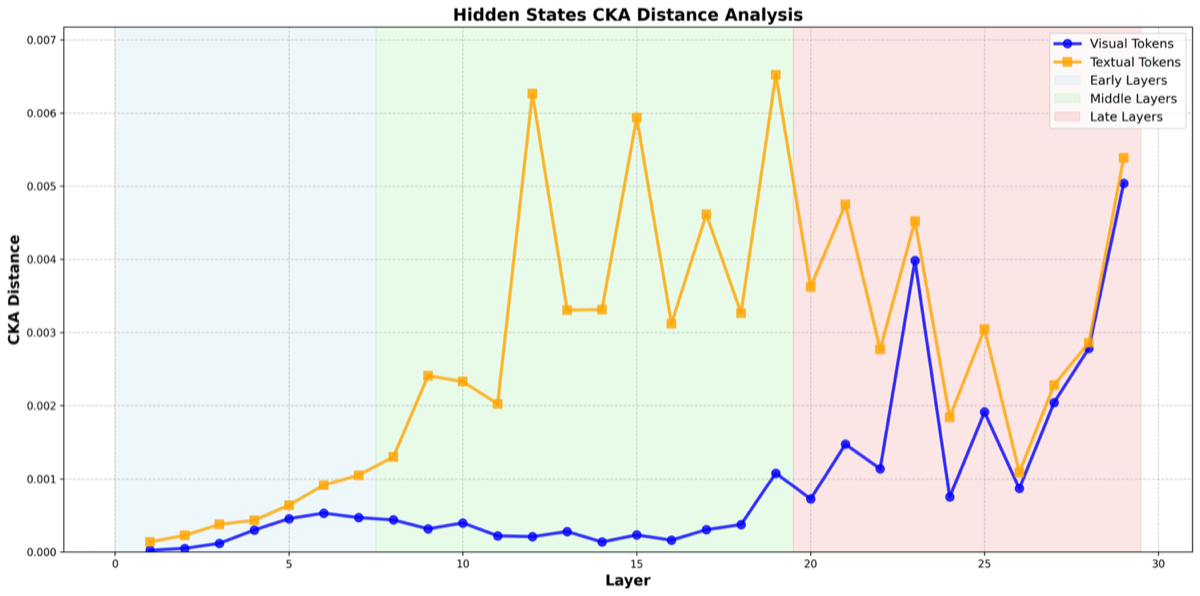}
        \caption{LLaVA 1.5 13B on Visual7W}
    \end{subfigure}
    \hfill
    \begin{subfigure}[b]{0.49\textwidth}
        \centering
        \includegraphics[width=\textwidth]{./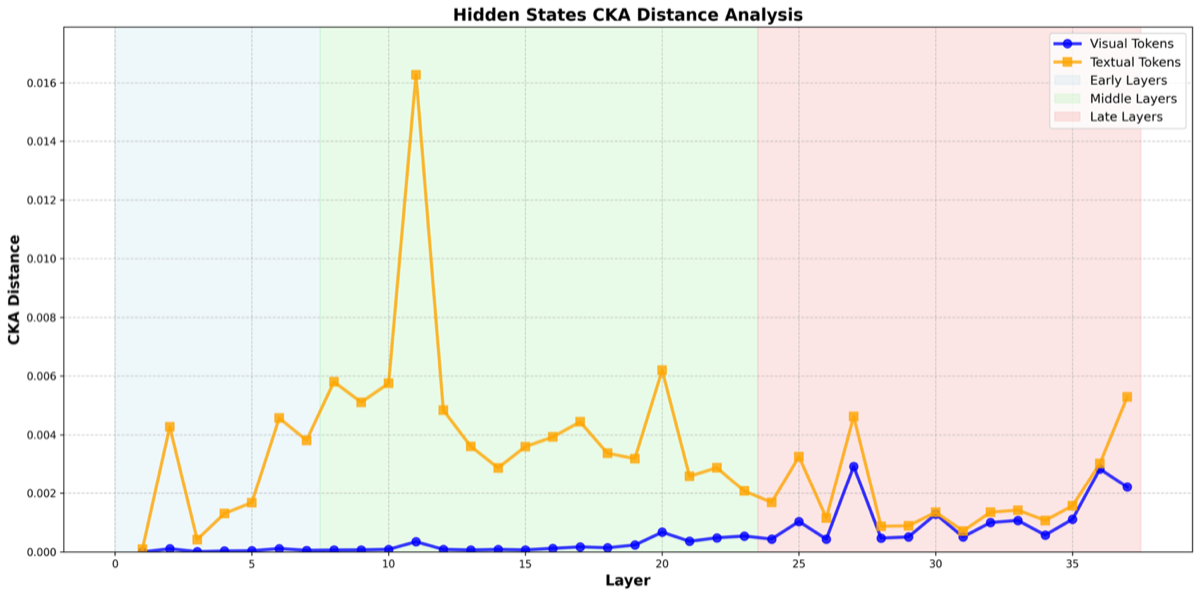}
        \caption{LLaVA 1.5 13B on Visual7W}
    \end{subfigure}

    \vspace{1em}
    \begin{subfigure}[b]{0.49\textwidth}
        \centering
        \includegraphics[width=\textwidth]{./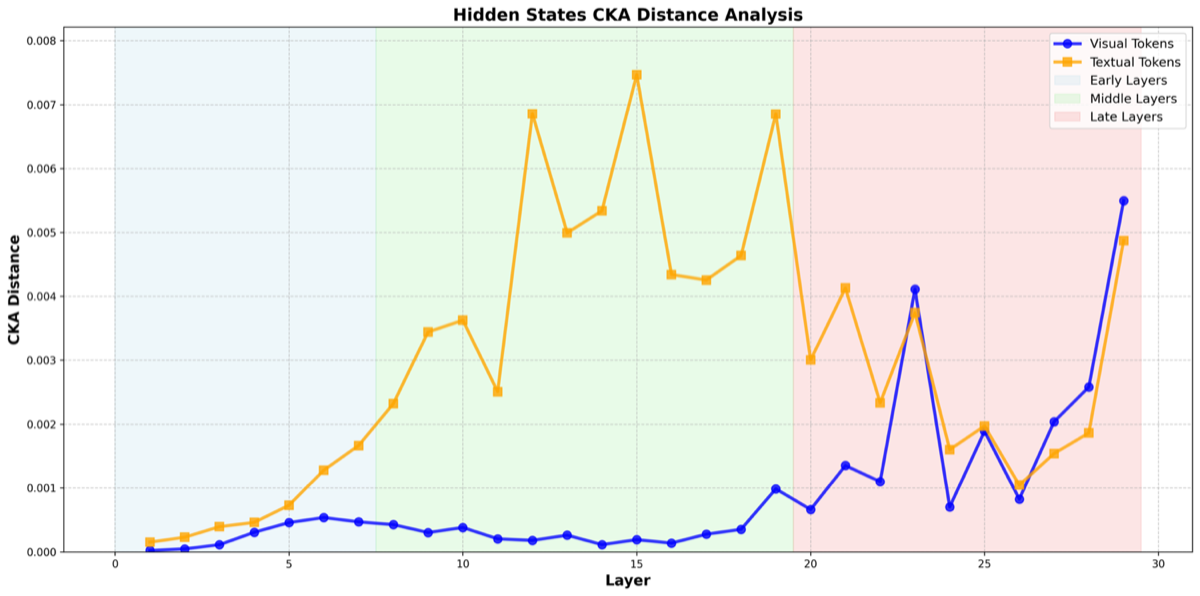}
        \caption{LLaVA 1.5 13B on VQA}
    \end{subfigure}
    \hfill
    \begin{subfigure}[b]{0.49\textwidth}
        \centering
        \includegraphics[width=\textwidth]{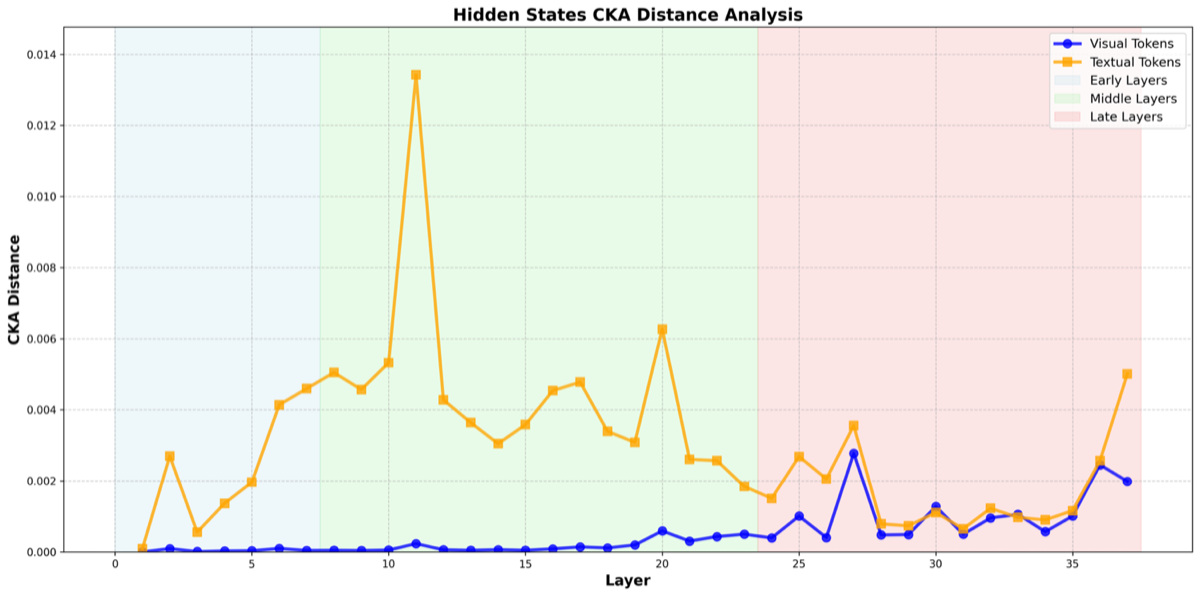}
        \caption{LLaVA 1.5 13B on VQA}
    \end{subfigure}

    \caption{CKA distance \citep{KornblithNLH2019} on LLaVA 1.5 architectures. The early vision tokens seem to have low adjacent token CKA distances.}
    \label{fig:cka_llava}
\end{figure}

\section{Practical Implications of Section \ref{sec:val_conditions}}
\label{sec:generalizability}
Despite the success of our redundancy framework to indicate when early exit and late entry are viable, using one's entire dataset to recreate Section \ref{sec:val_conditions} is impractical. Therefore in this section, we explore how using a subset of a dataset can theoretically and experimentally generalize the expected layer skipping insights.

\subsection{Experimental Results}
In this experiment, we ran the hidden state adjacent cosine distance experiment over the VQA dataset on LlaVA 1.5 and 1.6. We compare the entire VQA dataset adjacent hidden state cosine distance for textual and visual tokens to a 1k sample subset of the dataset.

\begin{figure}[H]
     \centering
     \begin{subfigure}[b]{0.8\textwidth}
         \centering
         \includegraphics[width=\textwidth]{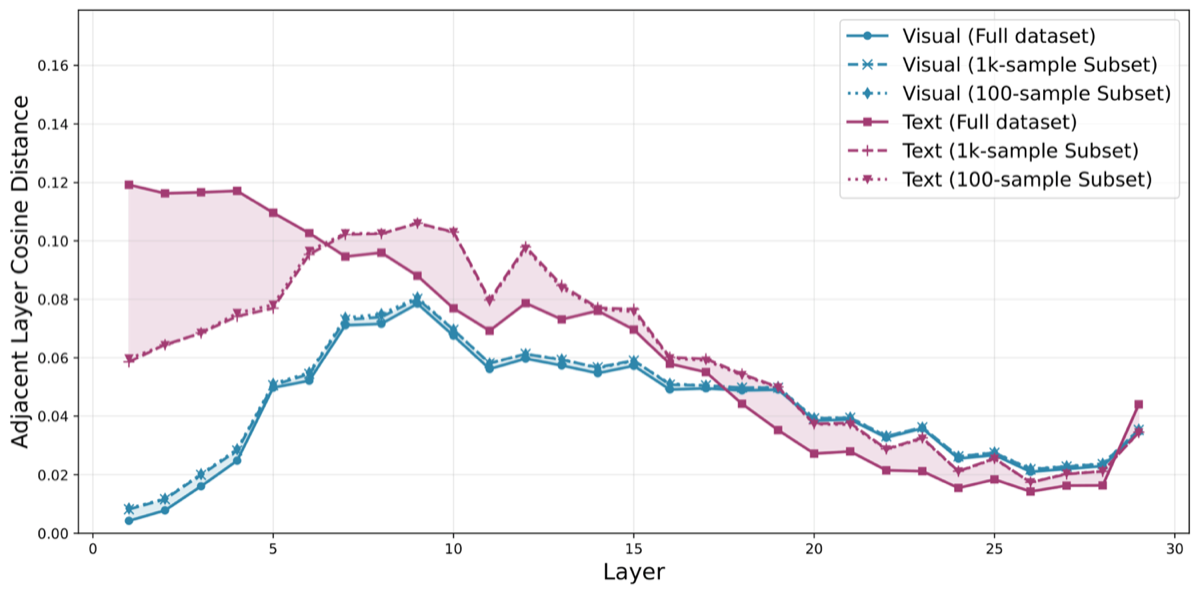}
         \caption{LLaVA 1.5 7B}
         \label{fig:vqa_generalization_llava_1_5_7b}
     \end{subfigure}
     \hfill
     \begin{subfigure}[b]{0.8\textwidth}
         \centering
         \includegraphics[width=\textwidth]{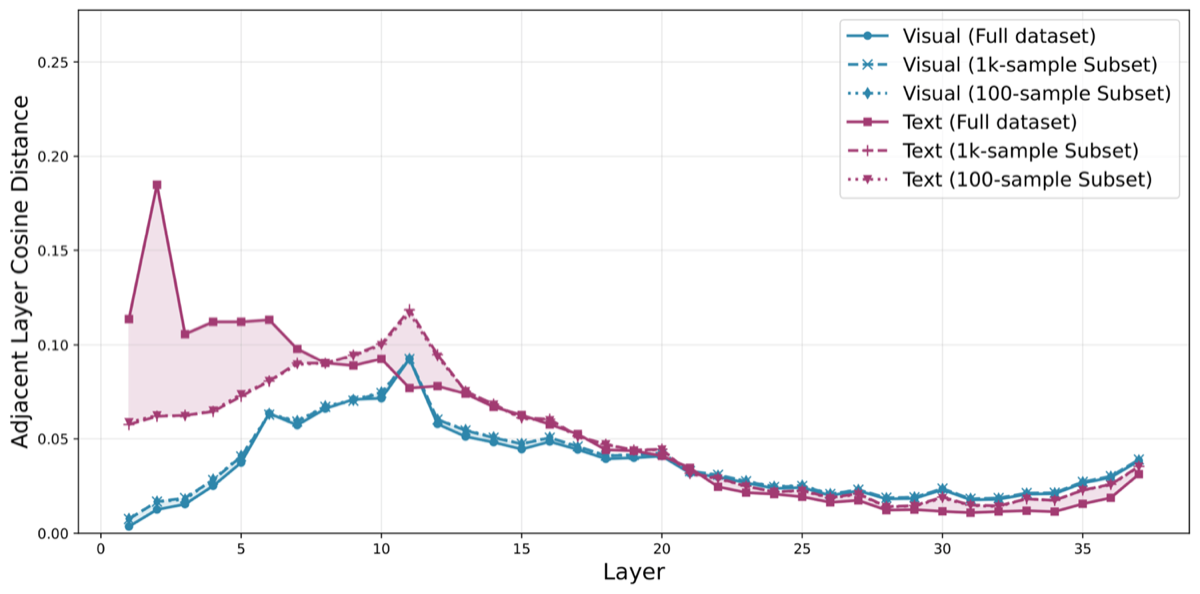}
         \caption{LLaVA 1.5 13B}
         \label{fig:vqa_generalization_llava_1_5_13b}
     \end{subfigure}
     \hfill
     \begin{subfigure}[b]{0.8\textwidth}
         \centering
         \includegraphics[width=\textwidth]{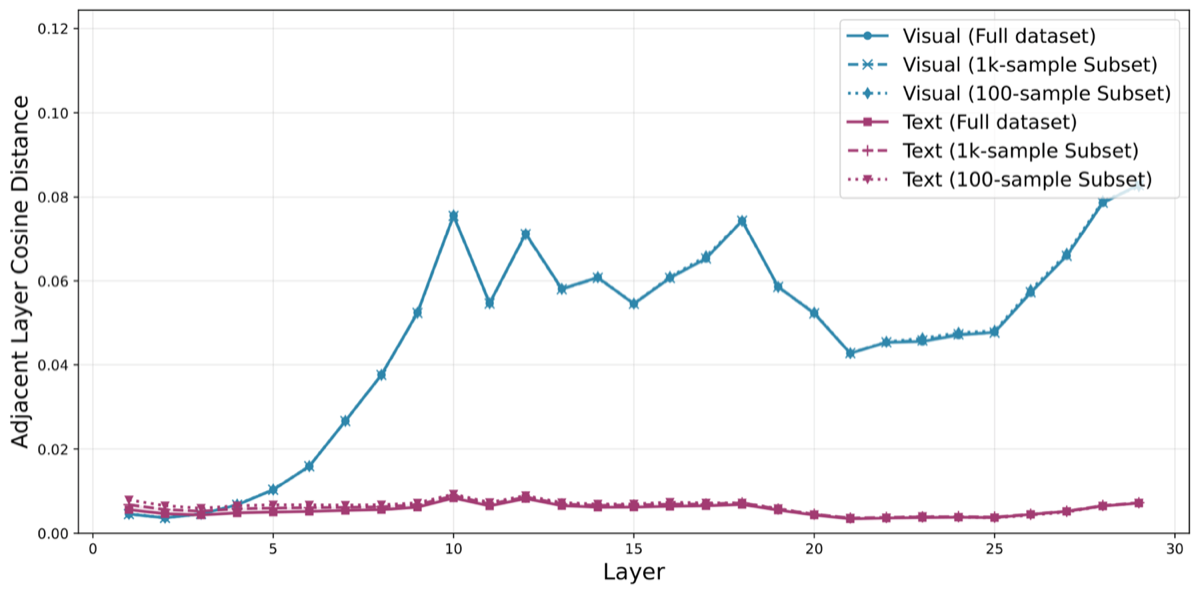}
         \caption{LLaVA NeXT 7B Mistral}
         \label{fig:image3}
     \end{subfigure}
     
     \caption{Generalizability of geometric redundancy versus layer for the VQA dataset. The stated subset represents 1k samples from the VQA dataset. LlaVA 1.6 generalizes much more closely than LlaVA 1.5. Note that across these examples, the visual token values are near-identical when averaged over the subset versus the entire dataset.}
     \label{fig:three_plots}
\end{figure}

\section{Further results from Section \ref{sec:valcondition}}
\label{sec:valcondition_appendix}
In this section, we include plots of the skipping experiments for LLaVA 1.5 7B/13B and LLaVA NeXT 7B Mistral on all datasets.

\begin{figure}[H]
    \centering
    \begin{subfigure}[b]{0.495\textwidth}
        \centering
        \includegraphics[width=\textwidth]{./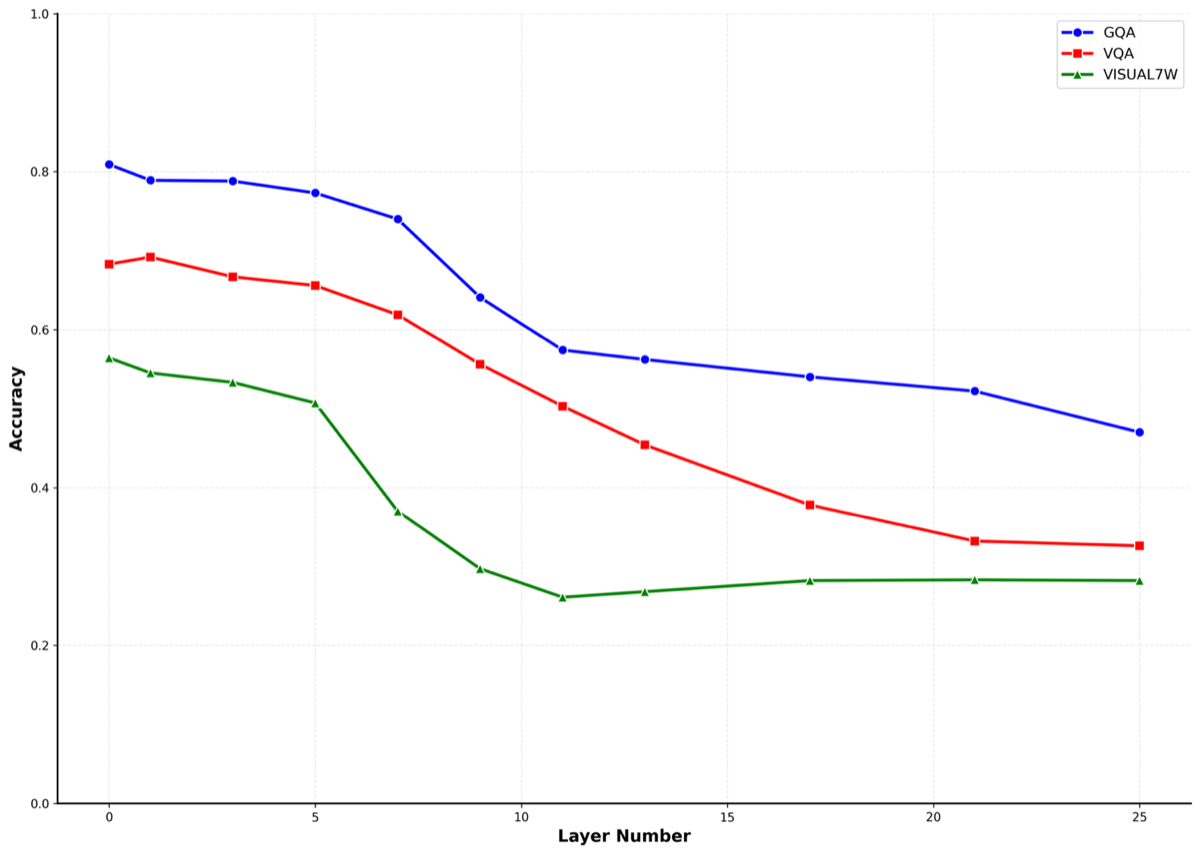}
        \caption{LLaVA 1.5 7B}
    \end{subfigure}
    \begin{subfigure}[b]{0.495\textwidth}
        \centering
        \includegraphics[width=\textwidth]{./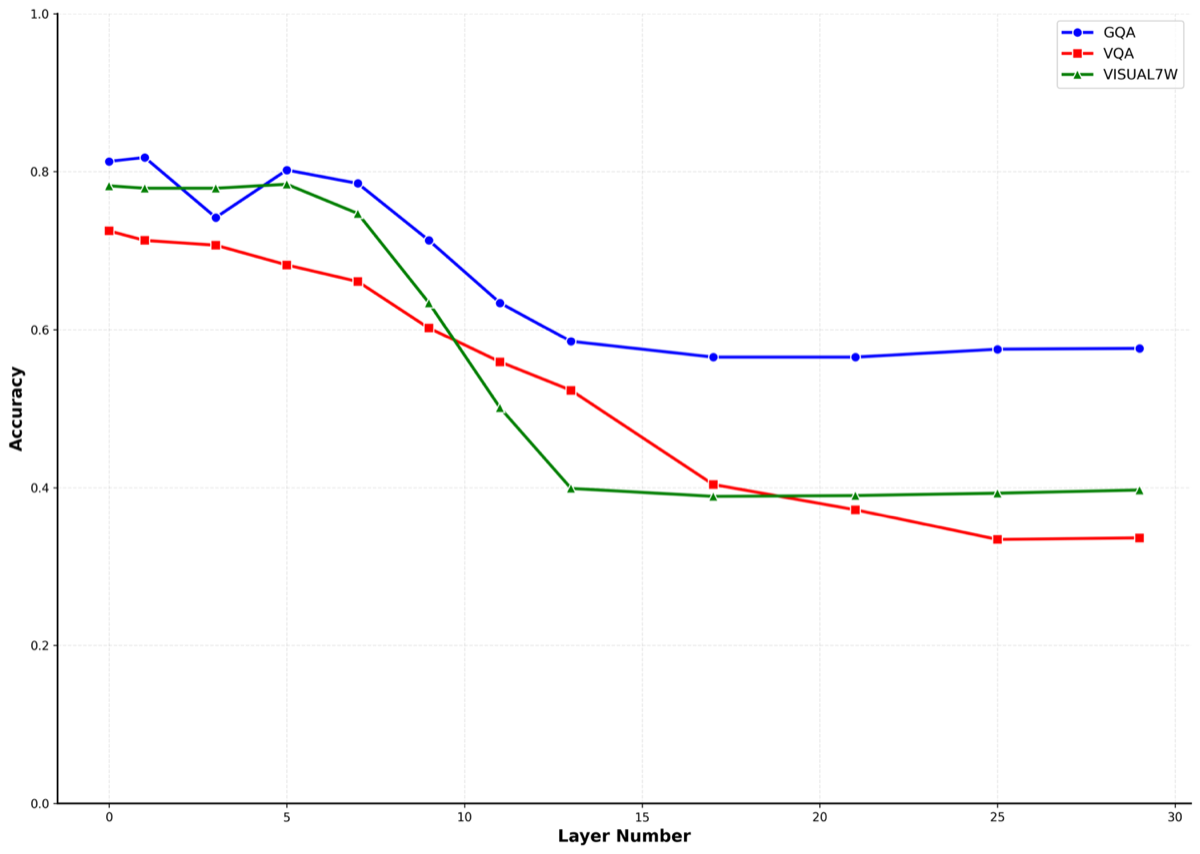}
        \caption{LLaVA 1.5 13B}
    \end{subfigure}
    \hfill
    \begin{subfigure}[b]{0.495\textwidth}
        \centering
        \includegraphics[width=\textwidth]{./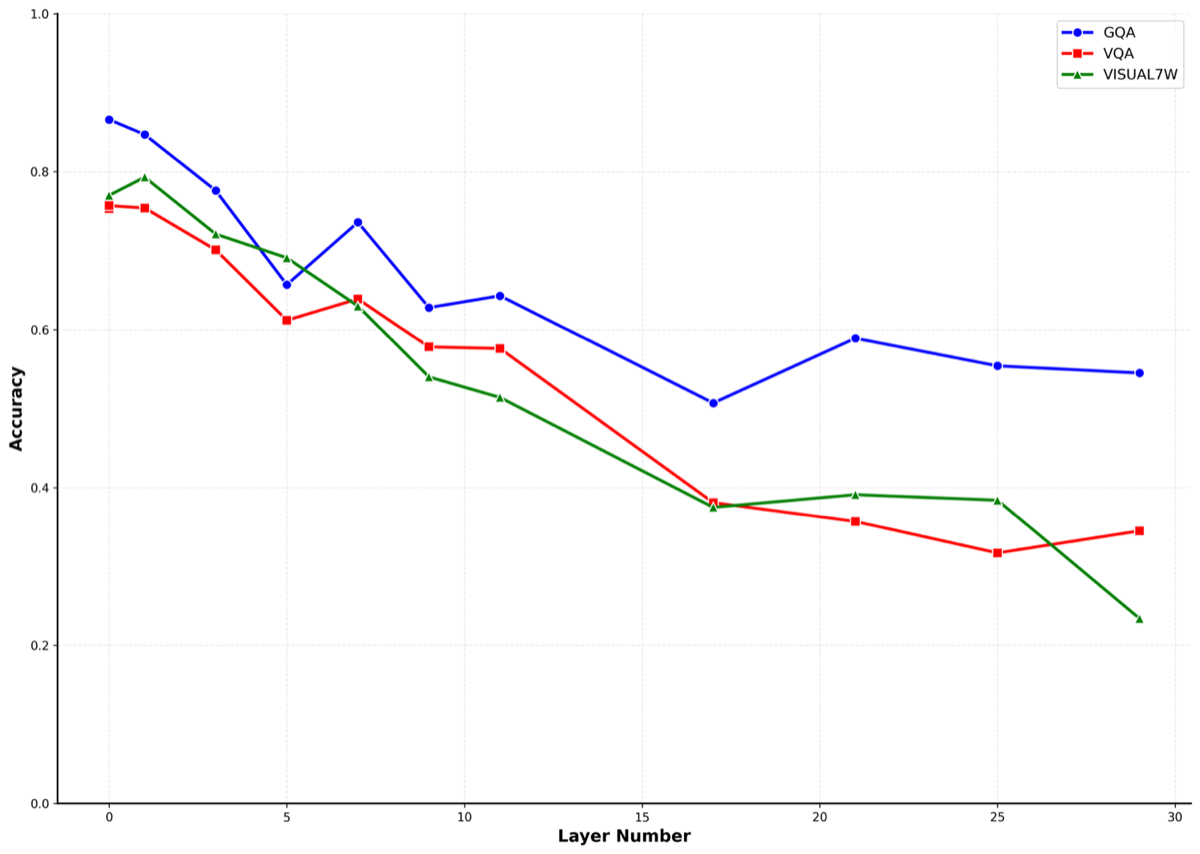}
        \caption{LLaVA NeXT 7B Mistral}
    \end{subfigure}
    
    \caption{Skipping model accuracy versus layer. Run across all of the General VQA tasks (see Table \ref{table:datasets}) on the LlaVA models. The sharpest decrease in the early-middle layers.}
    \label{fig:skip_general}
\end{figure}

\newpage
\begin{figure}[H]
    \centering
    \begin{subfigure}[b]{0.495\textwidth}
        \centering
        \includegraphics[width=\textwidth]{./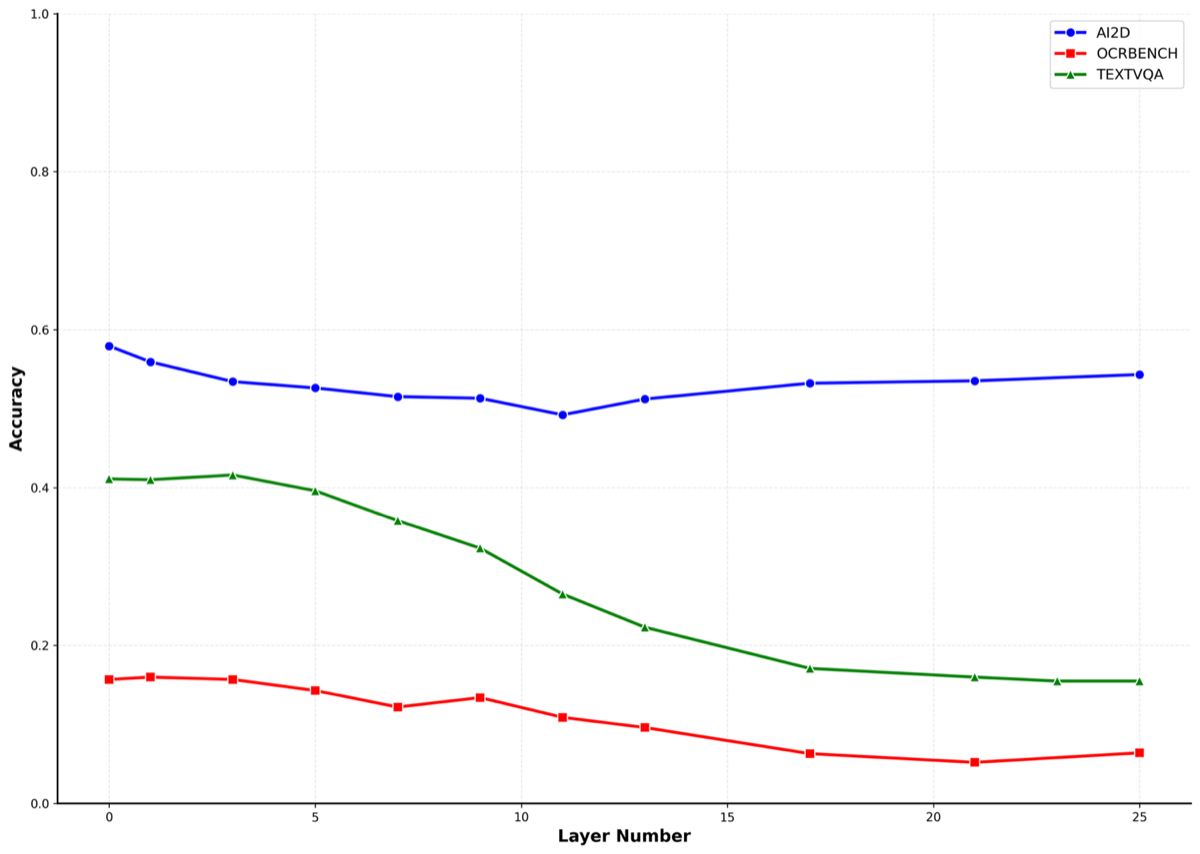}
        \caption{LLaVA 1.5 7B}
    \end{subfigure}
    \hfill
    \begin{subfigure}[b]{0.495\textwidth}
        \centering
        \includegraphics[width=\textwidth]{./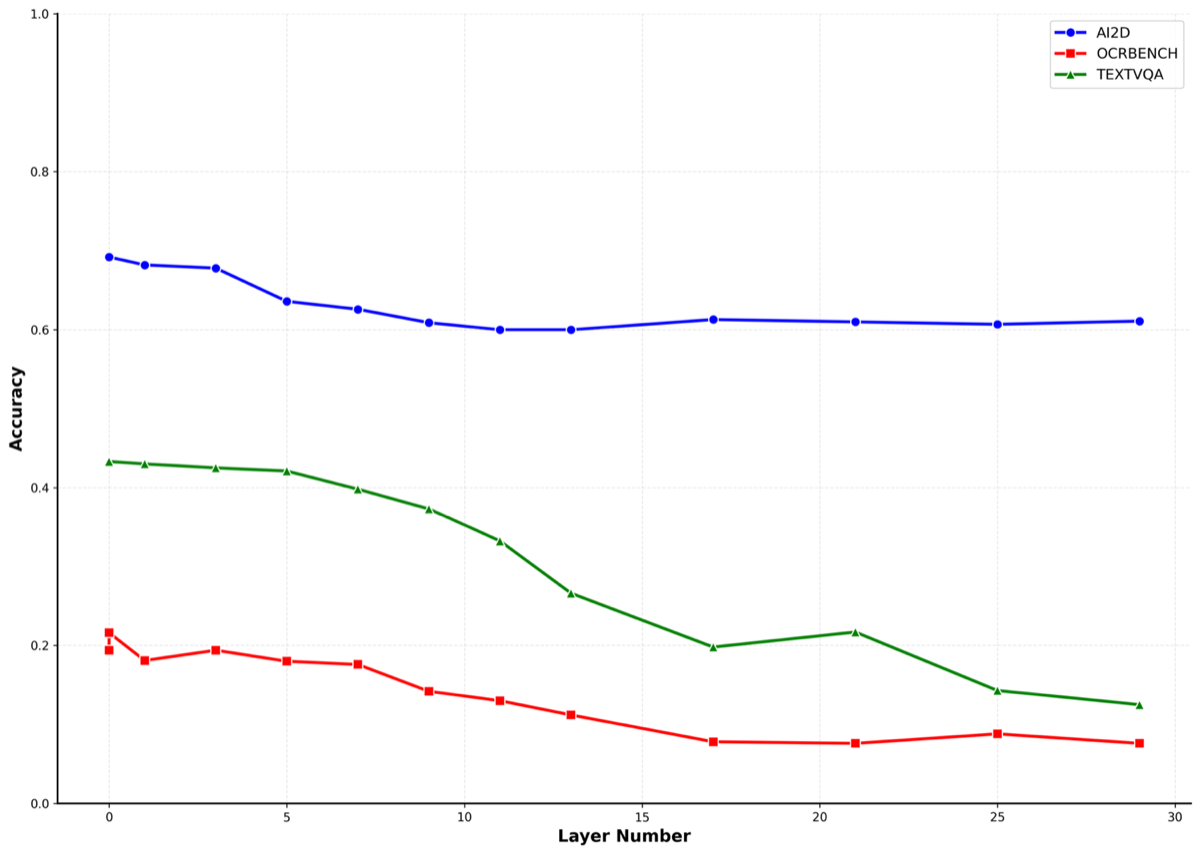}
        \caption{LLaVA 1.5 13B}
    \end{subfigure}
    
    \vspace{0.5cm}
    \begin{subfigure}[b]{0.495\textwidth}
        \centering
        \includegraphics[width=\textwidth]{./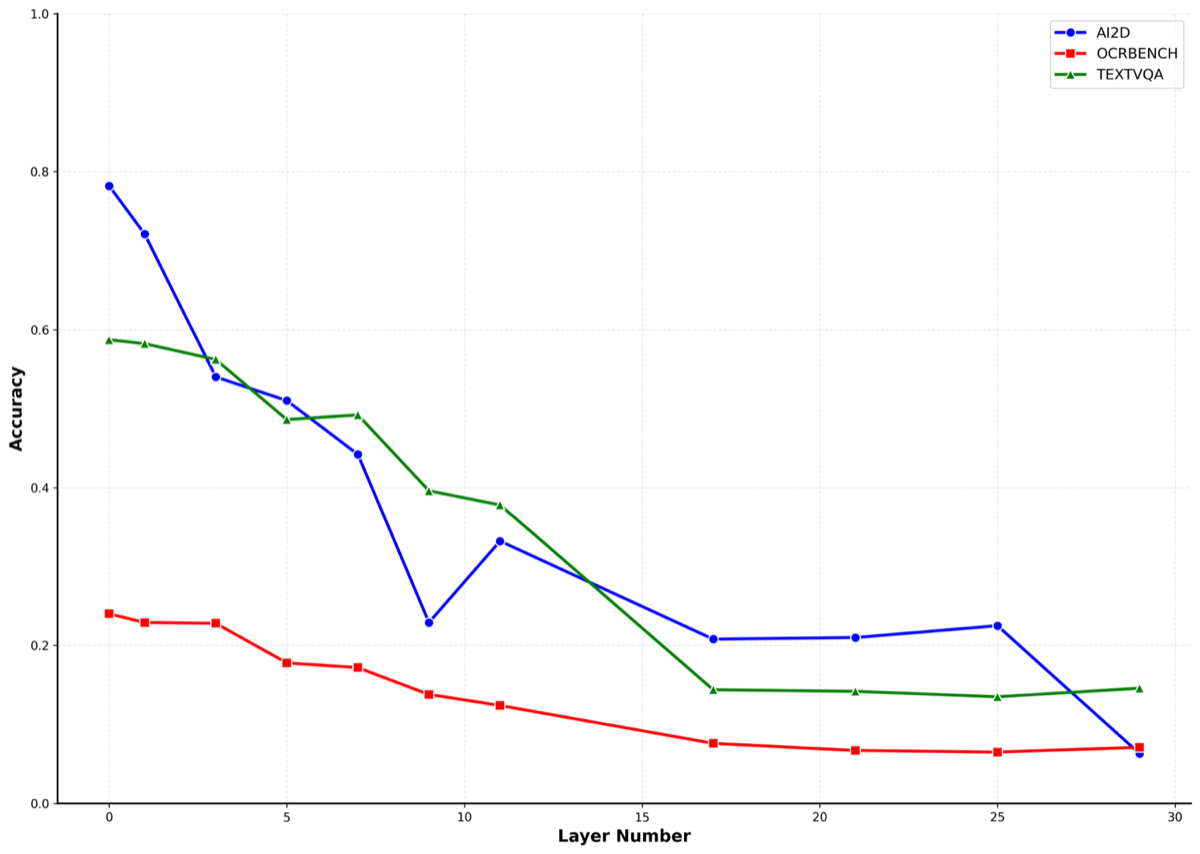}
        \caption{LLaVA NeXT 7B Mistral}
    \end{subfigure}
    
    \caption{Skipping model accuracy versus layer. Run across all of the Text/Doc VQA tasks (see Table \ref{table:datasets}) on the LLaVA models. The sharpest decrease in the early-middle layers.}
    \label{fig:skip_text_doc}
\end{figure}

\begin{figure}[H]
    \centering
    \begin{subfigure}[b]{0.495\textwidth}
        \centering
        \includegraphics[width=\textwidth]{./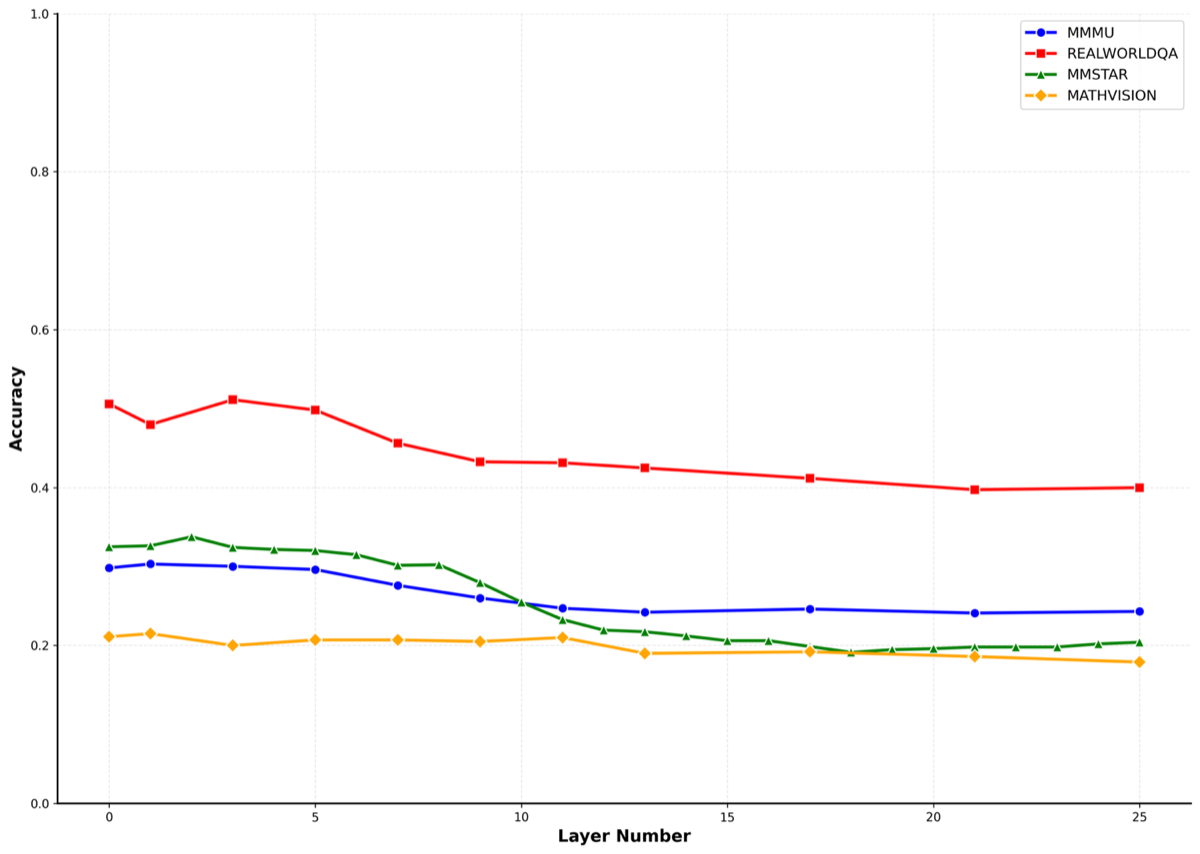}
        \caption{LLaVA 1.5 7B}
    \end{subfigure}
    \hfill
    \begin{subfigure}[b]{0.495\textwidth}
        \centering
        \includegraphics[width=\textwidth]{./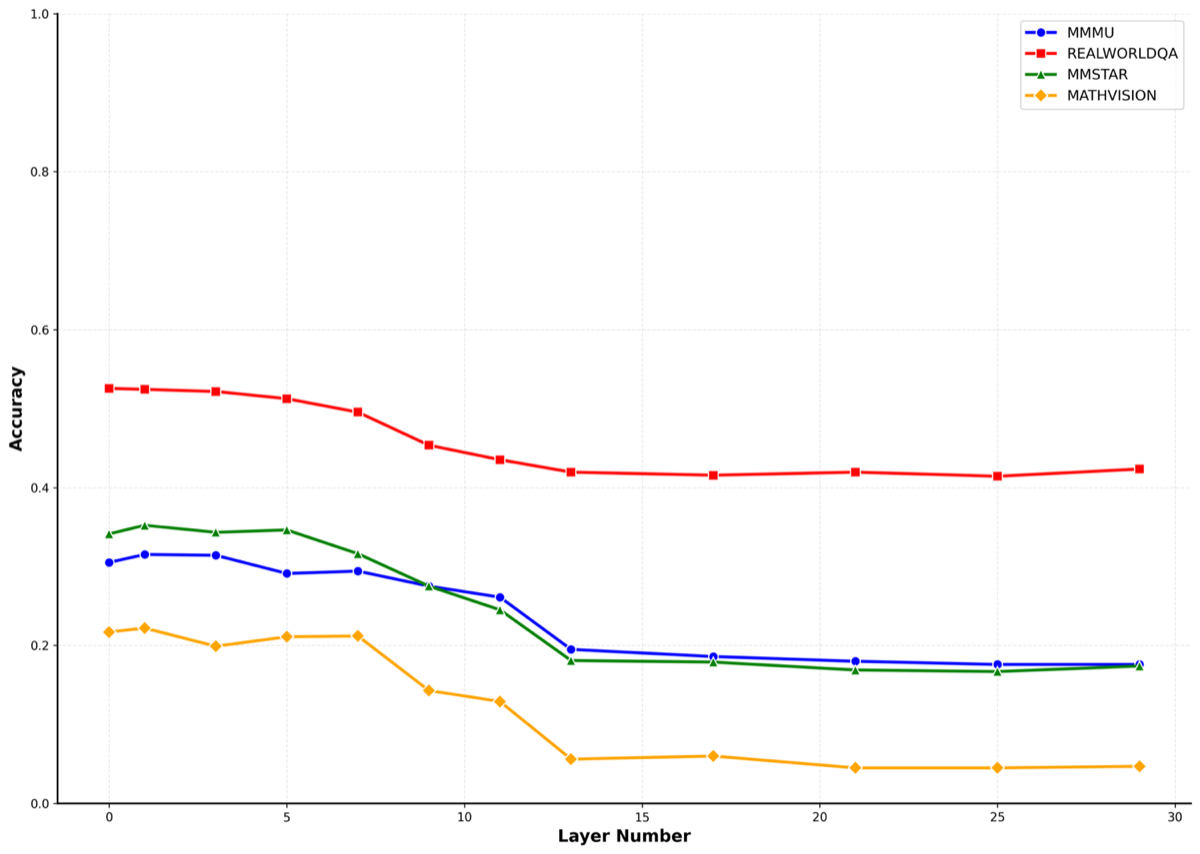}
        \caption{LLaVA 1.5 13B}
    \end{subfigure}
    
    \vspace{0.5cm}
    \begin{subfigure}[b]{0.495\textwidth}
        \centering
        \includegraphics[width=\textwidth]{./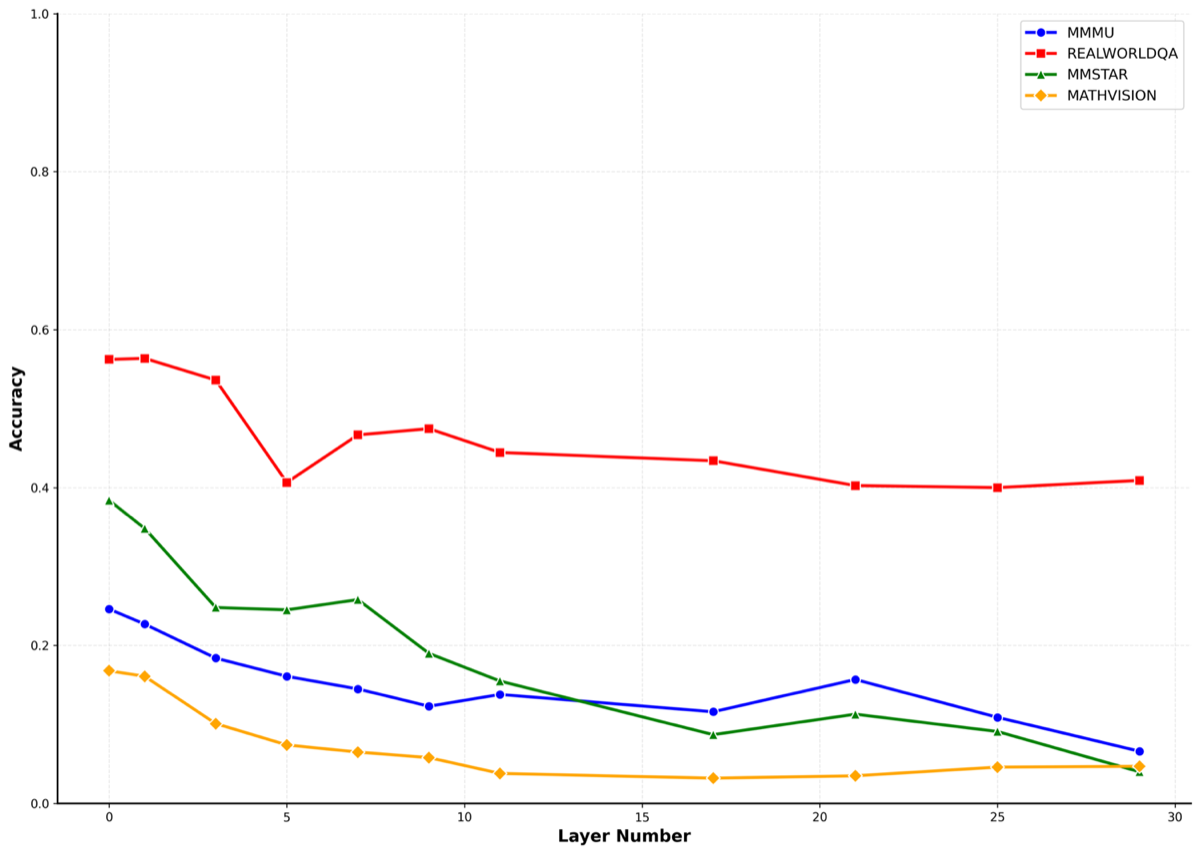}
        \caption{LLaVA NeXT 7B Mistral}
    \end{subfigure}
    
    \caption{Skipping model accuracy versus layer. Run across all of the Multi-modal reasoning tasks (see Table \ref{table:datasets}) on the LLaVA models. The sharpest decrease in the early-middle layers.}
    \label{fig:skip_multimodal}
\end{figure}

\end{document}